\documentclass[a4paper]{article}
\usepackage{hyperref}
%\pdfoutput=1
\usepackage{pdfpages}

%\begin{document}
%\includepdf[pages=1-last]{main.pdf}
%\end{document}

% to avoid loading the natbib package, add option nonatbib:
\setlength\intextsep{5pt}
\setlength\textfloatsep{10pt}
\usepackage[utf8]{inputenc} % allow utf-8 input
\usepackage[T1]{fontenc}    % use 8-bit T1 fonts
\usepackage{bm}
\usepackage{hyperref}       % hyperlinks
\usepackage{url}            % simple URL typesetting
\usepackage{booktabs}       % professional-quality tables
\usepackage{amsfonts}       % blackboard math symbols
\usepackage{nicefrac}       % compact symbols for 1/2, etc.
\usepackage{microtype}      % microtypography
\usepackage{amsthm}
\usepackage{amssymb, amsmath}
\usepackage{graphicx}
\usepackage{multicol}
\usepackage{multirow}
\usepackage{subfigure}
\usepackage{CJKutf8}
\usepackage{xcolor}  
\usepackage[numbers]{natbib}
\bibliographystyle{plain}

\newcommand{\argmax}{\mathop{\mathrm{arg~max}}\limits}
\newcommand{\argmin}{\mathop{\mathrm{arg~min}}\limits}

\newtheorem{assumption}{Assumption}

\newtheorem{thm}{Theorem}
\newtheorem{lmm}{Lemma}
\newtheorem{col}{Corollary}
\newtheorem{rmk}{Remark}

\usepackage[acronym,nomain]{glossaries}

\usepackage[acronym,nomain]{glossaries}
\setlength{\glsdescwidth}{20cm}
\usepackage{authblk}
\newglossary[slg]{symbolslist}{syi}{syg}{Symbolslist}
\makeglossaries
%\makenoidxglossaries

\makeatletter
\newcommand{\figcaption}[1]{\def\@captype{figure}\caption{#1}}
\newcommand{\tblcaption}[1]{\def\@captype{table}\caption{#1}}
\makeatother

\title{Excess risk analysis for epistemic uncertainty with application to variational inference}

% The \author macro works with any number of authors. There are two commands
% used to separate the names and addresses of multiple authors: \And and \AND.
%
% Using \And between authors leaves it to LaTeX to determine where to break the
% lines. Using \AND forces a line break at that point. So, if LaTeX puts 3 of 4
% authors names on the first line, and the last on the second line, try using
% \AND instead of \And before the third author name.

\author[1]{\href{mailto:Futoshi Futami <futoshi.futami.uk@hco.ntt.co.jp>?Subject=Archiv paper}{Futoshi Futami\thanks{futami.futoshi.es@osaka-u.ac.jp}}{}} % Lead author
\author[2]{Tomoharu Iwata}
\author[2]{Naonori Ueda}
\author[3]{Issei Sato}
\author[3]{Masashi Sugiyama}
% Add affiliations after the authors
\affil[1]{%
    Osaka university\\
    Osaka, Japan
}
\affil[2]{%
    Communication Science Laboratories\\
    NTT\\
    Kyoto, Japan
}
\affil[3]{%
    The University of Tokyo\\
    Tokyo, Japan
}

\begin{document}

\maketitle

\begin{abstract}
Bayesian deep learning plays an important role especially for its ability evaluating epistemic uncertainty (EU). Due to computational complexity issues, approximation methods such as variational inference (VI) have been used in practice to obtain posterior distributions and their generalization abilities have been analyzed extensively, for example, by PAC-Bayesian theory; however, little analysis exists on EU, although many numerical experiments have been conducted on it. 
In this study, we analyze the EU of supervised learning in approximate Bayesian inference by focusing on its excess risk. First, we theoretically show the novel relations between generalization error and the widely used EU measurements, such as the variance and mutual information of predictive distribution, and derive their convergence behaviors. Next, we clarify how the objective function of VI regularizes the EU. With this analysis, we propose a new objective function for VI that directly controls the prediction performance and the EU based on the PAC-Bayesian theory. Numerical experiments show that our algorithm significantly improves the EU evaluation over the existing VI methods.
\end{abstract}

\section{Introduction}\label{ch:intro}
As machine learning applications spread, understanding the uncertainty of predictions is becoming more important to increase our confidence in machine learning algorithms \citep{bhatt2021uncertainty}. Uncertainty refers to the variability of a prediction caused by missing information. For example, in regression problems, it corresponds to the error bars in predictions; and in classification problems, it is often expressed as the class posterior probability,  entropy, and mutual information  \citep{hullermeier2021aleatoric,gawlikowski2022survey}. There are two types of uncertainty \citep{bhatt2021uncertainty}: 1) Aleatoric uncertainty (AU), which is caused by noise in the data itself, and 2) Epistemic uncertainty (EU), which is caused by a lack of training data.  In particular, since EU can tell us where in the input space is yet to be learned, integrated with deep learning methods, it is used in such applications as dataset shift \citep{NEURIPS2019_8558cb40}, adversarial data detection \citep{NEURIPS2018_586f9b40}, active learning \citep{houlsby2011bayesian}, Bayesian optimization \citep{hernandez2014predictive}, and reinforcement learning \citep{janz2019successor}.

 Mathematically, AU is defined as Bayes risk, which expresses the fundamental difficulty of learning problems \citep{depeweg2018decomposition,jain2021deup,xu2020continuity}. For EU, Bayesian inference is useful because posterior distribution updated from prior distribution can represent a lack of data \citep{hullermeier2021aleatoric}. In practice, measurements like the variance of the posterior predictive distribution, and associated conditional mutual information represented EU in practice \citep{kendall2017uncertainties,depeweg2018decomposition}.
 
 In Bayesian inference, since posterior distribution is characterized by the training data and the model using Bayes' formula, its prediction performance and EU are determined automatically. However, due to computational issues, such exact Bayesian inference is difficult to implement; we often use approximation methods, such as variational inference (VI) \citep{PRML}, especially for deep Bayesian models. Since the derived posterior distribution also depends on the properties of approximation methods, the prediction performance and EU of deep Bayesian learning are no longer automatically guaranteed through Bayes' formula. The prediction performance has been analyzed as generalization error, for example, by PAC-Bayesian theory \citep{alquier2021user}. Since EU is also essential in practice, we must obtain a theoretical guarantee of the algorithm- and the sample- dependent non-asymptotic theory for EU, similarly to generalization error analysis.

 Unfortunately, study has been limited in that direction. Traditional EU analysis has focused on the properties of the exact Bayesian posterior and predictive distributions \citep{fiedler2021practical,lederer2019posterior} as well as large sample behaviors \citep{54897}. Since Bayesian deep learning uses approximate posterior distributions, we cannot apply such traditional EU analysis based on Bayes' formula to Bayesian deep learning. The asymptotic theory of a sufficiently large sample may overlook an important property of the EU that is due to the lack of training data.

Recently, analysis of EU focusing on loss functions was proposed for supervised learning \citep{xu2020minimum,jain2021deup}. EU was defined as the {\bf excess risk} obtained by subtracting the Bayes risk corresponding to the AU from the total risk. Thus, excess risk implies the loss due to insufficient data when the model is well specified. Although this approach successfully defines EU with loss functions, the following limitation still remains. \cite{xu2020minimum} assume that the data generating mechanism is already known and that we can precisely evaluate Bayesian posterior and predictive distribution. A correct model is not necessarily a realistic assumption, and the assumption about an exact Bayesian posterior hampers understanding EU in approximation methods.

%When analyzing the excess risk under the same setting as the PAC-Bayesian theory to understand the EU, 

%To extend this EU analysis to the approximate methods, analyzing the excess risk under the same setting as the PAC-Bayesian theory seems natural. 
To address these limitations, it appears reasonable to analyze excess risk under a similar setting as PAC-Bayesian theory and apply it to EU of approximate Bayesian inference.
However, as shown in Sec.~\ref{sec:Background}, analyzing excess risk in such a way leads to impractical theoretical results, and the relations between excess risk and the widely used EU measurements remain unclear. This greatly complicates EU analysis. Because of this difficulty, to the best of our knowledge, no research exists on excess risk for EU for approximation methods.

In this paper, we propose a new theoretical analysis for EU that addresses the above limitations of these existing settings. 
 Our contributions are the followings:
\begin{itemize}
    \item We show non-asymptotic analysis for widely used EU measurements (Theorems~\ref{thm_square} and \ref{thm_freq_excess_risk}). %Our analysis is based on an approximation that the test data comes from a predictive distribution similar to the Bayesian learning (Eq.\eqref{generating2}). Under this approximation, 
    We propose computing the Bayesian excess risk (BER) (Eq.\eqref{excss_bayes}) and show that this excess risk equals to widely used EU measurements.  Then we theoretically show the convergence behavior of BER using PAC-Bayesian theory (Eqs.~\eqref{eq_bound_square2}) and ~\eqref{general2}). 
    \item Based on theoretical analysis, we give a new interpretation of the existing VI that clarifies how the EU is regularized (Eqs.\eqref{gp} and \eqref{eq_mutual}). Then we propose a novel algorithm that directly controls the prediction and the EU estimation performance simultaneously based on PAC-Bayesian theory (Eq.\eqref{proposal}). Numerical experiments suggest that our algorithm significantly improves EU evaluation over the existing VI.
\end{itemize}

\section{Background of PAC-Bayesian theory and epistemic uncertainty}\label{sec:Background}
Here we introduce preliminaries. Such capital letters as $X$ represent random variables, and such lowercase letters as $x$ represent deterministic values. All the notations are summarized in Appendix~\ref{summation}.
\subsection{PAC-Bayesian theory}\label{sec_freqe}
 We consider a supervised setting and denote input-output pairs by $Z=(X,Y)\in\mathcal{Z}:=\mathcal{X}\times\mathcal{Y}$. We assume that all the data are i.i.d.~from some unknown data-generating distribution $\nu(Z)=\nu(Y|X)\nu(X)$.
 Learners can access $N$ training data, $\bold{Z}^N:=(Z_1,\dots,Z_N)$ with $Z_n:=(X_n,Y_n)$, which are generated by $\bold{Z}^N\sim \nu(Z)^{N}$. We express $\nu(Z)^{N}$ as $\nu(\bold{Z}^N)$. We express conditional distribution as $\nu(Y|X=x)$ as $\nu(Y|x)$ for simplicity. We introduce loss function $l:\mathcal{Y}\times \mathcal{A}\to\mathbb{R}$ where $\mathcal{A}$ is an action space. We express loss of action $a\in\mathcal{A}$ and target variable $y$ is written as $l(y,a)$. We introduce a model $f_\theta:\mathcal{X}\to\mathcal{A}$, parameterized by  $\theta\in\Theta\subset\mathbb{R}^d$. When we put a prior $p(\theta)$ over $\theta$, the PAC-Bayesian theory \citep{alquier2021user,germain2016pac} guarantees the prediction performance by focusing on the average of the loss with respect to posterior distribution $q(\theta|\bold{Z}^N)\in\mathcal{Q}$. $\mathcal{Q}$ is a family of distributions and $q(\theta|\bold{Z}^N)$ is not restricted to Bayesian posterior distribution. In this work we consider the log loss and the squared loss. For the log loss, we consider model $p(y|x,\theta)$, and the loss is given as $l(y,p(y|x,\theta))=-\ln p(y|x,\theta)$, where $\mathcal{A}$ is probability distributions. 
 %We can define the log loss for discrete random variables similarly. Due to space limitations, we discuss continuous random variables in the main paper. See appendix~\ref{} for discrete random variables. 
 For the squared loss, we use model $f_\theta(x)$ and $l(y, f_\theta(x))=|y-f_\theta(x)|^2$, where $\mathcal{Y}=\mathcal{A}=\mathbb{R}$.

PAC-Bayesian theory provides a guarantee for the generalization of test error $\displaystyle R^l(Y|X,\bold{Z}^N):=\mathbb{E}_{\nu(\bold{Z}^N)}\mathbb{E}_{q(\theta|\bold{Z}^N)}\mathbb{E}_{\nu(Z)}l(Y,f_\theta(X))$ and training error $\mathbb{E}_{\nu(\bold{Z}^N)}r^l(\bold{Z}^N):=\mathbb{E}_{\nu(\bold{Z}^N)}\mathbb{E}_{q(\theta|\bold{Z}^N)}\frac{1}{N}\sum_{n=1}^N l(Y_n,f_\theta(X_n))$. A typical PAC-Bayesian error bound takes form $R^l(Y|X,\bold{Z}^N)\leq  \mathbb{E}_{\nu(\bold{Z}^N)}r^l(\bold{Z}^N)+\mathrm{Gen}^l(\bold{Z}^N)$. $\mathrm{Gen}^l(\bold{Z}^N)$ is called {\bf generalization error}. Under suitable assumptions \citep{alquier2021user}, $\mathrm{Gen}^l(\bold{Z}^N)$ is upper-bounded by $\mathcal{O}(1/N^\alpha)$ for $\alpha\in(1/2,1]$.  In many cases, it depends on the complexity of the posterior distribution, such as Kullback-Leibler (KL) divergence $\mathrm{KL}(q(\theta|\bold{Z}^N)|p(\theta))$. When $\mathrm{Gen}^l(\bold{Z}^N)=\mathrm{KL}(q(\theta|\bold{Z}^N)|p(\theta))/\lambda+\mathrm{c}$, where $\lambda$ and $c$ are positive constants, given training data $\bold{Z}^N=\bold{z}^N$, we get a posterior distribution for the prediction by 
\begin{align}
\scalebox{0.95}{$\displaystyle\hat{q}(\theta|\bold{z}^N)=\argmin_{q(\theta|\bold{z}^N)\in\mathcal{Q}}r(\bold{z}^N)+\frac{\mathrm{KL}(q(\theta|\bold{z}^N)|p(\theta))}{\lambda}$}.
\end{align}
When the log loss and $\lambda=N$ is used, this minimization is closely related to variational inference (VI) in Bayesian inference. See \citep{germain2016pac} for details.

Under additional moderate assumptions, using $\hat{q}(\theta|\bold{z}^N)$ for the test error, we can derive the following {\bf excess risk (ER)} bound from the PAC-Bayesian generalization bound \citep{alquier2021user}:
\begin{align}\label{base_PAC}
\scalebox{0.95}{$\displaystyle\mathrm{ER}^l(Y|X,\bold{Z}^N,\theta^*):=R^l(Y|X,\bold{Z}^N)-R^l(Y|X,\theta^*)\leq C_1\frac{\ln N}{N^\alpha}$},
\end{align}
where $R^l(Y|X,\theta^*)=\mathbb{E}_{\nu(Z)}l(Y,f_{\theta^*}(X))$ and $\theta^*=\mathrm{argmin}_\theta \mathbb{E}_{\nu(Z)} l(Y,f_\theta(X))$.
Constant $C_1$ depends only on the problem. Since we aim to analyze EU, we do not further discuss the details of the PAC-Bayesian bound. See  Appendix~\ref{app_prelimi_pac} for the explicit conditions of this bound.
 
Although PAC-Bayesian theory focuses on the average test error over posterior distribution, we use predictive distribution for predictions in  Bayesian inference. Thus we define {\bf Prediction Risk (PR)}:
 \begin{align}\label{eq_pr}
     \mathrm{PR}^l(Y|X,\bold{Z}^N):=\mathbb{E}_{\nu(\bold{Z}^N)} \mathbb{E}_{\nu(Z)} l(Y, \mathbb{E}_{q(\theta|\bold{Z}^N)}f_\theta(X)).
 \end{align}
 When the loss is log loss,  $\mathrm{PR}^{\log}(Y|X,\bold{Z}^N)=-\mathbb{E}_{\nu(\bold{Z}^N)} \mathbb{E}_{\nu(Z)}\log p^q(Y|X,\bold{Z}^N)$ where $p^q(y|x,\bold{z}^N):=\mathbb{E}_{q(\theta|\bold{z}^N)}p(y|x,\theta)$ is the approximate predictive distribution. Thus, $\mathrm{PR}^{\log}(Y|X,\bold{Z}^N)$ corresponds to the log loss of the predictive distribution, which is commonly used in the analysis of Bayesian inference \citep{watanabe2009algebraic,watanabe2018mathematical}.

\subsection{Epistemic uncertainty measurements}\label{sec_prac}
Here, we introduce widely used EU measurements in approximate Bayesian inference. For the log loss, conditioned on $(X,\bold{Z}^N)=(x,\bold{z}^N)$, the approximate mutual information has been widely used for uncertainty estimation \citep{depeweg2018decomposition}:
\begin{align}\label{freq_mutual_information}
    I_{\nu}(\theta;Y|x,\bold{z}^N)
%=\mathbb{E}_{q(\theta|\bold{Z}^N)}\mathbb{E}_{P(Y|x,\theta)}\left[-\ln\mathbb{E}_{q(\theta|\bold{Z}^N)}p(y|x,\theta)+\ln p(y|x,\theta)\right]
=H[p^q(Y|x,\bold{z}^N)]-\mathbb{E}_{q(\theta|\bold{z}^N)}H[p(Y|x,\theta)],
\end{align}
where $H[p^q(Y|x,\bold{z}^N)]:=-\mathbb{E}_{p^q(Y|x,\bold{z}^N)}\log p^q(Y|x,\bold{z}^N)$ is the entropy of the approximate predictive distribution and $\mathbb{E}_{q(\theta|\bold{z}^N)}H[p(Y|x,\theta)]$ is the conditional entropy. $I_{\nu}(\theta;Y|x,\bold{z}^N)$ has been used in Bayesian experimental design \citep{foster2019variational} and reinforcement learning \citep{depeweg2018decomposition}. Note that  by taking the expectation, we have $I_{\nu}(\theta;Y|X,\bold{Z}^N)=\mathbb{E}_{\nu(X=x)}I_{\nu(\bold{Z}^N=\bold{z}^N)}(\theta;Y|x,\bold{z}^N)$.

In the case of squared loss, the variance of the model is often used for EU. This is a common practice in VI, Monte Carlo (MC) dropout \citep{kendall2017uncertainties}, and deep ensemble methods \citep{lakshminarayanan2017simple}. Conditioned on $(X,\bold{Z}^N)=(x,\bold{z}^N)$, it is written as
\begin{align}\label{drop_out}
    \mathrm{Var}_{\theta|\bold{z}^N}f_\theta(x)=\mathbb{E}_{q(\theta|\bold{z}^N)}(f_\theta(x)-\mathbb{E}_{q(\theta|\bold{z}^N)}f_\theta(x))^2.
\end{align}
Although Eqs.\eqref{freq_mutual_information} and \eqref{drop_out} are widely used in application, there have been limited theoretical study for them as discussed in Sec.~\ref{ch:intro}.

\subsection{Excess risk analysis and epistemic uncertainty}\label{sec:Bayes_SBackground}
Recently, the analysis of EU based on excess risk was proposed \citep{xu2020minimum}. 
The key idea of this analysis is to assume that our statistical model $p(y|x,\theta)$ is correct and address the average performance of this model by assuming a prior distribution over $\theta$ with distribution $p(\theta)$. Specifically, the joint distribution of the training data, the test data, and parameter of the model is given as  $p_B(\bold{Z}^N,Z,\theta):=p(\theta) \prod_{n=1}^Np(Y_n|X_n,\theta)\nu(X_n)p(Y|X,\theta)\nu(X)$. Under this setting, the goal of learning is to infer decision rule $\psi:\mathcal{Z}^N\times \mathcal{X}\to \mathcal{A}$ that minimizes expected loss $\mathbb{E}_{p_B(\bold{Z}^N,Z,\theta)}[l(Y,\psi(X,\bold{Z}^N))]$. They refer to this setting as Bayesian learning since we marginalize out parameter $\theta$. With this notation, \cite{xu2020minimum} defined {\bf minimum excess risk} as 
\begin{align}\label{MER}
    \mathrm{MER}^l(Y|X,\bold{Z}^N):=\inf_{\psi:\mathcal{Z}^N\times \mathcal{X}\to \mathcal{A}}&\mathbb{E}_{p_B(\bold{Z}^N,Z,\theta)}[l(Y,\psi(X,\bold{Z}^N))]\nonumber \\
    &- \inf_{\phi:\Theta\times\mathcal{X}\to\mathcal{A}}\mathbb{E}_{p_B(\bold{Z}^N,Z,\theta)}[l(Y,\phi(\theta,X))],
\end{align}
where the first term is the minimum achievable risk using the training data and the second term is the Bayes risk since it uses  learning rule $\phi:\Theta\times\mathcal{X}\to\mathcal{A}$, which takes true parameter $\theta$ instead of the training data. Thus, the second term is the aleatroic uncertainty (AU) since it expresses the task's fundamental difficulty.
Then $\mathrm{MER}$ can be regarded as the EU since it is the difference between the total risk and the AU \citep{xu2020minimum,hafez2021rate}. 

For the log loss, the first term is $H[p(Y|X,\bold{Z}^N)]$ and the second term is $\mathbb{E}_{p(\theta)} H[p(Y|X,\theta)]$. Thus, $\mathrm{MER}^{\log}(Y|X,\bold{Z}^N)=I(\theta;Y|X,\bold{Z}^N)$, which is the conditional mutual information.
Other than the log loss, if the loss function satisfies the $\sigma^2$ sub-Gaussian property conditioned on $(X,\bold{Z}^N)=(x,\bold{z}^N)$, $\mathrm{MER}^l(Y|X,\bold{Z}^N)\leq\sqrt{2\sigma^2 I(\theta;Y|X,\bold{Z}^N)}$ holds \citep{xu2020minimum}. In many practical settings $I(\theta;Y|X,\bold{Z}^N)$ is upper-bounded by $\mathcal{O}(\ln N/N)$. Thus, EU converges with $\mathcal{O}(\ln N/N)$ under this settings. See Appendix~\ref{app_compare} for more details. 

Although this analysis successfully defined EU with rigorous theoretical analysis, the assumptions are clearly impractical since we assume that the correct model, exact Bayesian posterior, and predictive distributions are available. To extend this analysis into approximate Bayesian inference, it is tempting to combine the theory of MER with PAC-Bayesian theory where the data are generated i.i.d from $\nu(Z)$. For that extension, here we introduce the {\bf Prediction Excess Risk (PER)} using Eq.\eqref{eq_pr}:
\begin{align}\label{eq_erfreq}
&\mathrm{PER}^l(Y|X,\bold{Z}^N):=\mathrm{PR}^l(Y|X,\bold{Z}^N)-\inf_{\tilde{\phi}:\mathcal{X}\to\mathcal{A}}\mathbb{E}_{\nu(Z)}[l(Y,\tilde{\phi}(X))],
\end{align}
where the second term corresponds to the Bayes risk. Although we introduced this definition inspired by Eq.\eqref{MER}, it is impractical for evaluating EU. In practice, we are interested in evaluating EU using only input $x$, as shown in Eqs.\eqref{freq_mutual_information} and \eqref{drop_out}. However, we cannot use Eq.\eqref{eq_erfreq} for that purpose since we do not know both $\nu(Y|x)$ and the Bayes risk in the second term. Despite less practical definition, as shown in Sec.\ref{sec_freq}, PER plays a fundamental role in understanding the algorithm-dependent behavior of the widely used EU measurements in Eqs.\eqref{freq_mutual_information} and \eqref{drop_out}.

\section{Analysis of epistemic uncertainty based on excess risk}\label{sec_freq}
In this section, we develop theories for analyzing the widely used EU measurements introduced in Sec~\ref{sec_prac}. We focus on the following questions. {\bf (Q1)} The convergence behaviors of those measurements are not apparent. As the number of training data points increases, we expect these measurements to converge to zero. {\bf (Q2)} The relationship between these measurements and the generalization error is unclear. Since these measurements depend on the training data and the algorithm, we expect some meaningful relationships must exist. All the proofs in this section are shown in Appendix~\ref{app_proof_sec_3}.

\subsection{Relation between epistemic uncertainty and Bayesian excess risk}\label{sec_freq_bayes}
First, to connect the practical EU evaluation methods in Sec.\ref{sec_prac} with the excess risk analysis in Sec.\ref{sec:Bayes_SBackground},  we introduce the approximate joint distribution of test data, training data, and parameters:
\begin{align}\label{generating2}
    \nu(\bold{Z}^N) q({\theta|\bold{Z}^N}) \nu(Z)\approx p^q(\theta,\bold{Z}^N, Z):=\nu(\bold{Z}^N) q({\theta|\bold{Z}^N}) \nu(X)p(Y|X,\theta).
\end{align}
When the log loss is used, we employ model for $p(y|x,\theta)$ in Eq.\eqref{generating2}. When the squared loss is used, we assume Gaussian distribution $p(y|x,\theta)=N(y|f_\theta(x),v^2)$ for some $v\in\mathbb{R}$.

When a model is well specified, that is, $\nu(y|x)=p(y|x,\theta^*)$ holds for some $\theta^*\in\Theta$, we expect that the predictive distribution converges to $p(y|x,\theta^*)$ and the approximation of Eq.\eqref{generating2} becomes accurate as $N$ increases. We discuss the quality of this approximation in Sec.\ref{sec_pactob}. Under this setting, we define {\bf Bayesian Excess risk (BER)}:
\begin{align}\label{excss_bayes}
    \mathrm{BER}^l(Y|X,\bold{Z}^N):=\mathrm{BPR}^l(Y|X,\bold{Z}^N)-\inf_{\phi:\Theta\times \mathcal{X}\to\mathcal{A}}\mathbb{E}_{p^q(\theta,\bold{Z}^N,Z)}l(Y,\phi(\theta,X))
\end{align}
where $\mathrm{BPR}$ is the {\bf Bayesian Prediction Risk} defined as
\begin{align}
    \mathrm{BPR}^l(Y|X,\bold{Z}^N):=\mathbb{E}_{p^q(\theta,\bold{Z}^N,Z)}l(Y,\mathbb{E}_{q(\theta'|\bold{Z}^N)}f_{\theta'} (X)),
\end{align}
and the second term is the Bayes risk under the approximate joint distribution of Eq.\eqref{generating2}. Note that $\mathrm{BER}^l(Y|X,\bold{Z}^N)$ is always larger than 0; see Appendix~\ref{eq_BER0} for details. We also show the formal definitions of BER and BPR conditioned on $(X,\bold{Z}^N)=(x,\bold{z}^N)$ in Appendix~\ref{app_conditional_relations}.

$\mathrm{BER}$ and $\mathrm{BPR}$ are defined, motivated by PER, PR, and MER. The difference is the mechanism of through which the test data are generated. In $\mathrm{BER}$, we assume that our model $p(y|x,\theta)$ is correct, and the parameters follow the approximate posterior distribution $q({\theta|\bold{z}^N})$. Thus, the data-generating mechanism resembles the setting in Sec.\ref{sec:Bayes_SBackground}. Therefore, similar to MER, BER implies the loss due to insufficient data under the assumption that our current model $q({\theta|\bold{z}^N}) p(y|x,\theta)$ is correct. 

The next theorem elaborate this intuition and connects BER to widely used EU measurements:
\begin{thm}\label{freq_to_bayes}
Conditioned on $(x,\bold{z}^N)$, we express $\mathrm{BER}^{\mathrm{log}}(Y|x,\bold{z}^N)$ for the log loss and $\mathrm{BER}^{(2)}(Y|x,\bold{z}^N)$ for the squared loss. Under the definition of Eq.\eqref{excss_bayes}, we have
\begin{align}
    \mathrm{BER}^{\mathrm{log}}(Y|x,\bold{z}^N)=I_{\nu}(\theta;Y|x,\bold{z}^N), \quad \quad\quad\mathrm{BER}^{(2)}(Y|x,\bold{z}^N)= \mathrm{Var}_{\theta|\bold{z}^N}f_\theta(x).
\end{align}
\end{thm}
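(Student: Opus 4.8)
The plan is to prove both identities by the same two-step recipe: first rewrite the Bayesian Prediction Risk $\mathrm{BPR}^l(Y|x,\bold{z}^N)$ in closed form by marginalising out $\theta$ under the conditional joint $p^q(\theta,Y|x,\bold{z}^N)=q(\theta|\bold{z}^N)\,p(Y|x,\theta)$, and second evaluate the infimum defining the Bayes-risk term by optimising $\phi(\theta,x)$ pointwise for each fixed $(\theta,x)$. The crucial observation that makes the second step tractable is that the competing rule $\phi$ has access to the \emph{true} parameter $\theta$, so the infimum decouples across $\theta$ and is attained by the per-$\theta$ Bayes-optimal action of the model $p(\cdot|x,\theta)$.

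For the log loss, set the action to the predictive distribution, so that the loss equals $-\ln p^q(Y|x,\bold{z}^N)$. Since this integrand does not depend on $\theta$, marginalising $\theta$ out of $p^q(\theta,Y|x,\bold{z}^N)$ leaves $Y\sim p^q(Y|x,\bold{z}^N)$, whence $\mathrm{BPR}^{\log}(Y|x,\bold{z}^N)=H[p^q(Y|x,\bold{z}^N)]$. For the second term, fixing $(\theta,x)$ the inner objective $\mathbb{E}_{p(Y|x,\theta)}[-\ln\phi(\theta,x)(Y)]$ is a cross-entropy, minimised over distributions $\phi(\theta,x)$ by $\phi^\star(\theta,x)=p(\cdot|x,\theta)$ (equivalently, by nonnegativity of the KL divergence, i.e. Gibbs' inequality), giving value $H[p(Y|x,\theta)]$; taking the remaining expectation over $q(\theta|\bold{z}^N)$ yields $\mathbb{E}_{q(\theta|\bold{z}^N)}H[p(Y|x,\theta)]$. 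Subtracting the two terms reproduces exactly Eq.\eqref{freq_mutual_information}, i.e. $I_\nu(\theta;Y|x,\bold{z}^N)$.

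For the squared loss, use $p(y|x,\theta)=N(y|f_\theta(x),v^2)$ and write $\bar f(x):=\mathbb{E}_{q(\theta'|\bold{z}^N)}f_{\theta'}(x)$. The Bayesian Prediction Risk is $\mathbb{E}_{q(\theta|\bold{z}^N)}\mathbb{E}_{p(Y|x,\theta)}(Y-\bar f(x))^2$; applying the bias--variance decomposition to the inner expectation (mean $f_\theta(x)$, variance $v^2$) gives $v^2+(f_\theta(x)-\bar f(x))^2$, so $\mathrm{BPR}^{(2)}(Y|x,\bold{z}^N)=v^2+\mathrm{Var}_{\theta|\bold{z}^N}f_\theta(x)$. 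For the Bayes-risk term, for each fixed $(\theta,x)$ the minimiser of $\mathbb{E}_{p(Y|x,\theta)}(Y-a)^2$ over $a\in\mathbb{R}$ is the conditional mean $a^\star=f_\theta(x)$ with value $v^2$, so the infimum equals $v^2$. The noise level $v^2$ cancels upon subtraction, leaving $\mathrm{Var}_{\theta|\bold{z}^N}f_\theta(x)$.

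The computations are elementary once the decoupling in step two is recognised; the care needed is mostly conceptual. The main subtlety --- and the step I would double-check --- is the legitimacy of solving $\inf_\phi$ pointwise: because $\phi$ ranges over all measurable maps $\Theta\times\mathcal{X}\to\mathcal{A}$ and the expectation factorises as $\mathbb{E}_{q(\theta|\bold{z}^N)}\mathbb{E}_{p(Y|x,\theta)}$, the pointwise optimum $\phi^\star(\theta,x)$ is itself a measurable map that realises the infimum, so no gap opens between the pointwise and the functional optimisation. I would also verify that conditioning on $(x,\bold{z}^N)$ commutes with these operations (as formalised in the appendix) and that the $v^2$ appearing in $\mathrm{BPR}^{(2)}$ and in the Bayes risk is literally the same quantity --- the noise variance of one fixed Gaussian model --- which is exactly what guarantees the clean cancellation.
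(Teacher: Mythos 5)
Your proposal is correct and follows essentially the same route as the paper's own proof: for the log loss it identifies $\mathrm{BPR}^{\log}$ with the entropy of the predictive distribution and the Bayes-risk term with the conditional entropy via Gibbs' inequality (the paper uses the equivalent variational characterization $H[p]=-\inf_{q}\mathbb{E}_p\ln q$), and for the squared loss it reduces both terms to conditional means/variances under $p(y|x,\theta)=N(y|f_\theta(x),v^2)$ so that the noise level cancels. The only cosmetic difference is that you evaluate $\mathrm{BPR}^{(2)}$ and the Bayes risk separately before subtracting, whereas the paper expands the difference directly; your explicit remarks on the pointwise attainment of the infimum and on conditioning match what the paper defers to its appendix on conditional excess risks.
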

Thus, by studying BER, we can analyze the widely used EU measurements. We point out that $\mathrm{BPR}^{\mathrm{log}} (Y|x,\bold{z}^N)=H[p^q(Y|x,\bold{z}^N)]$, and the Bayes risk is given as $\mathbb{E}_{q(\theta|\bold{z}^N)}H[p(Y|x,\theta)]$.

\begin{rmk}\label{rmk_1}
BER captures EU when our model is well specified. On the other hand, PER (Eq.~\eqref{eq_erfreq}) represents the prediction performance, which has the relation to the quality of approximation of Eq.~\eqref{generating2} under the given loss function. This intuition leads to our new VI in Sec.\ref{sec_algorithm}. 
\end{rmk}

\subsection{Analysis of excess risk based on PAC-Bayesian theory}\label{sec_pactob}
Based on the definitions introduced in Sec.~\ref{sec_freq_bayes}, here we develop a novel relation between BER and the generalization. First, we show the results of the squared loss. For simplicity, assume $\mathcal{Y}=\mathbb{R}$. See Appendix~\ref{app_multi_dim} for $\mathcal{Y}=\mathbb{R}^d$.
\begin{thm}\label{thm_square}
Conditioned on $(x,\bold{z}^N)$, assume that a regression function is well specified, that is, $\mathbb{E}_{\nu(Y|x)}[Y|x]=f_{\theta^*}(x)$ holds. Then we have
\begin{align}\label{square_joint}
\mathrm{PER}^{(2)}(Y|x,\bold{z}^N)+\mathrm{BER}^{(2)}(Y|x,\bold{z}^N)=\mathrm{ER}^{(2)}(Y|x,\bold{z}^N,\theta^*)\leq R^{(2)}(Y|x,\bold{z}^N).
\end{align}
Furthermore assume that the PAC-Bayesian bound Eq.\eqref{base_PAC} holds, and then we have
\begin{align}\label{eq_bound_square2}
\mathrm{PER}^{(2)}(Y|X,\bold{Z}^N)+\mathrm{BER}^{(2)}(Y|X,\bold{Z}^N)=\mathrm{ER}^{(2)}(Y|X,\bold{Z}^N,\theta^*)\leq\frac{C_1\ln N}{N^\alpha}.
\end{align}
\end{thm}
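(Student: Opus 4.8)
The plan is to prove the conditional identity in Eq.~\eqref{square_joint} by a bias--variance decomposition of the conditional test risk around the posterior predictive mean, and then to obtain Eq.~\eqref{eq_bound_square2} by integrating that identity over $\nu(X)\nu(\bold{Z}^N)$ and plugging in the PAC-Bayesian excess-risk bound Eq.~\eqref{base_PAC}. Write $m(x):=\mathbb{E}_{q(\theta|\bold{z}^N)}f_\theta(x)$ for the predictive mean, and recall from Theorem~\ref{freq_to_bayes} that $\mathrm{BER}^{(2)}(Y|x,\bold{z}^N)=\mathrm{Var}_{\theta|\bold{z}^N}f_\theta(x)=\mathbb{E}_{q(\theta|\bold{z}^N)}(f_\theta(x)-m(x))^2$, so the posterior variance is available directly and I need not re-unfold the definition in Eq.~\eqref{excss_bayes}.

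First I would write the three conditional quantities explicitly for the squared loss: $R^{(2)}(Y|x,\bold{z}^N)=\mathbb{E}_{q(\theta|\bold{z}^N)}\mathbb{E}_{\nu(Y|x)}|Y-f_\theta(x)|^2$, the risk at the optimum $R^{(2)}(Y|x,\theta^*)=\mathbb{E}_{\nu(Y|x)}|Y-f_{\theta^*}(x)|^2$, and $\mathrm{ER}^{(2)}(Y|x,\bold{z}^N,\theta^*)$ as their difference. The single computational step is to expand the test risk around $m(x)$, which gives
\begin{align}
R^{(2)}(Y|x,\bold{z}^N)
=\underbrace{\mathbb{E}_{\nu(Y|x)}\bigl|Y-m(x)\bigr|^2}_{=\,\mathrm{PR}^{(2)}(Y|x,\bold{z}^N)}
+\underbrace{\mathbb{E}_{q(\theta|\bold{z}^N)}\bigl(f_\theta(x)-m(x)\bigr)^2}_{=\,\mathrm{BER}^{(2)}(Y|x,\bold{z}^N)},
\end{align}
where the cross term drops because $\mathbb{E}_{q(\theta|\bold{z}^N)}(f_\theta(x)-m(x))=0$. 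The first summand matches the conditional prediction risk of Eq.~\eqref{eq_pr} and the second is $\mathrm{BER}^{(2)}$ by Theorem~\ref{freq_to_bayes}. Subtracting $R^{(2)}(Y|x,\theta^*)$ from both sides then yields $\mathrm{ER}^{(2)}=\bigl(\mathrm{PR}^{(2)}-R^{(2)}(Y|x,\theta^*)\bigr)+\mathrm{BER}^{(2)}$.

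The well-specified hypothesis $\mathbb{E}_{\nu(Y|x)}[Y|x]=f_{\theta^*}(x)$ is what closes the argument: it makes $R^{(2)}(Y|x,\theta^*)=\mathbb{E}_{\nu(Y|x)}|Y-f_{\theta^*}(x)|^2=\mathrm{Var}_{\nu(Y|x)}(Y)$ coincide with the Bayes risk $\inf_{\tilde\phi}\mathbb{E}_{\nu(Y|x)}|Y-\tilde\phi(x)|^2$ (for the squared loss the conditional mean is the minimizer). Hence $\mathrm{PR}^{(2)}-R^{(2)}(Y|x,\theta^*)=\mathrm{PER}^{(2)}$ by the definition in Eq.~\eqref{eq_erfreq}, delivering the equality $\mathrm{PER}^{(2)}+\mathrm{BER}^{(2)}=\mathrm{ER}^{(2)}$. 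The remaining inequality of Eq.~\eqref{square_joint} is immediate, since $\mathrm{ER}^{(2)}=R^{(2)}(Y|x,\bold{z}^N)-R^{(2)}(Y|x,\theta^*)$ and $R^{(2)}(Y|x,\theta^*)=\mathrm{Var}_{\nu(Y|x)}(Y)\ge 0$.

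Finally, to reach Eq.~\eqref{eq_bound_square2} I would note that $\mathrm{PER}^{(2)}$, $\mathrm{BER}^{(2)}$, and $\mathrm{ER}^{(2)}$ in the unconditioned form are, by their definitions, the $\nu(X)\nu(\bold{Z}^N)$-averages of their conditional counterparts, so integrating the pointwise identity preserves it by linearity of expectation; the well-specified condition need only hold $\nu(X)$-almost everywhere for this. Applying the assumed PAC-Bayesian bound Eq.~\eqref{base_PAC} as a black box to $\mathrm{ER}^{(2)}(Y|X,\bold{Z}^N,\theta^*)$ then gives the $C_1\ln N/N^\alpha$ rate. The only genuinely delicate point is the second paragraph's bias--variance step combined with the correct use of well-specifiedness: without $f_{\theta^*}(x)=\mathbb{E}_{\nu(Y|x)}[Y|x]$ the best-in-model parameter need not equal the conditional mean, a residual cross term survives, and the clean split of $\mathrm{ER}^{(2)}$ into $\mathrm{PER}^{(2)}$ (squared bias of the predictive mean) plus $\mathrm{BER}^{(2)}$ (posterior variance) fails.
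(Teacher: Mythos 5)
Your proof is correct and follows essentially the same route as the paper: the conditional identity comes from the bias--variance decomposition around the posterior predictive mean (the paper's Lemma~\ref{mv_vi}, integrated over $\nu(Y|x)$), well-specifiedness identifies the best-in-model risk with the Bayes risk so that the squared bias term becomes $\mathrm{PER}^{(2)}$, the inequality follows from nonnegativity of the subtracted risk, and the unconditional bound follows by averaging and invoking Eq.~\eqref{base_PAC}. The only step you compress is the claim that the unconditional risks are the averages of the conditional ones ``by their definitions'' --- the paper justifies the interchange of the infimum and the expectation over $X$ via a measurable-selection argument in Appendix~\ref{app_conditional_relations}, but for the squared loss this is immediate since the conditional mean is the pointwise minimizer.
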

\begin{proof}
We use the following relation about the Jensen gap and BER:
\begin{lmm}\label{mv_vi}
For any $(x,y)$ and any posterior distribution conditioned on $\bold{Z}^N=\bold{z}^N$, we have
\begin{align}
|y-\mathbb{E}_{q(\theta|\bold{z}^N)}f_\theta(x)|^2+ \mathrm{Var}_{\theta|\bold{z}^N}f_\theta(x)=\mathbb{E}_{q(\theta|\bold{z}^N)}|y-f_\theta(x)|^2.
\end{align}
\end{lmm}
From this lemma, the theorem follows directly.
\end{proof}
\begin{rmk}\label{rmark_2}
When we use a flexible model, such as a deep neural network for $f_\theta(x)$, assumption $\mathbb{E}_{\nu(Y|x)}[Y|x]=f_{\theta^*}(x)$ holds even when we misspecify noise function $\nu(Y|x)$.
\end{rmk}
From Eq.\eqref{square_joint}, $\mathrm{Var}_{\theta|\bold{z}^N}f_\theta(x)$ clearly is a lower bound of excess risk and test error, consistent with the well-known result that the variance of the predictor often underestimates EU \citep{lakshminarayanan2017simple}. From Eq.\eqref{eq_bound_square2}, $\mathrm{BER}^{(2)}(Y|X,\bold{Z}^N)$ converges to 0 with the same order as the PAC-Bayesian bound. 
Finally, we remark that from Lemma~\ref{mv_vi}, we have
\begin{align}\label{eq_22}
   R^{(2)}(Y|X,\bold{Z}^N)= \mathrm{PR}^{(2)}(Y|X,\bold{Z}^N)+\mathrm{BER}^{(2)}(Y|X,\bold{Z}^N).
\end{align}
This indicates that the test error is decomposed into PR and BER. As pointed out in Remark~\ref{rmk_1}, BER is EU under the approximation of Eq.\eqref{generating2}, and PER represents the quality of that approximation, Eq.\eqref{eq_22} suggests that the test error simultaneously regularizes those BER and PER.

Next we show the log loss result. Our analysis requires additional assumption about model $p(y|x,\theta)$. We define log density ratio $L(y,x,\theta,\theta^*):=-\ln p(y|x,\theta)+\ln p(y|x,\theta^*)$.
\begin{assumption}\label{sub_exp}Conditioned on $(x,\theta,\bold{z}^N)$, there exists convex function $h(\rho)$ for $[0,b)$ such that cumulant function $L(y,x,\theta,\theta^*)$ is upper-bounded by $h(\rho)$,i.e., the following inequality holds:
\begin{align}
    \ln\mathbb{E}_{p(Y|x,\theta)}e^{\rho(L(Y,x,\theta,\theta^*)-\mathbb{E}_{p(Y|x,\theta)}L(Y,x,\theta,\theta^*))} \leq h(\rho).%\rho^2\sigma^2/2.
\end{align}
\end{assumption}
For example, if $h(\rho)=\rho^2\sigma^2(x,\theta)/2$ and $b=\infty$, this assumption resembles the $\sigma^2$ sub-Gaussian property given $(x,\theta,\bold{z}^N)$. When considering Gaussian likelihood $p(y|x,\theta)=N(y|f_\theta(x),v^2)$, we have $\scalebox{0.90}{$h(\rho)=\frac{\rho^2}{2v^2}|f_\theta(x)-f_{\theta*}(x)|^2$}$. Thus, $\sigma^2(x,\theta)$ depends on $x$ and $\theta$, and we refer to this $\sigma^2(x,\theta)$ as a sub-Gaussian property. Other than the Gaussian likelihood, when the log loss is bounded, it satisfies the sub-Gaussian property. In this paper, we focus on this sub-Gaussian setting for Assumption~\ref{sub_exp} to clarify the presentation. We show an example of the logistic regression in Appendix~\ref{app_llogistc}. 
\begin{thm}\label{thm_freq_excess_risk}
When the model is well specified, that is, $\nu(y|x)=p(y|x,\theta^*)$ holds and $\sigma^2(x,\theta)$ sub-Gaussian property is satisfied for $L(y,x,\theta,\theta^*)$, as discussed above. Assume that $\mathbb{E}_{q(\theta|\bold{z}^N)}\sigma^2(x,\theta)<\sigma_p^2<\infty$. Conditioned on $(x,\bold{z}^N)$, we have
\begin{align}\label{general}
      \mathrm{PER}^{\mathrm{log}}(Y|x,\bold{z}^N)+\mathrm{BER}^{\mathrm{log}}(Y|x,\bold{z}^N)\leq \sqrt{2\sigma_p^2\mathrm{ER}^{\mathrm{log}}(Y|x,\bold{z}^N,\theta^*)}.
\end{align}
Moreover, assume PAC-Bayesian bound Eq.\eqref{base_PAC}  and  $\scalebox{0.90}{$\mathbb{E}_{\nu(\bold{z}^N)q(\theta|\bold{Z}^N)\nu(X)}\sigma^2(X,\theta)<\sigma_q^2<\infty$}$ hold, and then we have
\begin{align}\label{general2}
    \mathrm{PER}^{\mathrm{log}}(Y|X,\bold{Z}^N)+\mathrm{BER}^{\mathrm{log}}(Y|X,\bold{Z}^N)&\leq \sqrt{2\sigma_q^2\mathrm{ER}^{\mathrm{log}}(Y|X,\bold{Z}^N,\theta^*)}\nonumber \\
    &\leq \sqrt{\frac{2\sigma_q^2C_1\ln N}{N^{\alpha}}}.
\end{align}
\end{thm}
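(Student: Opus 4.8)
The plan is to reduce every quantity to a Kullback--Leibler divergence and then invoke a sub-Gaussian change-of-measure (transportation) inequality. First I would rewrite the three conditional quantities in closed form. Writing $p^*=p(\cdot|x,\theta^*)$, $p_\theta=p(\cdot|x,\theta)$ and $p^q=p^q(\cdot|x,\bold{z}^N)=\mathbb{E}_{q(\theta|\bold{z}^N)}p_\theta$, the well-specified assumption $\nu(Y|x)=p^*$ makes the Bayes action $\tilde\phi=\nu(Y|x)$, so that $\mathrm{PER}^{\mathrm{log}}(Y|x,\bold{z}^N)=\mathrm{KL}(p^*\|p^q)$ and $\mathrm{ER}^{\mathrm{log}}(Y|x,\bold{z}^N,\theta^*)=\mathbb{E}_{q(\theta|\bold{z}^N)}\mathrm{KL}(p^*\|p_\theta)$. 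By Theorem~\ref{freq_to_bayes} together with the standard mutual-information identity, $\mathrm{BER}^{\mathrm{log}}(Y|x,\bold{z}^N)=I_\nu(\theta;Y|x,\bold{z}^N)=\mathbb{E}_{q(\theta|\bold{z}^N)}\mathrm{KL}(p_\theta\|p^q)$. Thus the left-hand side of Eq.\eqref{general} is $\mathrm{KL}(p^*\|p^q)+\mathbb{E}_q\mathrm{KL}(p_\theta\|p^q)$, and the goal is to control this by $\sqrt{2\sigma_p^2\,\mathbb{E}_q\mathrm{KL}(p^*\|p_\theta)}$.

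Next I would remove the mixture $p^q$ in favor of the individual components $p_\theta$ at the cost of two nonnegative gaps. Convexity of $\mathrm{KL}(p^*\|\cdot)$ and $p^q=\mathbb{E}_q p_\theta$ give $\mathrm{KL}(p^*\|p^q)\le\mathbb{E}_q\mathrm{KL}(p^*\|p_\theta)$, while a direct computation shows $\mathbb{E}_q\mathrm{KL}(p_\theta\|p^*)-\mathbb{E}_q\mathrm{KL}(p_\theta\|p^q)=\mathrm{KL}(p^q\|p^*)\ge 0$, hence $\mathbb{E}_q\mathrm{KL}(p_\theta\|p^q)\le\mathbb{E}_q\mathrm{KL}(p_\theta\|p^*)$. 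Adding these yields $\mathrm{PER}^{\mathrm{log}}+\mathrm{BER}^{\mathrm{log}}\le\mathbb{E}_q[\mathrm{KL}(p^*\|p_\theta)+\mathrm{KL}(p_\theta\|p^*)]$, and the bracket is exactly the symmetrized divergence $\mathbb{E}_{p^*}L(Y,x,\theta,\theta^*)-\mathbb{E}_{p_\theta}L(Y,x,\theta,\theta^*)$, which is precisely the object that Assumption~\ref{sub_exp} is built to bound.

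The core step is a Donsker--Varadhan argument applied per $\theta$. Taking $W=\rho\bigl(L-\mathbb{E}_{p_\theta}L\bigr)$ with base measure $p_\theta$ and tilting to $p^*$, the sub-Gaussian form $h(\rho)=\rho^2\sigma^2(x,\theta)/2$ of Assumption~\ref{sub_exp} gives $\rho\bigl(\mathbb{E}_{p^*}L-\mathbb{E}_{p_\theta}L\bigr)\le\mathrm{KL}(p^*\|p_\theta)+\rho^2\sigma^2(x,\theta)/2$, and optimizing over $\rho>0$ yields $\mathbb{E}_{p^*}L-\mathbb{E}_{p_\theta}L\le\sqrt{2\sigma^2(x,\theta)\mathrm{KL}(p^*\|p_\theta)}$. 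Integrating against $q(\theta|\bold{z}^N)$ and applying Cauchy--Schwarz in the form $\mathbb{E}_q\sqrt{A_\theta B_\theta}\le\sqrt{\mathbb{E}_q A_\theta}\sqrt{\mathbb{E}_q B_\theta}$ with $A_\theta=2\sigma^2(x,\theta)$ and $B_\theta=\mathrm{KL}(p^*\|p_\theta)$, together with $\mathbb{E}_q\sigma^2(x,\theta)<\sigma_p^2$ and $\mathbb{E}_q\mathrm{KL}(p^*\|p_\theta)=\mathrm{ER}^{\mathrm{log}}(Y|x,\bold{z}^N,\theta^*)$, delivers Eq.\eqref{general}.

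For Eq.\eqref{general2} I would take $\mathbb{E}_{\nu(\bold{Z}^N)\nu(X)}$ of the per-$\theta$ transport bound \emph{before} applying Cauchy--Schwarz, so the inequality is now over the full joint measure $\nu(\bold{Z}^N)q(\theta|\bold{Z}^N)\nu(X)$; using $\mathbb{E}_{\nu(\bold{Z}^N)q(\theta|\bold{Z}^N)\nu(X)}\sigma^2(X,\theta)<\sigma_q^2$ and recognizing $\mathbb{E}[\mathrm{KL}(p^*\|p_\theta)]=\mathrm{ER}^{\mathrm{log}}(Y|X,\bold{Z}^N,\theta^*)$ gives the first inequality, and substituting the PAC-Bayesian excess-risk bound Eq.\eqref{base_PAC} produces the $\sqrt{2\sigma_q^2 C_1\ln N/N^\alpha}$ rate. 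The main obstacle is the second paragraph: it is not a priori obvious that the mixture-based sum $\mathrm{PER}+\mathrm{BER}$ should be dominated by the per-component symmetrized divergence $\mathbb{E}_q[\mathrm{KL}(p^*\|p_\theta)+\mathrm{KL}(p_\theta\|p^*)]$, and making both Jensen-type gaps point in the favorable direction (so the loss of tightness is harmless) is exactly what renders the sub-Gaussian transport usable. A secondary care point is that Assumption~\ref{sub_exp} must be read with base measure $p_\theta$ and centered $L$, and the $\rho$-optimization remains admissible in $[0,b)$ only because we restrict to the $b=\infty$ sub-Gaussian case.
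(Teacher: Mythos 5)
Your proof is correct and follows essentially the same route as the paper's: the two Jensen-type reductions you phrase as $\mathrm{KL}(p^*\|p^q)\le\mathbb{E}_q\mathrm{KL}(p^*\|p_\theta)$ and $\mathbb{E}_q\mathrm{KL}(p_\theta\|p^q)\le\mathbb{E}_q\mathrm{KL}(p_\theta\|p^*)$ are exactly the paper's Jensen and Gibbs steps in KL notation, and the per-$\theta$ sub-Gaussian transportation bound followed by Cauchy--Schwarz (and, for the unconditional version, averaging over $\nu(\bold{Z}^N)q(\theta|\bold{Z}^N)\nu(X)$ before Cauchy--Schwarz) is precisely the paper's use of Lemma~\ref{translation_lemma}. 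No gaps.
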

\begin{proof}
Using Assumption~\ref{sub_exp}, we apply change-of-measure inequalities \citep{ohnishi2021novel} to control the approximation error of Eq.\eqref{generating2}. 
\end{proof}
\begin{rmk}
When we use a generalized linear model, the assumption of a well-specified model is relaxed so that $\mathbb{E}_{\nu(Y|x)}[Y|x]$ is well specified, similar to Remark~\ref{rmark_2}. See Appendix~\ref{relax_assump} for details.
\end{rmk}

From Eq.\eqref{general}, $I_{\nu}(\theta;Y|x,\bold{z}^N)$ is a lower bound of the excess risk and test error. From Eq.\eqref{general2},  $\scalebox{0.95}{$I_{\nu}(\theta;Y|X,\bold{Z}^N)$}$ converges in the order of $\scalebox{0.95}{$\mathcal{O}(\sqrt{\ln N/N^{\alpha}})$}$ if we can upper-bound $\scalebox{0.95}{$\mathbb{E}_{\nu(\bold{Z}^N)q({\theta|\bold{Z}^N})\nu(X)}\sigma^2(X,\theta)<\sigma_q^2<\infty$}$. For the Gaussian likelihood, we have $\scalebox{0.95}{$\mathbb{E}_{\nu(\bold{Z}^N)q(\theta|\bold{Z}^N)\nu(X)}\sigma^2(X,\theta)\leq\!2\mathrm{ER}^{\mathrm{log}}(Y|X,\bold{Z}^N,\theta^*)\!\leq\frac{2C_1\ln N}{N^{\alpha}}:=\sigma_q^2$}$. See Appendix~\ref{app_gaussian} for a detail.

In a similar way, we can derive the convergence rate of the entropy of the predictive distribution, which shows $H[p^q(Y|X,\bold{Z}^N)]=\scalebox{0.95}{$H[p(Y|X,\theta^*)]+\mathcal{O}(\sqrt{\ln N/N^{\alpha}})$}$. See Appendix~\ref{sec_entropy} for a formal statement. We show similar results for Theorem~\ref{thm_freq_excess_risk} under sub-exponential property in Appendix~\ref{app_sub_exp}.

In summary, we obtained the convergence of widely used EU measurements and the entropy in the approximate Bayesian inference for the first time. They converges faster than excess risks. Moreover we obtained two messages from Theorems~\ref{thm_square} and \ref{thm_freq_excess_risk}. First, the widely used EU measurements are the lower bounds of the test error and excess risks. This is consistent with the experimental fact that these EU measurements often underestimate EU. Second, the sum of PER and BER is upper-bounded by excess risk (test error). Thus, when minimizing the test error, we also simultaneously minimize PER and BER. This interpretation extends the intuition of Remark~\ref{rmk_1} and leads to a new VI in Sec.~\ref{sec_algorithm}.

\subsection{Novel EU regularization method for variational inference}\label{sec_algorithm}
As seen in Sec.~\ref{sec_pactob}, minimizing the test error leads to minimizing BER and PER. In this section, we discuss this relation using the objective function of VI. As an explicit example, consider a regression problem using $N(y|f_\theta(x),v^2)$. Then from Lemma~\ref{mv_vi}, the loss function of the standard VI (eliminating $\mathrm{KL}(q({\theta|\bold{z}^N})|p(\theta))$) can be written:
\begin{align}\label{gp}
&-\mathbb{E}_{\nu(Z)}\mathbb{E}_{q(\theta|\bold{z}^N)}\ln N(y|f_\theta(x),v^2)\nonumber \\
&=\mathbb{E}_{\nu(Z)}\frac{|Y-\mathbb{E}_{q(\theta|\bold{z}^N)}f_\theta(X)|^2+ \mathrm{Var}_{\theta|\bold{z}^N}f_\theta(X)}{2v^2}+\frac{\ln2\pi v^2}{2} \nonumber \\
&=\frac{\mathrm{PR}^{(2)}(Y|X,\bold{z}^N)+\mathrm{BER}^{(2)}(Y|X,\bold{z}^N)}{2v^2}+\frac{\ln2\pi v^2}{2}.
\end{align}
Eq.\eqref{gp} implies that the standard VI tries to fit the mean of the predictive distribution to target variable $y$ with the regularization term about the variance of the predictor. These terms corresponds to $\mathrm{PR}^{(2)}(Y|X,\bold{z}^N)$ and $\mathrm{BER}^{(2)}(Y|X,\bold{z}^N)$. Note that there is a relation $\mathrm{BER}^{\mathrm{log}}(Y|x,\bold{z}^N)\leq \mathrm{Var}_{\theta|\bold{z}^N}f_\theta(x)/v^2$ for the Gaussian likelihood. This interpretation is consistent with Remark~\ref{rmk_1}.

It has been numerically reported that the standard VI often underestimates EU. Alternative objective functions have been proposed to address this issue. For example, the entropic loss defined as $\mathrm{Ent}_{\alpha}^l(y,x):=-\frac{1}{\alpha}\ln \mathbb{E}_{q(\theta|\bold{z}^N)}e^{-\alpha l(y,f_\theta(x))}$ for $\alpha>0$, which is used in the $\alpha$-divergence dropout ($\alpha$-DO) \citep{li2017dropout} and the second order PAC-Bayesian methods ($2^\mathrm{nd}$-PAC) \citep{NEURIPS2020_3ac48664,futami2021loss}, can capture EU better than the standard VI.  Note that when $\alpha=1$ and the log loss is used, the entropic risk corresponds to the log loss using the predictive distribution. For the Gaussian likelihood, we can upper-bound the entropic risk:
\begin{align}\label{eq_mutual}
&\mathbb{E}_{\nu(Z)}\mathrm{Ent}_{\alpha=1}^{\mathrm{log}}(Y,X)\nonumber \\
&\leq\!
\mathbb{E}_{\nu(Z)}\frac{|Y\!-\!\mathbb{E}_{q(\theta|\bold{z}^N)}f_\theta(X)|^2}{v^2}\!+\!\frac{\mathrm{Var}_{\theta|\bold{z}^N}f_\theta(X)}{v^2}\!-\!\mathrm{BER}^{\mathrm{log}}(Y|X,\bold{z}^N)\!+\!\ln2\pi v^2,
\end{align}
where we used Eq.\eqref{general}. See Appendix~\ref{app_general} for the derivation. Compared to Eq.\eqref{gp}, the entropic risk implicitly introduces a smaller regularization term for BER. This explains why $\alpha$-DO and $2^\mathrm{nd}$-PAC showed larger EU than the standard VI. We show a similar result for the entropic risk of the general log loss other than the Gaussian likelihood in Appendix~\ref{app_general}.

From these relations, balancing BER and PR appropriately leads to a solution that better evaluates the EU. Motivated by the decomposition in Eqs.\eqref{gp} and \eqref{eq_mutual}, we directly control the prediction performance and the Bayesian excess risk for the Gaussian likelihood:
\begin{align}\label{proposal}
    \scalebox{0.93}{$\displaystyle\mathrm{rBER}(\lambda)=\frac{1}{N}\sum_{i=1}^N \frac{|y_i\!-\!\mathbb{E}_{q(\theta|\bold{z}^N)}f_\theta(x_i)|^2}{2v^2}\!+\lambda\frac{\mathrm{Var}_{\theta|\bold{z}^N}f_\theta(x_i)}{2v^2}\!+\frac{\ln2\pi v^2}{2}\!+\frac{1}{N}\mathrm{KL}(q({\theta|\bold{z}^N})|p(\theta))$},
\end{align}
where $0<\lambda\leq 1$ is the coefficient of the BER regularizer. $\mathrm{KL}(q({\theta|\bold{z}^N})|p(\theta))$ is a regularization term motivated by PAC-Bayesian theory. We select $\lambda$ by cross-validation and it should be smaller than 1 since $\lambda=1$ corresponds to the standard VI from Eq.\eqref{gp} and the standard VI often underestimates the EU. We call Eq.\eqref{proposal} the regularized Bayesian Excess Risk VI (rBER) and show the PAC-Bayesian generalization guarantee for our rBER in Appendix~\ref{app_propose}. In Sec.~\ref{sec_nume_eval}, we numerically evaluated this objective function.

rBER can also be seen as an extension of the standard VI. In the standard VI, the test loss is lower-bounded by the sum of the PR and BER with equal weights (Eq.\eqref{eq_22}). rBER has the flexible weights between PR and BER. See Appendix~\ref{app_propose} for a detailed comparison.

\section{Relation to existing work}
The existing theoretical analysis of uncertainty focused on the calibration performance and clarified when a model over- and underestimates uncertainty \citep{nixon2019measuring,bai2021don,naeini2015obtaining,guo2017calibration}. Other than calibration, the analysis of Gaussian processes (GP) has been gaining attention since GP's posterior predictive distribution can be expressed analytically \citep{fiedler2021practical,lederer2019posterior}. Some research focused on the distance or geometry between the test and training data points to derive EU \citep{liu2020simple,tian2021geometric}. Other approaches connect the randomness of the posterior distribution to predictions by the delta method \citep{nilsen2022epistemic}. 
Differently, the information-theoretic approach \citep{xu2020minimum} focused on the loss function of the problem and defined the excess risk as the EU. Loss function-based analysis was proposed in the deterministic learning algorithm \citep{jain2021deup}.
Our theory, which can be regarded as an extension of the information-theoretic approach \citep{xu2020minimum} to approximate Bayesian inference, derived the convergence properties of the variance and the entropy of the posterior predictive distributions.

Although the excess risk bound in Eq.\eqref{base_PAC} has been discussed by PAC-Bayesian theory \citep{alquier2021user}, its relation to the EU has not been investigated. The relationship between PAC-Bayesian theory and Bayesian inference has been investigated in terms of marginal likelihood \citep{germain2016pac,pmlr-v139-rothfuss21a}. Our work established new relationships that connect the uncertainty of the Bayesian predictive distribution and the PAC-Bayesian generalization bound. The information-theoretic approach \citep{xu2020minimum} clarified that EU can be expressed by conditional mutual information. This relation was extended to meta-learning \citep{jose2021information}. However, the researchers assumed that correct models and exact posterior distributions are available. Our proposed analysis relaxes these assumptions.

\section{Numerical experiments}\label{sec:numerical}
In this section, we numerically confirm the theoretical findings in Sec.~\ref{sec_freq} and our proposed rBER in Eq.\eqref{proposal}. We show the detailed experimental settings and additional results in Appendix~\ref{app:experiment}.
\subsection{Numerical evaluation of Theorem~\ref{thm_square}}
We numerically confirm the statement of Theorem~\ref{thm_square}. First, we consider toy data experiments where the true model is $y=0.5x^3+\epsilon$, $\epsilon\sim N(0,1)$, $x\sim N(0,1)$. We consider a Bayesian neural network (BNN) for $f_\theta(x)$ as a 4 layer neural network model with ReLU activation. We approximate the posterior distribution of the parameters of the neural network by Bayes by backpropagation (BBP), \citep{hernandez2015probabilistic}, dropout \citep{kendall2017uncertainties}, and deep ensemble \citep{lakshminarayanan2017simple}. We evaluate $\mathrm{PER}^{(2)}(Y|X,\bold{Z})$, $\mathrm{BER}^{(2)}(Y|X,\bold{Z})$ ($:=\mathbb{E}_{\nu(X)}\mathrm{Var}_{\theta|\bold{Z}^N}f_\theta(X)$), and $R^{(2)}(Y|X,\bold{Z})$ (test error). The results are shown in Fig.~\ref{fig_toy}. Our numerical results satisfy Eq.\eqref{eq_bound_square2} in Theorem~\ref{thm_square}, that is, $\mathrm{PER}^{(2)}(Y|X,\bold{Z})$ and $\mathrm{BER}^{(2)}(Y|X,\bold{Z})$ are upper-bounded by $R^{(2)}(Y|X,\bold{Z})$ and  converge to zero as the number of samples increases. We calculated the Spearman Rank Correlation (SRC) among $\mathrm{PER}^{(2)}(Y|X,\bold{Z})$, $\mathrm{BER}^{(2)}(Y|X,\bold{Z})$, and $R^{(2)}(Y|X,\bold{Z})$ and showed at least $0.97$ suggesting high correlation relation between them.
\begin{figure}[tb!]
 \centering
 \includegraphics[width=0.9\linewidth]{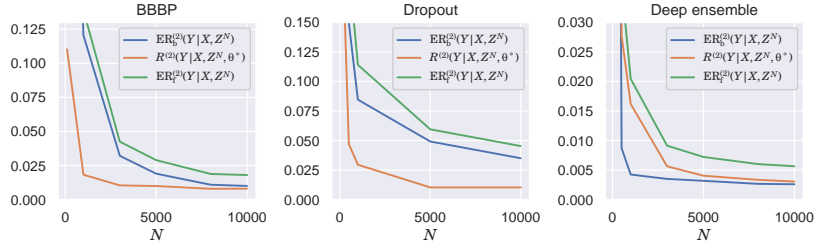}
 \caption{Rresult of toy data experiments: $N$ represents number of training data points, and vertical line is value of each excess risk.}
\label{fig_toy}
\end{figure}
\begin{figure}[tb!]
 \centering
 \includegraphics[width=0.9\linewidth]{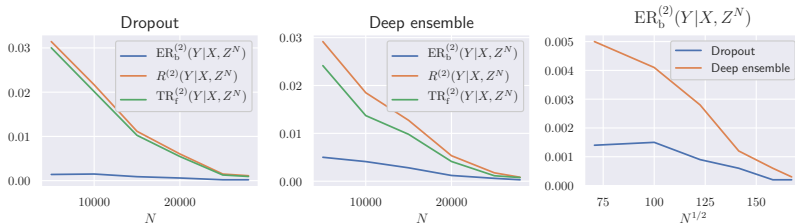}
 \caption{Real data experiments of depth estimation: Vertical line is the value of each risk.}
 \label{fig_ensemble}
\end{figure}

Next, we confirm Theorem~\ref{thm_square} using a real-world dataset. Following the setting of existing work \citep{amini2020deep}, we trained a U-Net style network \citep{ronneberger2015u} with the data of the NYU Depth v2 dataset \citep{silberman2012indoor}, which consists of RGB-to-depth. We applied dropout and deep ensemble methods. Since we cannot evaluate $\mathrm{PER}^{(2)}(Y|X,\bold{Z})$, we instead evaluated Eq.\eqref{eq_22}, which only requires $\mathrm{PR}^{(2)}(Y|X,\bold{Z})$, $\mathrm{BER}^{(2)}(Y|X,\bold{Z})$, and $R^{(2)}(Y|X,\bold{Z})$. The result is shown in Fig.~\ref{fig_ensemble}. We found that $\mathrm{PR}^{(2)}(Y|X,\bold{Z})$, $\mathrm{BER}^{(2)}(Y|X,\bold{Z})$ are upper-bounded by $R^{(2)}(Y|X,\bold{Z})$ for real dataset experiments. We calculated the SRC among $\mathrm{PR}^{(2)}(Y|X,\bold{Z})$, $\mathrm{BER}^{(2)}(Y|X,\bold{Z})$, and $R^{(2)}(Y|X,\bold{Z})$ and showed at least $0.98$, suggesting a high correlation relation between them. We also evaluated the convergence behaviors of BER and show the result on the right in Fig.~\ref{fig_ensemble}. BER converges with $\mathcal{O}(1/N^{1/2})$, which is consistent with Eq.\eqref{eq_bound_square2}.

\subsection{Real data experiments of regularized Bayesian Excess Risk VI}\label{sec_nume_eval}
We numerically compared the prediction and EU evaluation performances of our proposed method shown in Eq.\eqref{proposal} in regression and contextual bandit tasks. Motivated by the success of the entropic risk in particle VI (PVI) \citep{NEURIPS2020_3ac48664,futami2021loss}, which approximates the posterior distribution by the ensemble of models, we also applied our rBER to the PVI setting. Thus, the posterior distribution is expressed as  $q(\theta):=\frac{1}{N}\sum_{i=1}^M\delta_{\theta_i}(\theta)$, where $\delta_{\theta_i}(\theta)$ is the Dirac distribution that has a mass at $\theta_i$. See Appendix~\ref{app:experiment} for details about PVI. 
We refer to rBER($0$) when $\lambda=0$ in Eq.\eqref{proposal}. We compared our method with the existing PVI methods, f-SVGD \citep{DBLP:conf/iclr/WangRZZ19}, $\mathrm{PAC}^2_\mathrm{E}$ \citep{NEURIPS2020_3ac48664}, and VAR \citep{futami2021loss}. 

We used the UCI dataset \citep{Dua:2017} for regression tasks. The model is a single-layer network with ReLU activation, and we used 20 ensembles. The results of $20$ repetitions are shown in Table~\ref{tab:bench_regm}. We evaluated the fitting performance by RMSE and the uncertainty estimation performance by the prediction interval coverage probability (PICP), which shows the number of test observations inside the estimated prediction interval where the interval was set to 0.95. PICP is best when it is close to 0.95. We evaluated the mean prediction interval width (MPIW), which shows an average width of a prediction interval. A smaller MPIW is a better uncertainty estimate when PICP is near the best. Due to space limitations, the results of $\mathrm{PAC}^2_\mathrm{E}$ and the other $\lambda$s and the negative log-likelihood are shown in Appendix~\ref{app:experiment}. We found the existing PVIs show small PICP and MPIW, indicating that the existing methods underestimate the uncertainty. rBER($0$) shows a large PICP and MPIW since the Bayesian excess risk is not regularized. rBER($0.05$) shows a moderate MPIW with a better PICP and almost identical prediction performance in RMSE. Thus, rBER successfully controlled the prediction and the uncertainty evaluation performances.

Next we evaluated the rBER using contextual bandit problems \citep{riquelme2018deep}. We need to balance the trade-off between exploitation and exploration to achieve small cumulative regret. For that purpose, our algorithms must appropriately control the prediction and uncertainty evaluation performance. We used the Thompson sampling algorithm with BNN and two hidden layers. We used 20 ensembles for approximating the posterior distribution. The results of $10$ repetitions are shown in Table~\ref{tab:bench_regretm}. Our approach outperformed other methods, which means our proposed method showed better prediction and uncertainty control than the existing methods.

\begin{figure}
\centering
\begin{minipage}{1.0\textwidth}
\centering
\tblcaption{Benchmark results on test RMSE, PICP, and MPIW.}
%\vspace{-0.05in}
\label{tab:bench_regm}
\resizebox{1.\textwidth}{!}{
\begin{tabular}{c|cccc|cccc}
\toprule
\fontsize{9}{7.2}\selectfont \bf{\multirow{2}{*}{Dataset}} & 
\multicolumn{4}{|c}{\fontsize{9}{7.2}\selectfont Avg. Test RMSE} & \multicolumn{4}{|c}{\fontsize{9}{7.2}\selectfont Avg. Test PICP and MPIW in parenthesis} \\
 & \fontsize{9}{7.2}\selectfont f-SVGD&
  \fontsize{9}{7.2}\selectfont VAR&
 \fontsize{9}{7.2}\selectfont rBER($0$) & \fontsize{9}{7.2}\selectfont rBER($0.05$) &
 \fontsize{9}{7.2}\selectfont f-SVGD &
 \fontsize{9}{7.2}\selectfont VAR &
 \fontsize{9}{7.2}\selectfont rBER($0$)  & \fontsize{9}{7.2}\selectfont rBER($0.05$)   \\
\hline
\fontsize{8}{7.2}\selectfont Concrete&
\fontsize{8}{7.2}\selectfont 4.33$\pm$0.8&
\fontsize{8}{7.2}\selectfont 4.30$\pm$0.7 &
\fontsize{8}{7.2}\selectfont 4.47$\pm$0.6 &
\fontsize{8}{7.2}\selectfont 4.48$\pm$0.7 &
\fontsize{8}{7.2}\selectfont 0.82$\pm$0.03 (0.13$\pm$0.00) &
\fontsize{8}{7.2}\selectfont 0.87$\pm$0.04 (0.16$\pm$0.01) &
\fontsize{8}{7.2}\selectfont 0.99$\pm$0.02 (0.50$\pm$0.04)&
\fontsize{8}{7.2}\selectfont {\bf 0.95$\pm$0.02} (0.25$\pm$0.02)  \\
\fontsize{8}{7.2}\selectfont Boston&
\fontsize{8}{7.2}\selectfont 2.54$\pm$0.50 &
\fontsize{8}{7.2}\selectfont 2.53$\pm$0.50 &
\fontsize{8}{7.2}\selectfont 2.53$\pm$0.50 &
\fontsize{8}{7.2}\selectfont 2.53$\pm$0.51 &
\fontsize{8}{7.2}\selectfont 0.63$\pm$0.07 (0.10$\pm$0.02) &
\fontsize{8}{7.2}\selectfont 0.76$\pm$0.05 (0.14$\pm$0.01) &
\fontsize{8}{7.2}\selectfont 0.97$\pm$0.01 (0.33$\pm$0.04) &
\fontsize{8}{7.2}\selectfont {\bf 0.92$\pm$0.04}(0.22$\pm$0.02)  \\
\fontsize{8}{7.2}\selectfont Wine&
\fontsize{8}{7.2}\selectfont 0.61$\pm$0.04 &
\fontsize{8}{7.2}\selectfont 0.61$\pm$0.04 &
\fontsize{8}{7.2}\selectfont 0.64$\pm$0.04 &
\fontsize{8}{7.2}\selectfont 0.63$\pm$0.02 &
\fontsize{8}{7.2}\selectfont 0.79$\pm$0.03 (0.32$\pm$0.05) &
\fontsize{8}{7.2}\selectfont 0.85$\pm$0.02 (0.39$\pm$0.06) &
\fontsize{8}{7.2}\selectfont 0.99$\pm$0.00 (1.61$\pm$0.00) &
\fontsize{8}{7.2}\selectfont {\bf 0.95$\pm$0.03} (0.32$\pm$0.15)   \\
\fontsize{8}{7.2}\selectfont Power&
\fontsize{8}{7.2}\selectfont 3.78$\pm$0.14 &
\fontsize{8}{7.2}\selectfont 3.75$\pm$0.13 &
\fontsize{8}{7.2}\selectfont 3.66$\pm$0.15 &
\fontsize{8}{7.2}\selectfont 3.69$\pm$0.12&
\fontsize{8}{7.2}\selectfont 0.43$\pm$0.01 (0.07$\pm$0.00) &
\fontsize{8}{7.2}\selectfont 0.82$\pm$0.01 (0.15$\pm$0.00) &
\fontsize{8}{7.2}\selectfont 0.99$\pm$0.01 (0.81$\pm$0.01) &
\fontsize{8}{7.2}\selectfont 0.96$\pm$0.01 (0.37$\pm$0.01)    \\
\fontsize{8}{7.2}\selectfont Yacht&
\fontsize{8}{7.2}\selectfont 0.64$\pm$0.28 &
\fontsize{8}{7.2}\selectfont 0.60$\pm$0.28 &
\fontsize{8}{7.2}\selectfont 0.75$\pm$0.41 &
\fontsize{8}{7.2}\selectfont 0.78$\pm$0.48 &
\fontsize{8}{7.2}\selectfont 0.92$\pm$0.04 (0.02$\pm$0.01) &
\fontsize{8}{7.2}\selectfont 0.93$\pm$0.04 (0.04$\pm$0.01) &
\fontsize{8}{7.2}\selectfont {\bf 0.96$\pm$0.03} (0.10$\pm$0.01)&
\fontsize{8}{7.2}\selectfont {\bf 0.94$\pm$0.04} (0.08$\pm$0.01)\\
\fontsize{8}{7.2}\selectfont Protein&
\fontsize{8}{7.2}\selectfont 3.98$\pm$0.54 &
\fontsize{8}{7.2}\selectfont 3.92$\pm$0.05 &
\fontsize{8}{7.2}\selectfont 3.83$\pm$0.10 &
\fontsize{8}{7.2}\selectfont 3.85$\pm$0.05 &
\fontsize{8}{7.2}\selectfont 0.53$\pm$0.01 (0.24$\pm$0.01) &
\fontsize{8}{7.2}\selectfont 0.83$\pm$0.00 (0.58$\pm$0.01) &
\fontsize{8}{7.2}\selectfont 1.0 $\pm$0.00 (5.04$\pm$0.01)&
\fontsize{8}{7.2}\selectfont {\bf 0.96$\pm$0.01} (0.86$\pm$0.00)\\
\bottomrule
\end{tabular}}
\end{minipage}\\
\begin{minipage}{1.0\textwidth}
\centering
\tblcaption{Cumulative regret relative to that of the uniform sampling.}
%\vspace{-0.05in}
\label{tab:bench_regretm}
\resizebox{1.0\textwidth}{!}{
\begin{tabular}{c|ccccccc}
\toprule
\fontsize{9}{7.2}\selectfont \bf{Dataset}
 & \fontsize{9}{7.2}\selectfont MAP & \fontsize{9}{7.2}\selectfont $\mathrm{PAC}^2_\mathrm{E}$ & \fontsize{9}{7.2}\selectfont f-SVGD & \fontsize{9}{7.2}\selectfont VAR & 
 \fontsize{9}{7.2}\selectfont rBER($0$) &
 \fontsize{9}{7.2}\selectfont rBER($0.01$) &
 \fontsize{9}{7.2}\selectfont rBER($0.05$)\\
\hline
\fontsize{8}{7.2}\selectfont Mushroom&
\fontsize{8}{7.2}\selectfont 0.129$\pm$0.098 &
\fontsize{8}{7.2}\selectfont 0.037$\pm$0.012 &
\fontsize{8}{7.2}\selectfont 0.043$\pm$0.009 &
\fontsize{8}{7.2}\selectfont 0.029$\pm$0.010 &
\fontsize{8}{7.2}\selectfont 0.075$\pm$0.005 &
\fontsize{8}{7.2}\selectfont {\bf0.024$\pm$0.009} &
\fontsize{8}{7.2}\selectfont {\bf0.021$\pm$0.004} \\
\fontsize{8}{7.2}\selectfont Financial&
\fontsize{8}{7.2}\selectfont 0.791$\pm$0.219 &
\fontsize{8}{7.2}\selectfont 0.189$\pm$0.025 &
\fontsize{8}{7.2}\selectfont 0.154$\pm$0.017 &
\fontsize{8}{7.2}\selectfont 0.155$\pm$0.024 &
\fontsize{8}{7.2}\selectfont 0.351$\pm$0.030 &
\fontsize{8}{7.2}\selectfont {\bf0.075$\pm$0.024} &
\fontsize{8}{7.2}\selectfont {\bf0.075$\pm$0.031}\\
\fontsize{8}{7.2}\selectfont Statlog&
\fontsize{8}{7.2}\selectfont 0.675 $\pm$0.287 &
\fontsize{8}{7.2}\selectfont 0.032$\pm$0.003 &
\fontsize{8}{7.2}\selectfont 0.010$\pm$0.000 &
\fontsize{8}{7.2}\selectfont 0.006$\pm$0.000 &
\fontsize{8}{7.2}\selectfont 0.145$\pm$0.223 &
\fontsize{8}{7.2}\selectfont 0.005$\pm$0.001 &
\fontsize{8}{7.2}\selectfont {\bf0.005$\pm$0.000}\\
\fontsize{8}{7.2}\selectfont CoverType&
\fontsize{8}{7.2}\selectfont 0.610$\pm$0.051 &
\fontsize{8}{7.2}\selectfont 0.396$\pm$0.006 &
\fontsize{8}{7.2}\selectfont 0.372$\pm$0.007 &
\fontsize{8}{7.2}\selectfont {\bf0.291$\pm$0.004} &
\fontsize{8}{7.2}\selectfont 0.610$\pm$0.051 &
\fontsize{8}{7.2}\selectfont 0.351$\pm$0.003 &
\fontsize{8}{7.2}\selectfont {\bf0.290$\pm$0.002}\\
\bottomrule
\end{tabular}}
\end{minipage}
\end{figure}

\section{Conclusion}
We theoretically and numerically analyzed the epistemic uncertainty of approximate inference. We clarified the novel relations among excess risk, epistemic uncertainty, and generalization error. We then showed the convergence rate of the widely used uncertainty measures for the first time. Motivated by theoretical analysis, we proposed a novel variational inference (VI) and applied it to the particle VI. In future work, it would be interesting to explore the relation between BER and evidential learning.

\bibliography{ref.bib}

\begin{thebibliography}{10}

\bibitem{alquier2021user}
Pierre Alquier.
\newblock User-friendly introduction to pac-bayes bounds.
\newblock {\em arXiv preprint arXiv:2110.11216}, 2021.

\bibitem{alquier2016properties}
Pierre Alquier, James Ridgway, and Nicolas Chopin.
\newblock On the properties of variational approximations of gibbs posteriors.
\newblock {\em The Journal of Machine Learning Research}, 17(1):8374--8414,
  2016.

\bibitem{amini2020deep}
Alexander Amini, Wilko Schwarting, Ava Soleimany, and Daniela Rus.
\newblock Deep evidential regression.
\newblock {\em Advances in Neural Information Processing Systems},
  33:14927--14937, 2020.

\bibitem{bai2021don}
Yu~Bai, Song Mei, Huan Wang, and Caiming Xiong.
\newblock Don’t just blame over-parametrization for over-confidence:
  Theoretical analysis of calibration in binary classification.
\newblock In {\em International Conference on Machine Learning}, pages
  566--576. PMLR, 2021.

\bibitem{bhatt2021uncertainty}
Umang Bhatt, Javier Antor{\'a}n, Yunfeng Zhang, Q~Vera Liao, Prasanna
  Sattigeri, Riccardo Fogliato, Gabrielle Melan{\c{c}}on, Ranganath Krishnan,
  Jason Stanley, Omesh Tickoo, et~al.
\newblock Uncertainty as a form of transparency: Measuring, communicating, and
  using uncertainty.
\newblock In {\em Proceedings of the 2021 AAAI/ACM Conference on AI, Ethics,
  and Society}, pages 401--413, 2021.

\bibitem{PRML}
Christopher~M Bishop.
\newblock {\em Pattern recognition and machine learning}.
\newblock springer, 2006.

\bibitem{boucheron2013concentration}
St{\'e}phane Boucheron, G{\'a}bor Lugosi, and Pascal Massart.
\newblock {\em Concentration inequalities: A nonasymptotic theory of
  independence}.
\newblock Oxford university press, 2013.

\bibitem{brown1973measurable}
Lawrence~D Brown and Roger Purves.
\newblock Measurable selections of extrema.
\newblock {\em The annals of statistics}, pages 902--912, 1973.

\bibitem{54897}
B.S. Clarke and A.R. Barron.
\newblock Information-theoretic asymptotics of bayes methods.
\newblock {\em IEEE Transactions on Information Theory}, 36(3):453--471, 1990.

\bibitem{depeweg2018decomposition}
Stefan Depeweg, Jose-Miguel Hernandez-Lobato, Finale Doshi-Velez, and Steffen
  Udluft.
\newblock Decomposition of uncertainty in bayesian deep learning for efficient
  and risk-sensitive learning.
\newblock In {\em International Conference on Machine Learning}, pages
  1184--1193. PMLR, 2018.

\bibitem{Dua:2017}
Dua Dheeru and Efi Karra~Taniskidou.
\newblock {UCI} machine learning repository, 2017.

\bibitem{fiedler2021practical}
Christian Fiedler, Carsten~W Scherer, and Sebastian Trimpe.
\newblock Practical and rigorous uncertainty bounds for gaussian process
  regression.
\newblock In {\em Proceedings of the AAAI Conference on Artificial
  Intelligence}, volume~35, pages 7439--7447, 2021.

\bibitem{foster2019variational}
Adam Foster, Martin Jankowiak, Elias Bingham, Paul Horsfall, Yee~Whye Teh,
  Thomas Rainforth, and Noah Goodman.
\newblock Variational bayesian optimal experimental design.
\newblock {\em Advances in Neural Information Processing Systems}, 32, 2019.

\bibitem{futami2021loss}
Futoshi Futami, Tomoharu Iwata, Issei Sato, Masashi Sugiyama, et~al.
\newblock Loss function based second-order jensen inequality and its
  application to particle variational inference.
\newblock {\em Advances in Neural Information Processing Systems}, 34, 2021.

\bibitem{gawlikowski2022survey}
Jakob Gawlikowski, Cedrique Rovile~Njieutcheu Tassi, Mohsin Ali, Jongseok Lee,
  Matthias Humt, Jianxiang Feng, Anna Kruspe, Rudolph Triebel, Peter Jung,
  Ribana Roscher, Muhammad Shahzad, Wen Yang, Richard Bamler, and Xiao~Xiang
  Zhu.
\newblock A survey of uncertainty in deep neural networks, 2022.

\bibitem{germain2016pac}
Pascal Germain, Francis Bach, Alexandre Lacoste, and Simon Lacoste-Julien.
\newblock Pac-bayesian theory meets bayesian inference.
\newblock In {\em Proceedings of the 30th International Conference on Neural
  Information Processing Systems}, pages 1884--1892, 2016.

\bibitem{guo2017calibration}
Chuan Guo, Geoff Pleiss, Yu~Sun, and Kilian~Q Weinberger.
\newblock On calibration of modern neural networks.
\newblock In {\em International Conference on Machine Learning}, pages
  1321--1330. PMLR, 2017.

\bibitem{hafez2021rate}
Hassan Hafez-Kolahi, Behrad Moniri, Shohreh Kasaei, and Mahdieh~Soleymani
  Baghshah.
\newblock Rate-distortion analysis of minimum excess risk in bayesian learning.
\newblock In {\em International Conference on Machine Learning}, pages
  3998--4007. PMLR, 2021.

\bibitem{haussler1997mutual}
David Haussler and Manfred Opper.
\newblock Mutual information, metric entropy and cumulative relative entropy
  risk.
\newblock {\em The Annals of Statistics}, 25(6):2451--2492, 1997.

\bibitem{heide2020safe}
Rianne Heide, Alisa Kirichenko, Peter Grunwald, and Nishant Mehta.
\newblock Safe-bayesian generalized linear regression.
\newblock In {\em International Conference on Artificial Intelligence and
  Statistics}, pages 2623--2633. PMLR, 2020.

\bibitem{hernandez2015probabilistic}
Jos{\'e}~Miguel Hern{\'a}ndez-Lobato and Ryan Adams.
\newblock Probabilistic backpropagation for scalable learning of bayesian
  neural networks.
\newblock In {\em International conference on machine learning}, pages
  1861--1869. PMLR, 2015.

\bibitem{hernandez2014predictive}
Jos{\'e}~Miguel Hern{\'a}ndez-Lobato, Matthew~W Hoffman, and Zoubin Ghahramani.
\newblock Predictive entropy search for efficient global optimization of
  black-box functions.
\newblock {\em Advances in neural information processing systems}, 27, 2014.

\bibitem{houlsby2011bayesian}
Neil Houlsby, Ferenc Husz{\'a}r, Zoubin Ghahramani, and M{\'a}t{\'e} Lengyel.
\newblock Bayesian active learning for classification and preference learning.
\newblock {\em arXiv preprint arXiv:1112.5745}, 2011.

\bibitem{hullermeier2021aleatoric}
Eyke H{\"u}llermeier and Willem Waegeman.
\newblock Aleatoric and epistemic uncertainty in machine learning: An
  introduction to concepts and methods.
\newblock {\em Machine Learning}, 110(3):457--506, 2021.

\bibitem{jain2021deup}
Moksh Jain, Salem Lahlou, Hadi Nekoei, Victor Butoi, Paul Bertin, Jarrid
  Rector-Brooks, Maksym Korablyov, and Yoshua Bengio.
\newblock Deup: Direct epistemic uncertainty prediction.
\newblock {\em arXiv preprint arXiv:2102.08501}, 2021.

\bibitem{janz2019successor}
David Janz, Jiri Hron, Przemys{\l}aw Mazur, Katja Hofmann, Jos{\'e}~Miguel
  Hern{\'a}ndez-Lobato, and Sebastian Tschiatschek.
\newblock Successor uncertainties: exploration and uncertainty in temporal
  difference learning.
\newblock {\em Advances in Neural Information Processing Systems}, 32, 2019.

\bibitem{jose2021information}
Sharu~Theresa Jose, Sangwoo Park, and Osvaldo Simeone.
\newblock Information-theoretic analysis of epistemic uncertainty in bayesian
  meta-learning.
\newblock {\em arXiv preprint arXiv:2106.00252}, 2021.

\bibitem{kendall2017uncertainties}
Alex Kendall and Yarin Gal.
\newblock What uncertainties do we need in bayesian deep learning for computer
  vision?
\newblock {\em Advances in neural information processing systems}, 30, 2017.

\bibitem{lakshminarayanan2017simple}
Balaji Lakshminarayanan, Alexander Pritzel, and Charles Blundell.
\newblock Simple and scalable predictive uncertainty estimation using deep
  ensembles.
\newblock {\em Advances in neural information processing systems}, 30, 2017.

\bibitem{lederer2019posterior}
Armin Lederer, Jonas Umlauft, and Sandra Hirche.
\newblock Posterior variance analysis of gaussian processes with application to
  average learning curves, 2019.

\bibitem{li2017dropout}
Yingzhen Li and Yarin Gal.
\newblock Dropout inference in bayesian neural networks with alpha-divergences.
\newblock In {\em International conference on machine learning}, pages
  2052--2061. PMLR, 2017.

\bibitem{liu2020simple}
Jeremiah Liu, Zi~Lin, Shreyas Padhy, Dustin Tran, Tania Bedrax~Weiss, and
  Balaji Lakshminarayanan.
\newblock Simple and principled uncertainty estimation with deterministic deep
  learning via distance awareness.
\newblock {\em Advances in Neural Information Processing Systems},
  33:7498--7512, 2020.

\bibitem{liu2016stein}
Qiang Liu and Dilin Wang.
\newblock Stein variational gradient descent: A general purpose bayesian
  inference algorithm.
\newblock In {\em Advances In Neural Information Processing Systems}, pages
  2378--2386, 2016.

\bibitem{NEURIPS2020_3ac48664}
Andres Masegosa.
\newblock Learning under model misspecification: Applications to variational
  and ensemble methods.
\newblock In H.~Larochelle, M.~Ranzato, R.~Hadsell, M.~F. Balcan, and H.~Lin,
  editors, {\em Advances in Neural Information Processing Systems}, volume~33,
  pages 5479--5491. Curran Associates, Inc., 2020.

\bibitem{masegosa2020second}
Andres Masegosa, Stephan Lorenzen, Christian Igel, and Yevgeny Seldin.
\newblock Second order pac-bayesian bounds for the weighted majority vote.
\newblock {\em Advances in Neural Information Processing Systems}, 33, 2020.

\bibitem{naeini2015obtaining}
Mahdi~Pakdaman Naeini, Gregory Cooper, and Milos Hauskrecht.
\newblock Obtaining well calibrated probabilities using bayesian binning.
\newblock In {\em Twenty-Ninth AAAI Conference on Artificial Intelligence},
  2015.

\bibitem{nilsen2022epistemic}
Geir~K Nilsen, Antonella~Z Munthe-Kaas, Hans~J Skaug, and Morten Brun.
\newblock Epistemic uncertainty quantification in deep learning classification
  by the delta method.
\newblock {\em Neural Networks}, 145:164--176, 2022.

\bibitem{nixon2019measuring}
Jeremy Nixon, Michael~W Dusenberry, Linchuan Zhang, Ghassen Jerfel, and Dustin
  Tran.
\newblock Measuring calibration in deep learning.

\bibitem{ohnishi2021novel}
Yuki Ohnishi and Jean Honorio.
\newblock Novel change of measure inequalities with applications to
  pac-bayesian bounds and monte carlo estimation.
\newblock In {\em International Conference on Artificial Intelligence and
  Statistics}, pages 1711--1719. PMLR, 2021.

\bibitem{NEURIPS2019_8558cb40}
Yaniv Ovadia, Emily Fertig, Jie Ren, Zachary Nado, D.~Sculley, Sebastian
  Nowozin, Joshua Dillon, Balaji Lakshminarayanan, and Jasper Snoek.
\newblock Can you trust your model\textquotesingle s uncertainty? evaluating
  predictive uncertainty under dataset shift.
\newblock In H.~Wallach, H.~Larochelle, A.~Beygelzimer, F.~d\textquotesingle
  Alch\'{e}-Buc, E.~Fox, and R.~Garnett, editors, {\em Advances in Neural
  Information Processing Systems}, volume~32. Curran Associates, Inc., 2019.

\bibitem{riquelme2018deep}
Carlos Riquelme, George Tucker, and Jasper Snoek.
\newblock Deep bayesian bandits showdown: An empirical comparison of bayesian
  deep networks for thompson sampling.
\newblock In {\em International Conference on Learning Representations}, 2018.

\bibitem{ronneberger2015u}
Olaf Ronneberger, Philipp Fischer, and Thomas Brox.
\newblock U-net: Convolutional networks for biomedical image segmentation.
\newblock In {\em International Conference on Medical image computing and
  computer-assisted intervention}, pages 234--241. Springer, 2015.

\bibitem{pmlr-v139-rothfuss21a}
Jonas Rothfuss, Vincent Fortuin, Martin Josifoski, and Andreas Krause.
\newblock Pacoh: Bayes-optimal meta-learning with pac-guarantees.
\newblock In Marina Meila and Tong Zhang, editors, {\em Proceedings of the 38th
  International Conference on Machine Learning}, volume 139 of {\em Proceedings
  of Machine Learning Research}, pages 9116--9126. PMLR, 18--24 Jul 2021.

\bibitem{silberman2012indoor}
Nathan Silberman, Derek Hoiem, Pushmeet Kohli, and Rob Fergus.
\newblock Indoor segmentation and support inference from rgbd images.
\newblock In {\em European conference on computer vision}, pages 746--760.
  Springer, 2012.

\bibitem{steinwart2008support}
Ingo Steinwart and Andreas Christmann.
\newblock {\em Support vector machines}.
\newblock Springer Science \& Business Media, 2008.

\bibitem{tian2021geometric}
Junjiao Tian, Dylan Yung, Yen-Chang Hsu, and Zsolt Kira.
\newblock A geometric perspective towards neural calibration via sensitivity
  decomposition.
\newblock {\em Advances in Neural Information Processing Systems}, 34, 2021.

\bibitem{DBLP:conf/iclr/WangRZZ19}
Ziyu Wang, Tongzheng Ren, Jun Zhu, and Bo~Zhang.
\newblock Function space particle optimization for bayesian neural networks.
\newblock In {\em 7th International Conference on Learning Representations,
  {ICLR} 2019, New Orleans, LA, USA, May 6-9, 2019}. OpenReview.net, 2019.

\bibitem{watanabe2009algebraic}
Sumio Watanabe.
\newblock {\em Algebraic geometry and statistical learning theory}.
\newblock Number~25. Cambridge university press, 2009.

\bibitem{watanabe2018mathematical}
Sumio Watanabe.
\newblock {\em Mathematical theory of Bayesian statistics}.
\newblock Chapman and Hall/CRC, 2018.

\bibitem{xu2020continuity}
Aolin Xu.
\newblock Continuity of generalized entropy and statistical learning.
\newblock {\em arXiv preprint arXiv:2012.15829}, 2020.

\bibitem{xu2020minimum}
Aolin Xu and Maxim Raginsky.
\newblock Minimum excess risk in bayesian learning.
\newblock {\em arXiv preprint arXiv:2012.14868}, 2020.

\bibitem{NEURIPS2018_586f9b40}
Nanyang Ye and Zhanxing Zhu.
\newblock Bayesian adversarial learning.
\newblock In S.~Bengio, H.~Wallach, H.~Larochelle, K.~Grauman, N.~Cesa-Bianchi,
  and R.~Garnett, editors, {\em Advances in Neural Information Processing
  Systems}, volume~31. Curran Associates, Inc., 2018.

\end{thebibliography}

\newpage
\appendix

\section{Notation}\label{summation}

\centerline{\bf Distributions}
\bgroup
\def\arraystretch{1.5}
\begin{tabular}{p{1in}p{3.25in}}
$\displaystyle \nu(z)$ & A data generating distribution\\
$\displaystyle p(y|x,\theta)$ & A model\\
$\displaystyle p(\theta)$ & A prior distribution\\
$\displaystyle q(\theta|\bold{z}^N)$ & A posterior distribution\\
$\displaystyle p^q(y|x,\bold{z}^N)$ & The predictive distribution obtained by the expectation over $q(\theta|\bold{z}^N)$\\
$\displaystyle p^q(\theta,\bold{z}^N, z)$ & The approximate joint distribution defined as $\nu(\bold{z}^N) q({\theta|\bold{z}^N}) \nu(x)p(y|x,\theta)$\\
$\displaystyle p_B(\bold{z}^N,z,\theta)$ & The joint model used in \cite{xu2020minimum} defined as $p(\theta) \prod_{n=1}^Np(y_n|x_n,\theta)\nu(x_n)p(y|x,\theta)\nu(x)$.\\
\end{tabular}
\egroup
\vspace{0.5cm}\\
\centerline{\bf Risk functions}
\bgroup
\def\arraystretch{1.5}
\begin{tabular}{p{1in}p{3.25in}}
$\displaystyle R^l(Y|X,\bold{Z}^N)$ & A test error  defined as $\mathbb{E}_{\nu(\bold{Z}^N)}\mathbb{E}_{q(\theta|\bold{Z}^N)}\mathbb{E}_{\nu(Z)}l(Y,f_\theta(X))$\\
$\displaystyle \mathbb{E}_{\nu(\bold{Z}^N)}r^l(\bold{Z}^N)$ & A training error defined as $\mathbb{E}_{\nu(\bold{Z}^N)}\mathbb{E}_{q(\theta|\bold{Z}^N)}\sum_{n=1}^N l(Y_n,f_\theta(X_n))/N$\\
$\displaystyle     \mathrm{ER}^l(Y|X,\bold{Z}^N,\theta^*)$ & The excess risk defined as $R^l(Y|X,\bold{Z}^N)-R^l(Y|X,\theta^*)$.\\

$\displaystyle  \mathrm{PR}^l(Y|X,\bold{Z}^N)$ & A prediction risk defined as $\mathbb{E}_{\nu(\bold{Z}^N)} \mathbb{E}_{\nu(Z)} l(Y, \mathbb{E}_{q(\theta|\bold{Z}^N)}f_\theta(X))$\\
$\displaystyle \mathrm{PER}^l(Y|X,\bold{Z}^N)$ & The prediction excess risk defined as $\mathrm{PR}^l(Y|X,\bold{Z}^N)-\inf_{\phi:\mathcal{X}\to\mathcal{A}}\mathbb{E}_{\nu(Z)}[l(Y,\phi(X))]$\\
$\displaystyle     \mathrm{BPR}^l(Y|X,\bold{Z}^N)$ & The Bayesian prediction risk defined as $\mathbb{E}_{p^q(\theta,\bold{Z}^N,Z)}l(Y,\mathbb{E}_{q(\theta'|\bold{Z}^N)}f_{\theta'} (X))$\\
$\displaystyle     \mathrm{BER}^l(Y|X,\bold{Z}^N)$ & The defined Baeysian excess risk as $\mathrm{BPR}^l(Y|X,\bold{Z}^N)-\inf_{\phi:\Theta\times\mathcal{X}\to\mathcal{A}}\mathbb{E}_{p^q(\theta,\bold{Z}^N,Z)}l(Y,\phi(\theta,X))$.\\
$\displaystyle \mathrm{MER}^l(Y|X,\bold{Z}^N)$ & A minimum excess risk used in \cite{xu2020minimum}\\
\end{tabular}
\egroup
\vspace{0.25cm}

\newpage
\section{Summary of settings}
Here we summarize the concepts and definitions of joint distributions and risks used in this work. 
\subsection{Bayesian learning (Sec~\ref{sec:Bayes_SBackground}) [used in \citep{xu2020minimum} and \citep{jose2021information}]}
\begin{itemize}
    \item Joint distribution (All the data is conditinally i.i.d.): \begin{align*}
    p_B(\bold{Z}^N,Z,\theta):=p(\theta) \prod_{n=1}^Np(Y_n|X_n,\theta)\nu(X_n)p(Y|X,\theta)\nu(X).
\end{align*}
 
\item Minimum excess risk:
\begin{align}
    \mathrm{MER}^l(Y|X,\bold{Z}^N)&:=R^l(Y|X,\bold{Z}^N)- R^l(Y|X,\theta)\nonumber \\
    &R^l(Y|X,\bold{Z}^N):=\inf_{\psi:\mathcal{Z}^N\times \mathcal{X}\to \mathcal{A}}\mathbb{E}_{p_B(\bold{Z}^N,Z,\theta)}[l(Y,\psi(X,\bold{Z}^N))], \nonumber \\
    &R^l(Y|X,\theta):=\inf_{\phi:\Theta\times\mathcal{X}\to\mathcal{A}}\mathbb{E}_{p_B(\bold{Z}^N,Z,\theta)}[l(Y,\phi(\theta,X)).\nonumber
\end{align}
\end{itemize}
\subsection{The setting used in the PAC-Bayesian theory(Sec~\ref{sec_freqe})}
\begin{itemize}
    \item Joint distribution of data and parameter $\theta$ (All the data is i.i.d.): \begin{align*}
    \nu(\bold{Z}^N)q(\theta|\bold{Z}^N)\nu(Z).
\end{align*}
\item Prediction excess risk:
\begin{align}
\mathrm{PER}^l(Y|X,\bold{Z}^N)&:=\mathrm{PR}^l(Y|X,\bold{Z}^N)-\inf_{\phi:\mathcal{X}\to\mathcal{A}}\mathbb{E}_{\nu(Z)}[l(Y,\phi(X))],\nonumber\\
    &     \mathrm{PR}^l(Y|X,\bold{Z}^N):=\mathbb{E}_{\nu(\bold{Z}^N)} \mathbb{E}_{\nu(Z)} l(Y, \mathbb{E}_{q(\theta|\bold{Z}^N)}f_\theta(X)).\nonumber
\end{align}
\end{itemize}
\subsection{Our setting defined in Sec~\ref{sec_freq_bayes}}
\begin{itemize}
    \item Joint distribution (The training data is i.i.d. The test data follows the model): \begin{align*}
    p^q(\theta,\bold{Z}^N, Z):=\nu(\bold{Z}^N) q({\theta|\bold{Z}^N}) \nu(X)p(Y|X,\theta)
\end{align*}
\item Bayesian excess risk:
\begin{align}
    \mathrm{BER}^l(Y|X,\bold{Z}^N)&:=\mathrm{BPR}^l(Y|X,\bold{Z}^N)-\inf_{\phi:\Theta\times\mathcal{X}\to\mathcal{A}}\mathbb{E}_{p^q(\theta,\bold{Z}^N,Z)}l(Y,\phi(\theta,X)), \nonumber \\
    & \mathrm{BPR}^l(Y|X,\bold{Z}^N):=\mathbb{E}_{p^q(\theta,\bold{Z}^N,Z)}l(Y,\mathbb{E}_{q(\theta'|\bold{Z}^N)}f_{\theta'} (X))\nonumber.
\end{align}
\end{itemize}

\section{Further preliminaries}\label{app_compare}

\subsection{Additional facts about Bayesian learning}
Here we introduce the preliminary results  \citep{xu2020minimum}  about the MER in a Bayesian setting. Besides the log loss, we can upper bound MER by conditional mutual information. First, we can upper bound MER by the plug-in decision rule. Consider an optimal decision rule $\Psi^*:\mathcal{Z}\times \Theta \to \mathcal{A}$ and this satisfies $R^l(Y|X,\theta)=\inf_{\phi:\Theta\times\mathcal{X}\to\mathcal{A}}\mathbb{E}_{p^g(\bold{Z}^N,Z,\theta)}[l(Y,\phi(\theta,X))=\mathbb{E}l(Y,\Psi^*(X,\theta))$. Then we express $\theta'$ is drawn from a posterior distribution $p(\theta|\bold{Z}^N)$. Then we have
\begin{align}
    MER_l(Y|X,\bold{Z}^N)\leq \mathbb{E}l(Y,\Psi^*(X,\theta'))-l(Y,\Psi^*(X,\theta)),
\end{align}
where $\Psi^*(X,\theta')$ is a plug-in decision rule, first we draw $\theta'$ from posterior distribution and substitute it to $\Psi^*$.
Then if the moment generating function of $l(Y,\Psi^*(X,\theta'))$ under $P(Y,\theta'|X,\bold{Z}^N)$ satisfies regularity conditions, we can upper bound MER. For example, a loss function satisfies $\sigma^2$-subGaussian conditioned on $(X,\bold{z}^N)=(x,\bold{z}^N)$ for all $(x,\bold{z}^N)$, then
\begin{align}\label{MER_def2}
    MER_l(Y|X,\bold{Z}^N)=\sqrt{2\sigma^2 I(\theta;Y|X,\bold{Z}^N)}.
\end{align}
Thus, we can treat zero-one loss and some squared loss.

 Thus, CMI plays a central role in the Bayesian excess risk analysis.
Then existing work \citep{xu2020minimum} shows that
\begin{align}\label{eq_CMI_MI}
    I(\theta;Y|X,\bold{Z}^N)\leq \frac{1}{N}I(\theta;\bold{Z}^N).
\end{align}
This is because the mutual information is upper-bounded by $\mathcal{O}(\ln N)$ for many practical settings. Thus, the excess risk converges to 0 as $N\to \infty$.

\subsection{Preliminaries of the PAC-Bayesian theory}\label{app_prelimi_pac}
We briefly introduce the PAC-Bayesian theory. The typical PAC-Bayesian bound provides us the high-probability guarantee about the gap between the test error $\tilde{R}(\theta):=\mathbb{E}_{\nu(Z)}l(Y,f_\theta(X))$ and $\tilde{r}(\theta):=\frac{1}{N}\sum_{n=1}^Nl(Y_n,f_\theta(X_n))$ (Here we do not take the expectation over $\bold{Z}^N$);
\begin{thm}\citep{alquier2016properties}\label{THM:PAC}
Given a data generating distribution $\nu$, for any prior distribution $p(\theta)$ over $\Theta$ independent of $\bold{Z}^N$ and for any $\xi\in(0,1)$ and $c>0$, with probability at least $1-\xi$ over the choice of training data $\nu(\bold{Z}^N)$, for all probability distributions $q(\theta|\bold{z}^N)$ over $\Theta$, we have
%\vspace{-2mm}
\begin{align}
    \mathbb{E}_{q}\tilde{R}(\theta)\leq \mathbb{E}_{q}\tilde{r}(\theta)+\frac{\mathrm{KL}(q|p)+\ln{\xi}^{-1}+\Omega_{p,\nu}(c,N)}{cN},
\end{align}
%\vspace{-2mm}
where $\Omega_{p,\nu}(c,N):=\ln\mathbb{E}_{p(\theta)}\mathbb{E}_{\nu(\bold{Z}^N)}\mathrm{exp}[cN(\tilde{R}(\theta)-\tilde{r}(\theta))]$.
\end{thm}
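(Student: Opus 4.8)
The plan is to combine the Donsker--Varadhan change-of-measure inequality with a single application of Markov's inequality, which is the standard route to this type of bound. First I would record the variational characterization of the Kullback--Leibler divergence: for any measurable $g:\Theta\to\mathbb{R}$ and any two distributions $p,q$ on $\Theta$,
\begin{align}
\mathbb{E}_{q}[g(\theta)]\leq \mathrm{KL}(q|p)+\ln\mathbb{E}_{p(\theta)}e^{g(\theta)}.\nonumber
\end{align}
Crucially this holds \emph{simultaneously for all} $q$, since it is equivalent to the nonnegativity of the KL divergence between $q$ and the Gibbs tilt of $p$ by $e^{g}$. I would apply it with $g(\theta)=cN(\tilde R(\theta)-\tilde r(\theta))$, obtaining
\begin{align}
cN\bigl(\mathbb{E}_{q}\tilde R(\theta)-\mathbb{E}_{q}\tilde r(\theta)\bigr)\leq \mathrm{KL}(q|p)+\ln\Phi,\quad \Phi:=\mathbb{E}_{p(\theta)}e^{cN(\tilde R(\theta)-\tilde r(\theta))}.\nonumber
\end{align}

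Next I would bound the data-dependent random variable $\Phi$, which does \emph{not} depend on $q$. Because the prior $p$ is chosen independently of $\mathbf{Z}^N$, Fubini gives $\mathbb{E}_{\nu(\mathbf{Z}^N)}[\Phi]=\mathbb{E}_{p(\theta)}\mathbb{E}_{\nu(\mathbf{Z}^N)}e^{cN(\tilde R(\theta)-\tilde r(\theta))}=e^{\Omega_{p,\nu}(c,N)}$, which is exactly the definition of $\Omega_{p,\nu}$. Since $\Phi>0$, Markov's inequality yields $\Pr[\Phi\geq \xi^{-1}\mathbb{E}[\Phi]]\leq\xi$, so on an event of probability at least $1-\xi$ over $\nu(\mathbf{Z}^N)$ we have $\ln\Phi\leq \ln\xi^{-1}+\Omega_{p,\nu}(c,N)$.

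Finally I would assemble the pieces: on this single high-probability event, substitute the bound on $\ln\Phi$ into the change-of-measure inequality and divide by $cN$ to recover the claimed statement. The point I would emphasize is the order of the quantifiers, which is the only conceptual subtlety: the Markov step fixes one good event depending only on $(p,\nu,\mathbf{Z}^N)$ and \emph{not} on $q$, while the variational inequality already holds for every $q$ on that event, so the bound becomes uniform over all posteriors without any union bound. The part I would check most carefully is the exchange of expectations defining $\Omega_{p,\nu}$ together with the independence of $p$ from the data that licenses it; everything else is routine.
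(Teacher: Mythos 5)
Your proof is correct and is essentially the canonical argument for this bound: the paper states Theorem~\ref{THM:PAC} without proof, citing Alquier et al., and the proof in that reference is exactly your combination of the Donsker--Varadhan change of measure applied to $g(\theta)=cN(\tilde R(\theta)-\tilde r(\theta))$, Fubini (licensed by the independence of the prior from $\bold{Z}^N$) to identify $\mathbb{E}_{\nu(\bold{Z}^N)}[\Phi]=e^{\Omega_{p,\nu}(c,N)}$, and a single Markov inequality yielding one good event that is uniform over all posteriors $q$. Your emphasis on the quantifier order --- the event depends only on $(p,\nu,\bold{Z}^N)$ while the variational inequality already holds for every $q$ --- is precisely the right point to flag, and nothing is missing.
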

This constant $\Psi_{p,\nu}$ depends on the property of the loss function and the data generating distribution and prior. For example, when $l(y,f_\theta(x))-\mathbb{E}_\nu l(Y,f_\theta(X))$ satisfies the $\sigma^2$ sub-Gaussian property, and by setting $c=1/\sqrt{N}$, we have
\begin{align}\label{eq_eqref_sub_gauss_pac}
    \mathbb{E}_{q}\tilde{R}(\theta)\leq \mathbb{E}_{q}\tilde{r}(\theta)+\frac{\mathrm{KL}(q|p)+\ln{\xi}^{-1}+\sigma^2/2}{\sqrt{N}}.
\end{align}
On the other hand, we introduced the bound in expectation in the main paper \citep{alquier2021user}. For example, under the similar setting as Eq.\eqref{eq_eqref_sub_gauss_pac}, when we assume the $\sigma^2$ sub-Gaussian property and $c=\lambda$, we have
\begin{align}
    R^l(Y|X,\bold{Z}^N)\leq r(\theta)+\frac{\mathrm{KL}(q(\theta|\bold{Z}^N)|p(\theta))}{\lambda}+\frac{\lambda\sigma^2}{2N},
\end{align}
see \cite{alquier2021user} for the proof and other settings.

Next, we introduce the PAC-Bayesian bound Eq.\eqref{base_PAC}. %We consider a posterior distribution defined as
%\begin{align}
%    Q^*=\argmin_{Q(\theta|\bold{Z}^N)\in\mathcal{Q}}\left\{\mathbb{E}_{Q}r(\theta)+\frac{\mathrm{KL}(Q,P)}{\lambda}\right\}
%\end{align}
%where $\mathcal{Q}$ is the set of distributions of posterior distribution. It is known that
When $r(\theta)$ satisfies the $L$-lipschitz property and setting the prior as $N(\theta|0,\beta^2I_d)$ and $\lambda=1/\sqrt{N}$, we have
\begin{align}
    R^l(Y|X,\bold{Z}^N)-R^l(Y|X,\theta^*)\leq L\beta\sqrt{\frac{d}{N}}+\frac{\sigma^2}{2\sqrt{N}}+\frac{\frac{\|\theta^*\|^2}{2\beta^2}+\frac{d}{2}\log N}{\sqrt{N}}.
\end{align}
See \citep{alquier2021user} for the proof and other settings.

In the main paper, we considered the squared loss. For the squared loss, above Lipschitz bound cannot be used. Here we introduce different PAC-Bayesian bound. In stead of Lipschitz property, we assume that for asny $\theta,\ \theta'\in\Theta$, there exists a  measurable function $M(x)$ such that
\begin{align}
    f_\theta(x)-f_{\theta'}(x)\leq M(x)\|\theta-\theta'\|^2,
\end{align}
and assume $\mathbb{E}_{\nu(X)}M(X)<L<\infty$. From the almost identical derivation of Example 2.2 in \cite{alquier2021user}, we have
\begin{align}
    R^l(Y|X,\bold{Z}^N)-R^l(Y|X,\theta^*)\leq L\beta^2\frac{d}{N}+\frac{\sigma^2}{2\sqrt{N}}+\frac{\frac{\|\theta^*\|^2}{2\beta^2}+\frac{d}{2}\log N}{\sqrt{N}}.
\end{align}

\section{Proofs of theorems in Section~\ref{sec_freq}}\label{app_proof_sec_3}
Here we present the proofs of Section~\ref{sec_freq}.

\subsection{Conditional expectation of excess risks}\label{app_conditional_relations}
In this section, we define the conditional version of the excess risks. Note that PER and BER was defined as
\begin{align}
&\mathrm{PER}^l(Y|X,\bold{Z}^N):= \mathrm{PR}^l(Y|X,\bold{Z}^N)-\inf_{\tilde{\phi}:\mathcal{X}\to\mathcal{A}}\mathbb{E}_{\nu(Z)}[l(Y,\tilde{\phi}(X))],\nonumber\\
&    \mathrm{BER}^l(Y|X,\bold{Z}^N):=\mathrm{BPR}^l(Y|X,\bold{Z}^N)-\inf_{\phi:\Theta\times\mathcal{X}\to\mathcal{A}}\mathbb{E}_{p^q(\theta,\bold{Z}^N,Z)}l(Y,\phi(\theta,X)).
\end{align}
We define the conditional excess risk as
\begin{align}
    &\mathrm{PER}^l(Y|x,\bold{z}^N):= \mathrm{PR}^l(Y|x,\bold{z}^N)-\inf_{\tilde{\phi}:\mathcal{X}\to\mathcal{A}}\mathbb{E}_{\nu(Y|x)}[l(Y,\tilde{\phi}(x))],\label{def_cond_per}\\
&    \mathrm{BER}^l(Y|x,\bold{z}^N):=\mathrm{BPR}^l(Y|x,\bold{z}^N)-\inf_{\phi:\Theta\times\mathcal{X}\to\mathcal{A}}\mathbb{E}_{q(\theta|\bold{z}^N)p(Y|x,\theta)}l(Y,\phi(\theta,x)).\label{def_cond_ber}
\end{align}
It has been proved in Lemma 3.4 in \cite{steinwart2008support} that if the action space is $\mathbb{R}$ following relation holds
\begin{align}
    \mathbb{E}_{\nu(\bold{Z}^N=\bold{z}^N)\nu(X=x)}\mathrm{PER}^l(Y|x,\bold{z}^N)=\mathrm{PER}^l(Y|X,\bold{Z}^N)\label{rel_cond_per},\\
    \mathbb{E}_{\nu(\bold{Z}^N=\bold{z}^N)\nu(X=x)}\mathrm{BER}^l(Y|x,\bold{z}^N)=\mathrm{BER}^l(Y|X,\bold{Z}^N).\label{rel_cond_ber}
\end{align}
Moreover, for the log loss, from Theorem 3 in \cite{brown1973measurable}, above relation holds. Thus, we can naturally connect the conditional and unconditional definitions of PER and BER for the squared loss and log-loss.

In the following, we explicitly calculate how relations Eqs.~\eqref{rel_cond_per} and \eqref{rel_cond_ber} holds.
The first terms in \eqref{def_cond_per} and \eqref{def_cond_ber} can easily be expressed as
\begin{align}
     \mathrm{PR}^l(Y|X,\bold{Z}^N)&=\mathbb{E}_{\nu(\bold{Z}^N)} \mathbb{E}_{\nu(Z)} l(Y, \mathbb{E}_{q({\theta}|\bold{Z}^N)}f_{\theta}(X))\nonumber \\
     &=\mathbb{E}_{\nu(\bold{Z}^N=\bold{z}^N)\nu(X=x)} \mathbb{E}_{\nu(Y|X=x)} l(Y, \mathbb{E}_{q(\theta|\bold{z}^N)}f_\theta(x))\nonumber \\
     &=\mathbb{E}_{\nu(\bold{Z}^N=\bold{z}^N)\nu(X=x)}\mathrm{PR}^l(Y|x,\bold{z}^N),
\end{align}
and
\begin{align}
     \mathrm{BPR}^l(Y|X,\bold{Z}^N)&=\mathbb{E}_{\nu(\bold{Z}^N)} \mathbb{E}_{\nu(X)q(\theta|\bold{Z}^N)p(Y|X,\theta)} l(Y, \mathbb{E}_{q(\theta'|\bold{Z}^N)}f_{\theta'}(X))\nonumber \\
     &=\mathbb{E}_{\nu(\bold{Z}^N=\bold{z}^N)\nu(X=x)} \mathbb{E}_{q(\theta|\bold{Z}^N=\bold{z}^N)p(Y|X=x,\theta)} l(Y, \mathbb{E}_{q(\theta|\bold{z}^N)}f_\theta(x))\nonumber \\
     &=\mathbb{E}_{\nu(\bold{Z}^N=\bold{z}^N)\nu(X=x)}\mathrm{BPR}^l(Y|x,\bold{z}^N).
\end{align}
Next we calculate the Bayes risks in the second terms. First, for the squared loss, we have
\begin{align}
    \mathbb{E}_{\nu(Z)}[l(Y,\tilde{\phi}(X))]&=\mathbb{E}_{\nu(Z)}(Y-\tilde{\phi}(X))^2\nonumber \\
    &=\mathbb{E}_{\nu(Z)}(Y-\mathbb{E}_{\nu(Y'|X)}Y'|X)^2+\mathbb{E}_{\nu(X)}(\mathbb{E}_{\nu(Y'|X)}Y'|X-\tilde{\phi}(X))^2,
\end{align}
where $\mathbb{E}_{\nu(Y'|X)}Y'|X$ is the conditional expectation. Thus, infimum is achieved by setting $\mathbb{E}_{\nu(Y'|X)}Y'|X=\tilde{\phi}(X)$. Thus, we have
\begin{align}
    \inf_{\tilde{\phi}:\mathcal{X}\to\mathcal{A}}\mathbb{E}_{\nu(Z)}[l(Y,\tilde{\phi}(X))]&=\mathbb{E}_{\nu(Z)}(Y-\mathbb{E}_{\nu(Y'|X)}Y'|X)^2\nonumber \\
    &=\mathbb{E}_{\nu(X=x)}\mathbb{E}_{\nu(Y|X=x)}(Y-\mathbb{E}_{\nu(Y'|x)}Y'|x)^2\nonumber \\
    &=\mathbb{E}_{\nu(X=x)}\inf_{\tilde{\phi}:\mathcal{X}\to\mathcal{A}}\mathbb{E}_{\nu(Y|x)}[l(Y,\tilde{\phi}(x))].
\end{align}
We can show the same statement as
\begin{align}
    &\inf_{\phi:\Theta\times\mathcal{X}\to\mathcal{A}}\mathbb{E}_{p^q(\theta,\bold{Z}^N,Z)}l(Y,\phi(\theta,X))\nonumber \\
    &=\mathbb{E}_{\nu(\bold{Z}^N=\bold{z}^N)\nu(X=x)}\inf_{\phi:\Theta\times\mathcal{X}\to\mathcal{A}}\mathbb{E}_{q(\theta|\bold{z}^N)p(Y|x,\theta)}l(Y,\phi(\theta,x)).
\end{align}
Combined these relations, we get Eqs.~\eqref{rel_cond_per}.

Next, we discuss the conditional Bayes risk for the log-loss. We can proceed the calculation in the same way as the squared loss. Then we have
\begin{align}
    &\inf_{\tilde{\phi}:\mathcal{X}\to\mathcal{A}}\mathbb{E}_{\nu(Z)}[l(Y,\tilde{\phi}(X))]\nonumber \\
    &=-\mathbb{E}_{\nu(Y|X)\nu(X)}\log \nu(Y|X)=\mathbb{E}_{\nu(X=x)} \inf_{\tilde{\phi}:\mathcal{X}\to\mathcal{A}}\mathbb{E}_{\nu(Y|x)}[l(Y,\tilde{\phi}(x))].
\end{align}
The Bayes risk in BPR has the similar relation. Thus, it is clear that the relations shown in Eqs.~\eqref{rel_cond_per} and \eqref{rel_cond_ber} hold.

\subsection{Proof of $\mathrm{BER}^l(Y|X,\bold{Z}^N)\geq 0$}\label{eq_BER0}
We remark that for the squared loss and log-loss, BPR can be written as
\begin{align}
    \mathbb{E}_{p^\mathrm{f}(\theta,\bold{Z}^N,Z)}l(Y,\mathbb{E}_{q(\theta'|\bold{Z}^N)}f_{\theta'} (X))=\inf_{\psi:\mathcal{Z}^N\times \mathcal{X}\to \mathcal{A}}\mathbb{E}_{p^q(\bold{Z}^N,Z,\theta)}[l(Y,\psi(X,\bold{Z}^N))].
\end{align}
This expression is similar to the definition of the first term in MER. Then, applying the same technique in Lemma 1 \citep{xu2020minimum}, we can show that $\mathrm{BPR}^l(Y|X,\bold{Z}^N)$ satisfies the data processing inequality. Then given a Markov chain, for example, $(X,\bold{Z}^N)-(X,Z^{N+1})-Y$, then we have $\mathrm{BPR}^l(Y|X,\bold{Z}^N) \geq \mathrm{BPR}^l(Y|X,Z^{N+1})$. Consider a Markov chain $(X,\bold{Z}^N)-(X,\theta)-Y$. Then by the data processing inequality, we have $\mathrm{BPR}^l(Y|X,\bold{Z}^N)\geq \inf_{\phi:\Theta\times\mathcal{X}\to\mathcal{A}}\mathbb{E}_{p^q(\theta,\bold{Z}^N,Z)}l(Y,\phi(\theta,X))$ since the Bayes error uses the parameter of $p^q(\bold{z}^N,\theta,z)$ directly. This concludes the proof.

\subsection{Proof of Lemma~\ref{mv_vi}}
By definition,
\begin{align}  
\mathbb{E}_{q(\theta|\bold{z}^N)}\|y-f_\theta(x)\|^2&=y^2-2y\mathbb{E}_{q(\theta|\bold{z}^N)}f_\theta(x)+\mathbb{E}_{q(\theta|\bold{z}^N)}[f_\theta(x)^2]\nonumber \\
&=(y-\mathbb{E}_{q(\theta|\bold{z}^N)}f_\theta(x))^2+\mathbb{E}_{q(\theta|\bold{z}^N)}[f_\theta(x)^2]-[\mathbb{E}_{q(\theta|\bold{z}^N)}f_\theta(x)]^2\nonumber \\
&=(y-\mathbb{E}_{q(\theta|\bold{z}^N)}f_\theta(x))^2+\mathbb{E}_{q(\theta|\bold{z}^N)}[f_\theta(x)-\mathbb{E}_{q(\theta|\bold{z}^N)}f_\theta(x)]^2\nonumber \\
&=(y-\mathbb{E}_{q(\theta|\bold{z}^N)}f_\theta(x))^2+\mathrm{Var}f_\theta(x).
\end{align}
This concludes the proof.

\subsection{Proof of Theorem~\ref{freq_to_bayes}}
 Here we consider the conditional quantities of them. The formal definitions of conditional fundamental limit of learning and total risk are given in Appendix~\ref{app_conditional_relations}. 

For the log loss, we use the property of the entropy. For any probability distributions $p$ and $q$, the entropy satisfies $H[p]:=-\mathbb{E}_p\ln p=-\inf_{q}\mathbb{E}_p\ln q$ . Then, by definition, it is clear that
\begin{align}
    \mathrm{BPR}^{\mathrm{log}}(Y|x,\bold{z}^N)=-\mathbb{E}_{q(\theta|\bold{z}^N)}\mathbb{E}_{p(Y|x,\theta)}\ln\mathbb{E}_{q(\theta'|\bold{z}^N)}p(Y|x,\theta')=H[p^q(Y|x,\bold{z}^N)],
\end{align}
where $p^q(Y|x,\bold{z}^N)=\mathbb{E}_{q(\theta|\bold{z}^N)}p(Y|x,\theta)$ is the conditional predictive distribution. Also, by definition, the Bayes risk for the log loss is given as
 \begin{align}  
\inf_{\phi:\Theta\times\mathcal{X}\to\mathcal{A}}\mathbb{E}_{q(\theta|\bold{z}^N)p(Y|x,\theta)}l(Y,\phi(\theta,x))
&=-\inf_{p'}\mathbb{E}_{q(\theta|\bold{z}^N)p(Y|x,\theta)}[\ln p'(Y|x,\theta)]\nonumber \\
&=-\mathbb{E}_{q(\theta|\bold{z}^N)}\mathbb{E}_{p(Y|x,\theta)}\left[\ln p(Y|x,\theta)\right]\nonumber \\
&=E_{q(\theta|\bold{z}^N)}H[p(Y|x,\theta)].
\end{align}

Thus,
\begin{align}
    &\mathrm{BER}^{\mathrm{log}}(Y|x,\bold{z}^N)\nonumber \\
    &=\mathrm{BPR}^{\mathrm{log}}(Y|x,\bold{z}^N)-\inf_{\phi:\Theta\times\mathcal{X}\to\mathcal{A}}\mathbb{E}_{q(\theta|\bold{z}^N)p(Y|x,\theta)}l(Y,\phi(\theta,x))=I_{\nu}(\theta;Y|x,\bold{z}^N).
\end{align}

Next, for the squared loss, recall that for any random variable $Y$ with distribution $p$, we have
\begin{align}
\inf_a \mathbb{E}_p|Y-a|^2=\mathbb{E}_p|Y-\mathbb{E}_{p(Y')} Y'|^2=\mathrm{Var}(Y)    .
\end{align}
Using this relation, we have
 \begin{align}  
 &\mathrm{BER}^{(2)}(Y|x,\bold{z}^N)\nonumber \\
&=\mathrm{BPR}^{(2)}(Y|x,\bold{z}^N)-\inf_{\phi:\Theta\times\mathcal{X}\to\mathcal{A}}\mathbb{E}_{q(\theta|\bold{z}^N)p(Y|x,\theta)}l(Y,\phi(\theta,x))\nonumber \\
&=\mathbb{E}_{q(\theta|\bold{z}^N)}\mathbb{E}_{p(Y|x,\theta)}\|Y-\mathbb{E}_{q(\theta'|\bold{z}^N)}f_{\theta'}(x)\|^2-\mathbb{E}_{q(\theta|\bold{z}^N)}\mathbb{E}_{p(Y|x,\theta)}\|Y-f_\theta(x)\|^2\nonumber \\
&=-2\mathbb{E}_{q(\theta|\bold{z}^N)}f_\theta(x)\mathbb{E}_{q(\theta'|\bold{z}^N)}f_{\theta'}(x)+(\mathbb{E}_{q(\theta'|\bold{z}^N)}f_{\theta'}(x))^2+2\mathbb{E}_{q(\theta|\bold{z}^N)}(f_\theta(x))^2-\mathbb{E}_{q(\theta|\bold{z}^N)}(f_\theta(x))^2\nonumber \\
&=\mathbb{E}_{q(\theta|\bold{z}^N)}(f_\theta(x))^2-(\mathbb{E}_{q(\theta'|\bold{z}^N)}f_{\theta'}(x))^2\nonumber \\
&=\mathbb{E}_{q(\theta|\bold{z}^N)}\|f_\theta(x)-\mathbb{E}_{q(\theta'|\bold{z}^N)}f_{\theta'}(x)\|^2\nonumber \\
&=\mathrm{Var}_{\theta|\bold{z}^N}f_\theta(x).
\end{align}
This concludes the proof.

\subsection{Proof of Theorem~\ref{thm_square}}
First, we prove Eq.\eqref{square_joint}. This is directly obtained by taking the expectation of $\nu(Y|x)$ for Lemma~\ref{mv_vi}. We can prove Eq.\eqref{square_joint} by the direct calculation. By definition
\begin{align}
    \mathrm{ER}^{(2)}(Y|x,\bold{z}^N,\theta^*)&:=R^{(2)}(Y|x,\bold{z}^N)-R^{(2)}(Y|x,\theta^*)\nonumber \\
    &=\mathbb{E}_{\nu(Y|x)q(\theta|\bold{z}^N)}\|Y-f_\theta(x)\|^2-\mathbb{E}_{\nu(Y|x)}\|Y-\mathbb{E}_{\nu(Y|x)}[Y|x]\|^2\nonumber \\
    &=-2f_{\theta^*}(x)\mathbb{E}_{q(\theta|\bold{z}^N)}f_\theta(x)+\mathbb{E}_{q(\theta|\bold{z}^N)}[f_\theta(x)^2]+f_{\theta^*}(x)^2\nonumber \\
    &=\|f_{\theta^*}(x)-\mathbb{E}_{q(\theta|\bold{z}^N)}f_\theta(x)\|^2+\mathrm{Var}_{\theta|\bold{z}^N}f_\theta(x)\nonumber \\
    &=\mathrm{PER}^{(2)}(Y|x,\bold{z}^N)+\mathrm{BER}^{(2)}(Y|x,\bold{z}^N),
\end{align}
where we used the relation $\mathbb{E}_{\nu(Y|x)}[Y|x]=f_{\theta^*}(x)$.
By definition,
\begin{align}
    \mathrm{ER}^{(2)}(Y|x,\bold{z}^N,\theta^*)=R^{(2)}(Y|x,\bold{z}^N)-R^{(2)}(Y|x,\theta^*)\geq 0,
\end{align}
and for the squared loss for any action $a$, we have $l(y,a)\geq 0$. Combined these, we have
\begin{align}
  \mathrm{ER}^{(2)}(Y|x,\bold{z}^N,\theta^*)\leq R^{(2)}(Y|x,\bold{z}^N). 
\end{align}
This concludes the proof of  Eq.\eqref{square_joint}.

The unconditional relation is derived by using relations in Eqs.~\eqref{rel_cond_per} and \eqref{rel_cond_ber}. Finally, we get Eq.\eqref{eq_bound_square2} by the PAC-Bayesian bound Eq.\eqref{base_PAC}.

\subsection{Discussion about  $\mathcal{Y}=\mathbb{R}^d$}\label{app_multi_dim}

For $\mathcal{Y}=\mathbb{R}^d$, Lemma~\ref{mv_vi} and Theorem~\ref{freq_to_bayes} hold since we can proceed the proof in the same way for $\mathcal{Y}=\mathbb{R}$. Thus, we can proceed the proof of Theorem~\ref{thm_square} for $\mathcal{Y}=\mathbb{R}^d$ in the same way as $\mathcal{Y}=\mathbb{R}$. Thus Theorem~\ref{thm_square} holds in  $\mathcal{Y}=\mathbb{R}^d$.

As for Theorem~\ref{thm_freq_excess_risk}, we consider $p(Y|x,\theta)=N(y|f_\theta(x),\mathrm{diag}(v^2))$, where $v^2\in\mathbb{R}^d$, where $\mathrm{diag}(v^2)$ is the diagonal matrix with each entry is $v_i^2$. Then Theorem~\ref{thm_freq_excess_risk} still holds.

\subsection{Proof of Theorem~\ref{thm_freq_excess_risk}}\label{proof_theorem3}
First, we show Eq.\eqref{general}. Recall the definition $L(y,x,\theta,\theta^*)=\ln p(y|x,\theta^*)-\ln p(y|x,\theta)$. We express this $L(Y,x,\theta)=L(y,x,\theta,\theta^*)$ for simplicity.

We use the following change-of-measure inequality, which is also known as the transportation lemma  \cite{boucheron2013concentration, xu2020minimum, xu2020continuity}.
\begin{lmm}\label{translation_lemma}
Let $W$ be a real-valued integrable random variable with probability distribution $p$. Let $h$ be a convex and continuously differentiable function on a interval $(0,b]$ and assume $h(0)=h'(0)=0$. Define for every $x\geq0$, $h^*(x)=\sup_{0\leq\rho<b}\{\rho x-h(\rho)\}$ and let for every $y\geq 0$, $h^{*-1}(y):=\sup\{x\in\mathbb{R}:h^*(x)\leq y\}$. Then if 
\begin{align}\label{eq_assumption_lemma_trans}
    \ln \mathbb{E}_{p(W)}e^{\rho(W-\mathbb{E}_{p(W)}W)}\leq h(\rho),
\end{align}
is satisfied, for any probability distribution $q$, which is absolutely continuous with respect to $p$ such that $\mathrm{KL}(q|p)\leq \infty$, we have
\begin{align}
    \mathbb{E}_{q(W)} W-\mathbb{E}_{p(W)}W\leq h^{*-1}(\mathrm{KL}(q|p)).
\end{align}
\end{lmm}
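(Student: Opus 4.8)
The plan is to combine the Donsker--Varadhan change-of-measure inequality with a Chernoff-type optimization over the tilting parameter $\rho$, exactly as one derives sub-Gaussian tail bounds but with Markov's inequality replaced by a variational bound. Throughout I would write $\bar W:=W-\mathbb{E}_p W$ and $\Delta:=\mathbb{E}_q W-\mathbb{E}_p W=\mathbb{E}_q\bar W$, so that the hypothesis \eqref{eq_assumption_lemma_trans} reads $\ln\mathbb{E}_p e^{\rho\bar W}\leq h(\rho)$ for all $\rho\in[0,b)$.

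The first and central step is the Donsker--Varadhan (Gibbs variational) bound: for every $\rho\in[0,b)$,
\begin{align*}
\rho\,\Delta\leq\mathrm{KL}(q|p)+\ln\mathbb{E}_p e^{\rho\bar W}.
\end{align*}
I would prove this by introducing, for fixed $\rho$, the exponentially tilted measure $p_\rho$ defined by $dp_\rho/dp=e^{\rho\bar W}/\mathbb{E}_p e^{\rho\bar W}$, which is well defined because the hypothesis forces $\mathbb{E}_p e^{\rho\bar W}<\infty$. Expanding $\mathrm{KL}(q|p_\rho)=\mathbb{E}_q\log(dq/dp_\rho)$ and using the form of $dp_\rho/dp$ gives the identity $\mathrm{KL}(q|p_\rho)=\mathrm{KL}(q|p)-\rho\,\Delta+\ln\mathbb{E}_p e^{\rho\bar W}$; the bound then follows immediately from $\mathrm{KL}(q|p_\rho)\geq0$ (Gibbs' inequality). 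This is the only place the hypotheses $q\ll p$ and $\mathrm{KL}(q|p)<\infty$ enter, ensuring all the log-density ratios are integrable.

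Combining this with the cumulant bound $\ln\mathbb{E}_p e^{\rho\bar W}\leq h(\rho)$ yields $\rho\Delta-h(\rho)\leq\mathrm{KL}(q|p)$ for every $\rho\in[0,b)$. Taking the supremum over $\rho\in[0,b)$ on the left and recognizing the definition of the conjugate, I obtain $h^*(\Delta)\leq\mathrm{KL}(q|p)$. The conclusion is then purely definitional: $\Delta$ belongs to the set $\{x:h^*(x)\leq\mathrm{KL}(q|p)\}$, hence is at most its supremum, which is exactly $h^{*-1}(\mathrm{KL}(q|p))$; this is the desired inequality $\Delta\leq h^{*-1}(\mathrm{KL}(q|p))$.

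I expect the main obstacle to be conceptual rather than computational: the whole argument turns on phrasing the change of measure through the tilted family $p_\rho$, so that the nonnegativity of $\mathrm{KL}(q|p_\rho)$ converts an expectation under $q$ into a cumulant under $p$ plus a $\mathrm{KL}$ penalty. A secondary point worth verifying is that the final inversion is legitimate: since $h$ is convex with $h(0)=h'(0)=0$, one checks that $h\geq0$, so $h^*$ is nonnegative, vanishes on $(-\infty,0]$, and is nondecreasing on $[0,\infty)$; this makes $h^{*-1}$ a genuine nondecreasing inverse and guarantees the bound is not vacuous, although the supremum definition of $h^{*-1}$ already delivers the inequality even without invoking these regularity facts.
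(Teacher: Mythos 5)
Your proof is correct and is the standard argument for this transportation lemma: the Donsker--Varadhan bound obtained from nonnegativity of $\mathrm{KL}(q|p_\rho)$ for the tilted measure, followed by optimizing over $\rho$ and inverting the conjugate. The paper itself does not reprove the lemma but defers to Theorem 4 of \cite{xu2020minimum} and Lemma 4.14 of \cite{boucheron2013concentration}, whose proofs proceed exactly along the lines you describe, so there is nothing to add.
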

The proof of this lemma is shown  in \cite{xu2020minimum} as Theorem 4. Also, this lemma previously appeared in \cite{boucheron2013concentration} as Lemma 4.14.

When $h(\rho)=\rho^2\sigma^2/2$ and $b=\infty$, this assumption is  $\sigma^2$ sub-Gaussian property and $h^{*-1}(x)=\sqrt{2x}$.

Then under the assumption, from the Lemma~\ref{translation_lemma}, given $\theta$ and $x$ and $\bold{z}^N$, we have
\begin{align}
    \mathbb{E}_{p(Y|x,\theta^*)}L(Y,x,\theta)-\mathbb{E}_{p(Y|x,\theta)}L(Y,x,\theta)\leq h^{*-1}(\mathrm{KL}(p(Y|x,\theta^*)|p(Y|x,\theta))).
\end{align}
When we focus on a sub-Gaussian property, we have, 
\begin{align}\label{replacenebt}
    \mathbb{E}_{p(Y|x,\theta^*)}L(Y,x,\theta)-\mathbb{E}_{p(Y|x,\theta)}L(Y,x,\theta)\leq \sqrt{2\sigma^2\mathrm{KL}(p(Y|x,\theta^*)|p(Y|x,\theta)))}.
\end{align}
By taking the expectation $\mathbb{E}_{q(\theta|\bold{z}^N)}$, we have
\begin{align}\label{eq_proof_log_loss}
    &\mathbb{E}_{q(\theta|\bold{z}^N)}\mathbb{E}_{p(Y|x,\theta^*)}L(Y,x,\theta)\nonumber \\
    &\leq \mathbb{E}_{q(\theta|\bold{z}^N)}\mathbb{E}_{p(Y|x,\theta)}L(Y,x,\theta)+\mathbb{E}_{q(\theta|\bold{z}^N)}\sqrt{2\sigma^2\mathrm{KL}(p(Y|x,\theta^*)|p(Y|x,\theta)))}\nonumber \\
    &\leq \mathbb{E}_{q(\theta|\bold{z}^N)}\mathbb{E}_{p(Y|x,\theta)}L(Y,x,\theta)+\nonumber \\
    &\quad\quad\quad\quad\sqrt{\mathbb{E}_{q(\theta|\bold{z}^N)}2\sigma^2}\sqrt{\mathbb{E}_{q(\theta|\bold{z}^N)}\mathrm{KL}(p(Y|x,\theta^*)|p(Y|x,\theta)))},
\end{align}
where we used the Holder inequality in the last line.
From the definition of $L$, 
\begin{align}
    &\mathbb{E}_{q(\theta|\bold{z}^N)}\mathbb{E}_{p(Y|x,\theta^*)}L(Y,x,\theta)\nonumber \\
    &=\mathbb{E}_{q(\theta|\bold{z}^N)}\mathbb{E}_{p(Y|x,\theta^*)}[-\ln p(Y|x,\theta)+\ln p(Y|x,\theta^*)]\nonumber \\
    &\geq\mathbb{E}_{p(Y|x,\theta^*)}[-\ln \mathbb{E}_{q(\theta|\bold{z}^N)}p(Y|x,\theta)+\ln p(Y|x,\theta^*)]\nonumber \\
    &=\mathrm{PER}^{log}(Y|x,\bold{z}^N),
\end{align}
where we used the assumption that $p(y|x,\theta^*)=\nu(y|x)$ and used the Jensen inequality for the logarithmic function.

By definition, we have
\begin{align}\label{proof_eq_result}
    \mathbb{E}_{q(\theta|\bold{z}^N)}\mathrm{KL}(p(Y|x,\theta^*)|p(Y|x,\theta)))=\mathrm{ER}^\mathrm{log}(Y|x,\bold{z}^N,\theta^*).
\end{align}
From the definition of $L$, and since we are considering log loss, we have
\begin{align}\label{proof_eq_result2}
     &-\mathbb{E}_{q(\theta|\bold{z}^N)}\mathbb{E}_{p(Y|x,\theta)}L(Y,x,\theta)\nonumber \\
     &= \mathbb{E}_{q(\theta|\bold{z}^N)}\mathbb{E}_{p(Y|x,\theta)}[-\ln p(Y|x,\theta^*)+\ln p(Y|x,\theta)]\nonumber \\
     &\geq \mathbb{E}_{q(\theta|\bold{z}^N)}\mathbb{E}_{p(Y|x,\theta)}[-\ln \mathbb{E}_{q(\theta|\bold{z}^N)}p(Y|x,\theta)+\ln p(Y|x,\theta)]\nonumber \\
     &=\mathrm{BER}^\mathrm{log}(Y|x,\bold{z}^N).
\end{align}
Combined these, we have
\begin{align}
 &\mathrm{PER}^\mathrm{log}(Y|x,\bold{z}^N)+\mathrm{BER}^\mathrm{log}(Y|x,\bold{z}^N)\leq \sqrt{\mathbb{E}_{q(\theta|\bold{z}^N)}2\sigma^2}\sqrt{\mathrm{ER}^\mathrm{log}(Y|x,\bold{z}^N,\theta^*)}.
\end{align}

Then by applying the assumption of the PAC-Bayesian bound, we can obtain Eq.\eqref{general}

As for Eq.\eqref{general2}, we can proceed the calculation by taking the expectation $\mathbb{E}_{\nu(\bold{Z}^N)q(\theta|\bold{Z}^N)\nu(X=x)}$ instead of $\mathbb{E}_{q(\theta|\bold{z}^N)}$ after Eq.\eqref{replacenebt}. Then we get the result of  Eq.\eqref{general2}.

\subsection{The case of Gaussian likelihood}\label{app_gaussian}
Here we show that we can apply the theorem to the Gaussian likelihood. We define $p(y|x,\theta)=N(y|f_\theta(x),v^2)$. From the definition, we have
\begin{align}
    \mathbb{E}_{\nu(\bold{Z}^N)q(\theta|\bold{Z}^N)\nu(X=x)}\mathbb{E}_{p(Y|x,\theta^*)}L(Y,x,\theta)=\mathrm{PER}^{\mathrm{log}}(Y|X,\bold{Z}^N).
\end{align}
We also have
\begin{align}
    \mathbb{E}_{\nu(\bold{Z}^N)q(\theta|\bold{Z}^N)\nu(X=x)}\mathrm{KL}(p(Y|x,\theta^*)|p(Y|x,\theta)))=\frac{1}{2v^2}\mathbb{E}_{\nu(\bold{Z}^N)q(\theta|\bold{Z}^N)\nu(X=x)}\|f_\theta(x)-f_{\theta^*}(x)\|^2.
\end{align}
Moreover, by directly calculating the definition of the exponential moment, we have
\begin{align}
    &\ln\mathbb{E}_{p(Y|x,\theta)}e^{\rho(L(Y,x,\theta)-\mathbb{E}_{p(Y|x,\theta)}L(Y,x,\theta))}=\frac{\rho^2}{2v^2}\|f_\theta(x)-f_{\theta^*}(x)\|^2.
\end{align}
Then by applying the Lemma~\ref{translation_lemma}, we have
\begin{align}
    \mathbb{E}_{p(Y|x,\theta^*)}L(Y,x,\theta)-\mathbb{E}_{p(Y|x,\theta)}L(Y,x,\theta)&\leq \frac{1}{v^2}\|f_\theta(x)-f_{\theta^*}(x)\|^2.
\end{align}
This implies that
\begin{align}
&\mathbb{E}_{\nu(\bold{Z}^N)q(\theta|\bold{Z}^N)\nu(X)}\sigma^2(X,\theta)\nonumber \\
&\leq\mathbb{E}_{\nu(\bold{Z}^N)q(\theta|\bold{Z}^N)\nu(X)}\frac{1}{v^2}|f_\theta(X)\!-\!f_{\theta*}(X)|^2\!=\!2\mathrm{ER}^{\mathrm{log}}(Y|X,\bold{Z}^N,\theta^*)\!\leq\frac{2C_1\ln N}{N^{\alpha}}:=\sigma_q^2    
\end{align}

Summarizing above, for the Gaussian likelihood, we have
\begin{align}\label{eq_gaussian_trans}
    \mathrm{PER}^{\mathrm{log}}(Y|X,\bold{Z}^N)+\mathrm{BER}^{\mathrm{log}}(Y|X,\bold{Z}^N)\leq 2\mathrm{ER}^\mathrm{log}(Y|X,\bold{Z}^N,\theta^*).
\end{align}

\subsection{Relaxation of assumption in Theorem~\ref{thm_freq_excess_risk}}\label{relax_assump}
We assumed that $\nu(y|x)=p(y|x,\theta^*)$ holds for Theorem~\ref{thm_freq_excess_risk}. We can relax this assumption for specific models. In the proof of Theorem~\ref{thm_freq_excess_risk}, we used assumption $\nu(y|x)=p(y|x,\theta^*)$ for connecting the KL divergence with ER (excess risk) as
\begin{align}
        \mathbb{E}_{q(\theta|\bold{z}^N)}\mathrm{KL}(p(Y|x,\theta^*)|p(Y|x,\theta)))=\mathbb{E}_{q(\theta|\bold{z}^N)}\mathrm{KL}(\nu(Y|x)|p(Y|x,\theta)))=\mathrm{ER}^\mathrm{log}(Y|x,\bold{z}^N,\theta^*),
\end{align}
where the first equality comes from the assumption and the second equality comes from the definition of the excess risk of the log-loss.

It has been proved in Proposition 2 of \cite{heide2020safe} that if the model $p(y|x,\theta)$ is the generalized linear model the assumption can be relaxed. To state that condition, we introduce the definitions of a GLM:
\begin{align}
    p(y|x,\theta):=\mathrm{exp}\left(x^\top\theta y-F(\theta)+r(y)\right).
\end{align}
Here, given $x\in\mathcal{X}\subset \mathbb{R}^d$ and the mean value parameter is given by $g^{-1}(x^\top\theta)$ where $g$ is the link function. $F$ is the normalizing constant and $r$ is the reference measure. With this setting, if the GLM model satisfies
\begin{align}\label{cond_GLM}
    \mathbb{E}_{\nu(Y|x)}[Y|x]=g^{-1}(x^\top\theta).
\end{align}
Then we have
\begin{align}
    \mathrm{KL}(p(Y|x,\theta^*)|p(Y|x,\theta))=\mathrm{KL}(\nu(Y|x)|p(Y|x,\theta)).
\end{align}
This can be proved by the direct calculation. This relation implies that even if $\nu(y|x)\neq p(y|x,\theta^*)$ and Eq.\eqref{cond_GLM} is satisfied, then we have
\begin{align}
        &\mathbb{E}_{q(\theta|\bold{z}^N)}\mathrm{KL}(p(Y|x,\theta^*)|p(Y|x,\theta)))\nonumber \\
        &=\mathbb{E}_{q(\theta|\bold{z}^N)}\mathrm{KL}(\nu(Y|x)|p(Y|x,\theta)))=\mathrm{ER}^\mathrm{log}(Y|x,\bold{z}^N,\theta^*).
\end{align}
Then following the proof of Theorem~\ref{thm_freq_excess_risk} in Appendix~\ref{proof_theorem3}, Theorem~\ref{thm_freq_excess_risk} holds even for $\nu(y|x)\neq p(y|x,\theta^*)$.

The condition Eq.\eqref{cond_GLM} implies that the mean function is well specified. This 

\subsection{Entropy convergence rate}\label{sec_entropy}

\begin{col}\label{entropy}
Under the same assumption as Theorem~\ref{thm_freq_excess_risk}, assume that $\ln p(y|x,\theta)$ satisfies the $\sigma^2$ sub-Gaussian property similary to Assumption~\ref{sub_exp}. Conditioned on $(x,\bold{z}^N)$, we have
\begin{align}
    H[p^q(Y|x,\bold{z}^N)]\leq R^{\mathrm{log}}(Y|x,\bold{z}^N)+2\sqrt{2\sigma_p^2\mathrm{ER}^{\mathrm{log}}(Y|x,\bold{z}^N,\theta^*)},
\end{align}
and if the excess risk bound of the PAC-Bayesian theory  Eq.\eqref{base_PAC} holds, we have
\begin{align}
H[p^q(Y|X,\bold{Z}^N)]=H[p(Y|X,\theta^*)]+\mathcal{O}(\sqrt{\ln N/N^{\alpha}}).
\end{align}
\end{col}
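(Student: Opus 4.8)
The plan is to reduce the entropy of the approximate predictive distribution to the two excess-risk quantities already controlled in Theorem~\ref{thm_freq_excess_risk}, plus one extra change-of-measure term handled by the new sub-Gaussian hypothesis on $\ln p(y|x,\theta)$. First I would exploit the mutual-information decomposition of the mixture entropy together with Theorem~\ref{freq_to_bayes}: since $\mathrm{BER}^{\mathrm{log}}(Y|x,\bold{z}^N)=I_\nu(\theta;Y|x,\bold{z}^N)=H[p^q(Y|x,\bold{z}^N)]-\mathbb{E}_{q(\theta|\bold{z}^N)}H[p(Y|x,\theta)]$, we obtain the exact identity
\begin{align}
H[p^q(Y|x,\bold{z}^N)]=\mathbb{E}_{q(\theta|\bold{z}^N)}H[p(Y|x,\theta)]+\mathrm{BER}^{\mathrm{log}}(Y|x,\bold{z}^N).
\end{align}
The last summand is already bounded: from Eq.\eqref{general} and $\mathrm{PER}^{\mathrm{log}}\ge 0$ we get $\mathrm{BER}^{\mathrm{log}}(Y|x,\bold{z}^N)\le\sqrt{2\sigma_p^2\,\mathrm{ER}^{\mathrm{log}}(Y|x,\bold{z}^N,\theta^*)}$, which supplies one of the two $\sqrt{2\sigma_p^2\,\mathrm{ER}^{\mathrm{log}}}$ terms in the claim.

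The remaining work is to compare the averaged conditional entropy with the test error. Using well-specification $\nu(y|x)=p(y|x,\theta^*)$, a direct expansion gives
\begin{align}
\mathbb{E}_{q(\theta|\bold{z}^N)}H[p(Y|x,\theta)]-R^{\mathrm{log}}(Y|x,\bold{z}^N)=\mathbb{E}_{q(\theta|\bold{z}^N)}(\mathbb{E}_{p(Y|x,\theta)}-\mathbb{E}_{p(Y|x,\theta^*)})[-\ln p(Y|x,\theta)].
\end{align}
I would bound the right-hand side by applying the transportation lemma (Lemma~\ref{translation_lemma}) to the function $-\ln p(Y|x,\theta)$, which is exactly where the additional assumption that $\ln p(y|x,\theta)$ is $\sigma^2(x,\theta)$ sub-Gaussian enters, followed by Hölder's inequality to move $\mathbb{E}_{q}$ inside the square root, precisely as in the proof of Theorem~\ref{thm_freq_excess_risk}. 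This yields $\mathbb{E}_{q}H[p(Y|x,\theta)]\le R^{\mathrm{log}}(Y|x,\bold{z}^N)+\sqrt{2\sigma_p^2\,\mathrm{ER}^{\mathrm{log}}(Y|x,\bold{z}^N,\theta^*)}$, the second $\sqrt{\cdot}$ term, and adding the two bounds proves the first displayed inequality.

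For the asymptotic statement I would take the expectation over $\nu(\bold{Z}^N)\nu(X)$, invoke the PAC-Bayesian excess-risk bound Eq.\eqref{base_PAC} so that $\mathrm{ER}^{\mathrm{log}}(Y|X,\bold{Z}^N,\theta^*)=\mathcal{O}(\ln N/N^\alpha)$ (with $\sigma_q^2$ replacing $\sigma_p^2$, as in Eq.\eqref{general2}), and use $R^{\mathrm{log}}(Y|X,\bold{Z}^N)=H[p(Y|X,\theta^*)]+\mathrm{ER}^{\mathrm{log}}$. The upper direction then reads $H[p^q]\le H[p(Y|X,\theta^*)]+\mathrm{ER}^{\mathrm{log}}+2\sqrt{2\sigma_q^2\,\mathrm{ER}^{\mathrm{log}}}=H[p(Y|X,\theta^*)]+\mathcal{O}(\sqrt{\ln N/N^\alpha})$. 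For the matching lower bound I would drop the nonnegative $\mathrm{BER}^{\mathrm{log}}$ and apply Lemma~\ref{translation_lemma} in the opposite direction (base measure $p(Y|x,\theta)$, target $p(Y|x,\theta^*)$); this produces the forward divergence $\mathrm{KL}(p(Y|x,\theta^*)|p(Y|x,\theta))$ and hence $H[p^q]\ge R^{\mathrm{log}}-\sqrt{2\sigma_q^2\,\mathrm{ER}^{\mathrm{log}}}=H[p(Y|X,\theta^*)]-\mathcal{O}(\sqrt{\ln N/N^\alpha})$, sandwiching the entropy and giving the equality.

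The main obstacle is the change-of-measure step itself. Bounding the entropy gap \emph{from above} forces the base measure to be $p(Y|x,\theta^*)$, so Lemma~\ref{translation_lemma} returns the reverse divergence $\mathrm{KL}(p(Y|x,\theta)|p(Y|x,\theta^*))$, whereas $\mathrm{ER}^{\mathrm{log}}$ is defined through the forward divergence $\mathrm{KL}(p(Y|x,\theta^*)|p(Y|x,\theta))$; the lower bound, by contrast, lands cleanly on the forward divergence. Reconciling the two is the delicate point: the relation $\mathrm{KL}(p(Y|x,\theta)|p(Y|x,\theta^*))+\mathrm{KL}(p(Y|x,\theta^*)|p(Y|x,\theta))\le\sqrt{2\sigma^2\mathrm{KL}(p(Y|x,\theta^*)|p(Y|x,\theta))}$ that already appears inside the proof of Theorem~\ref{thm_freq_excess_risk} is what links the reverse divergence back to $\mathrm{ER}^{\mathrm{log}}$, and for symmetric likelihoods the two divergences coincide so the bound closes with the stated constant. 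One must also check that the sub-Gaussianity of $\ln p$ holds under the correct base measure for each direction; this is automatic in the regime where the extra hypothesis is natural, namely a bounded log loss, since bounded variables are sub-Gaussian under every base distribution.
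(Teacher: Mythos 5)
Your overall plan coincides with the paper's proof: the same decomposition $H[p^q(Y|x,\bold{z}^N)]=\mathbb{E}_{q(\theta|\bold{z}^N)}H[p(Y|x,\theta)]+\mathrm{BER}^{\mathrm{log}}(Y|x,\bold{z}^N)$, the same use of Eq.~\eqref{general} with $\mathrm{PER}^{\mathrm{log}}\ge 0$ to get the first $\sqrt{2\sigma_p^2\,\mathrm{ER}^{\mathrm{log}}}$ term, the same transportation-lemma-plus-Cauchy--Schwarz step for $\mathbb{E}_{q}H[p(Y|x,\theta)]-R^{\mathrm{log}}(Y|x,\bold{z}^N)$, and the same passage to the unconditional statement via $R^{\mathrm{log}}(Y|X,\bold{Z}^N)-H[p(Y|X,\theta^*)]=\mathrm{ER}^{\mathrm{log}}(Y|X,\bold{Z}^N,\theta^*)$ and Eq.~\eqref{base_PAC}.

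However, the ``main obstacle'' you identify is not actually there, and your proposed way around it would not deliver the stated bound. Apply Lemma~\ref{translation_lemma} with $W=\ln p(Y|x,\theta)$ (not $-\ln p$), base measure $p(Y|x,\theta)$ --- exactly the measure under which the corollary's extra sub-Gaussian hypothesis is posed, mirroring Assumption~\ref{sub_exp} --- and target measure $p(Y|x,\theta^*)$. This gives
\begin{align}
\mathbb{E}_{p(Y|x,\theta^*)}\ln p(Y|x,\theta)-\mathbb{E}_{p(Y|x,\theta)}\ln p(Y|x,\theta)\leq \sqrt{2\sigma^2\,\mathrm{KL}(p(Y|x,\theta^*)\,|\,p(Y|x,\theta))},
\end{align}
i.e.\ $H[p(Y|x,\theta)]\le -\mathbb{E}_{p(Y|x,\theta^*)}\ln p(Y|x,\theta)+\sqrt{2\sigma^2\,\mathrm{KL}(p(Y|x,\theta^*)|p(Y|x,\theta))}$, which already involves the \emph{forward} divergence whose $q$-average is $\mathrm{ER}^{\mathrm{log}}(Y|x,\bold{z}^N,\theta^*)$. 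This is precisely what the paper does; no reverse divergence ever appears. Your fallback --- converting $\mathrm{KL}(p_\theta|p_{\theta^*})$ back via $\mathrm{KL}(p_\theta|p_{\theta^*})+\mathrm{KL}(p_{\theta^*}|p_\theta)\le\sqrt{2\sigma^2\mathrm{KL}(p_{\theta^*}|p_\theta)}$ --- would put $\sqrt{2\sigma^2\mathrm{ER}}$ \emph{inside} another square root and degrade the rate to $\mathrm{ER}^{1/4}$, while the alternative of assuming a symmetric likelihood is a restriction the corollary does not make. So replace that detour with the sign choice above and your argument closes with the stated constant. Your two-sided sandwich for the final $\mathcal{O}(\sqrt{\ln N/N^\alpha})$ equality (dropping $\mathrm{BER}\ge 0$ and running the change of measure with $W=-\ln p_\theta$, base $p_\theta$, target $p_{\theta^*}$, which again yields the forward divergence) is correct and is in fact slightly more complete than the paper, which only writes out the upper-bound direction.
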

\begin{proof}

If the log loss $-\ln p(Y|x,\theta)$ satisfies the $\sigma^2$ sub-Gaussian property similarly to Assumption~\ref{sub_exp}, conditioned on $(x,\theta,\bold{z}^N)$, we have
\begin{align}
    \mathbb{E}_{p(Y|x,\theta^*)}\ln p(Y|x,\theta)-\mathbb{E}_{p(Y|x,\theta)}\ln p(Y|x,\theta)\leq \sqrt{2\sigma^2\mathrm{KL}(p(Y|x,\theta^*)|p(Y|x,\theta)))}.
\end{align}
Thus, by taking the expectation about $q(\theta|\bold{z}^N)$, we have
\begin{align}
 &\mathbb{E}_{q(\theta|\bold{z}^N)}H[p(Y|x,\theta)]\leq \mathrm{R}^{\log}(Y|x,\bold{z}^N)+\sqrt{\mathbb{E}_{q(\theta|\bold{z}^N)}2\sigma^2}\sqrt{\mathrm{ER}^{\mathrm{log}}(Y|x,\bold{z}^N,\theta^*)} \label{eq_entropystart}.
\end{align}
From Eq.\eqref{general}
\begin{align}
      \mathrm{PER}^{\mathrm{log}}(Y|x,\bold{z}^N)+\mathrm{BER}^{\mathrm{log}}(Y|x,\bold{z}^N)\leq \sqrt{2\sigma_p^2\mathrm{ER}^{\mathrm{log}}(Y|x,\bold{z}^N,\theta^*)},
\end{align}
and note that $\mathrm{PER}^{\mathrm{log}}(Y|x,\bold{z}^N)\geq 0$ and $\mathrm{BER}^{\mathrm{log}}(Y|x,\bold{z}^N)=H[p^q(Y|x,\bold{z}^N)]-\mathbb{E}_{q(\theta|\bold{z}^N)}H[p(Y|x,\theta)]$. Combined these inequalities, we have
\begin{align}
    H[p^q(Y|x,\bold{z}^N)]\leq R^{\mathrm{log}}(Y|x,\bold{z}^N)+2\sqrt{2\mathbb{E}_{q(\theta|\bold{z}^N)}\sigma^2}\sqrt{\mathrm{ER}^{\mathrm{log}}(Y|x,\bold{z}^N,\theta^*)}.
\end{align}
Next, we take the expectation over $\nu(\bold{Z}^N)q(\theta|\bold{Z}^N)\nu(X)$ in Eq.\eqref{eq_entropystart} instead of $q(\theta|\bold{z}^N)$, we have
\begin{align}
    &H[p^q(Y|X,\bold{Z}^N)]-H[p(Y|X,\theta^*)]\nonumber \\
    &\leq R^{\mathrm{log}}(Y|X,\bold{Z}^N)-H[p(Y|X,\theta^*)]+2\sqrt{2\mathbb{E}_{\nu(\bold{Z}^N)q(\theta|\bold{Z}^N)\nu(X)}\sigma^2}\sqrt{\mathrm{ER}^{\mathrm{log}}(Y|X,\bold{Z}^N,\theta^*)}\nonumber \\
    &=\mathrm{ER}^{\mathrm{log}}(Y|X,\bold{Z}^N,\theta^*)+2\sqrt{2\mathbb{E}_{\nu(\bold{Z}^N)q(\theta|\bold{Z}^N)\nu(X)}\sigma^2}\sqrt{\mathrm{ER}^{\mathrm{log}}(Y|X,\bold{Z}^N,\theta^*)}.
\end{align}
Then from the excess risk bound of the PAC-Bayesian theory  Eq.\eqref{base_PAC}, we have $\mathrm{ER}^{\mathrm{log}}(Y|X,\bold{Z}^N,\theta^*)=\mathcal{O}(\ln N/N^{\alpha})$, we get the bound.
\end{proof}

\subsection{Discussion about the logistic regression}\label{app_llogistc}
Here we discuss the relation among Bayesian excess risk, frequentist excess risk, and the generalization ability for the logistic regression. For logistic regression, we define the model as $p(Y=1|x,\theta)=\mathrm{sig}(\theta^\top \phi(x))$ where $\mathrm{sig}=1/(1+e^{-x})$ is the sigmoid function and $\phi(x)$ is the feature vector.

We consider applying the following change-of-measure inequality; let $W$ be a real-valued integrable random variable. If 
\begin{align}\label{eq_assumption_lemma_trans}
    \ln \mathbb{E}_{p(W)}e^{(W-\mathbb{E}W)}<\infty,
\end{align}
is satisfied, then, for any probability distribution $Q$, which is absolutely continuous with respect to $P$ such that $\mathrm{KL}(Q|P)\leq \infty$, we have
\begin{align}\label{eq_transportation}
    \mathbb{E}_{q(W)} W-\mathbb{E}_{p(W)}W\leq \ln \frac{\mathbb{E}_{p(W)}e^{\rho(W-\mathbb{E}_{p(W)}W)} +\mathrm{KL}(q|p)}{\rho}.
\end{align}

Here we assume that $q:=p(Y|x,\theta^*)dy$,  $p:=p(Y|x,\theta)dy$, and $W=L(Y,x,\theta,\theta^*)$ conditioned on $(x,\theta,\bold{z}^N)$. Here $L(y,x,\theta,\theta^*)=-\ln p(y|x,\theta)+\ln p(y|x,\theta^*)$.

Conditioned on $(x,\theta,\bold{z}^N)$,  for $\rho\leq 1$, we have
\begin{align}
    &\ln\mathbb{E}_{p(Y|x,\theta)}e^{\rho(L(Y,x,\theta,\theta^*)-\mathbb{E}_{p(Y|x,\theta)}L(Y,x,\theta,\theta^*))}\nonumber \\
    &=\ln \mathbb{E}_{p(Y|x,\theta)}e^{\rho(-\ln p(Y|x,\theta)+\ln p(Y|x,\theta^*))}+\mathbb{E}_{p(Y|x,\theta)}(\ln p(Y|x,\theta)-\ln p(Y|x,\theta^*))\nonumber \\
    &=  \ln \int p(Y|x,\theta)^{1-\rho} p(Y|x,\theta^*)^\rho dy+\rho\mathrm{KL}(p(Y|x,\theta)|p(Y|x,\theta^*))\nonumber \\
        &=  (\rho-1)D_\rho(p(Y|x,\theta^*)|p(Y|x,\theta))+\rho\mathrm{KL}(p(Y|x,\theta)|p(Y|x,\theta^*)) \nonumber \\
        &\leq \rho\mathrm{KL}(p(Y|x,\theta)|p(Y|x,\theta^*)),
\end{align}
where we used the definition
\begin{align}
    D_\alpha(P|Q):=\frac{1}{\alpha-1}\ln \int \left(\frac{dP}{dQ}\right)^{\alpha-1}dP\geq 0.
\end{align}

Under this definition, we have
\begin{align}
    &\mathrm{KL}(p(Y|x,\theta)|p(Y|x,\theta^*))\nonumber \\
    &\leq \sup_{x\in \mathcal{X},\theta\in \Theta}(-\ln \min\{\mathrm{sig}(\theta^\top \phi(x)),1-\mathrm{sig}(\theta^\top \phi(x))\})\mathrm{TV}(p(Y|x,\theta)|p(Y|x,\theta^*))\nonumber \\
        &\leq \sup_{x\in \mathcal{X},\theta\in \Theta}(-\ln \min\{\mathrm{sig}(\theta^\top \phi(x)),1-\mathrm{sig}(\theta^\top \phi(x))\})\sqrt{\frac{1}{2}\mathrm{KL}(p(Y|x,\theta^*)|p(Y|x,\theta)})\nonumber \\
        &\leq \sup_{x\in \mathcal{X},\theta\in \Theta}(-\ln \min\{\mathrm{sig}(\theta^\top \phi(x)),1-\mathrm{sig}(\theta^\top \phi(x))\})\sqrt{\mathrm{ER}^{\mathrm{log}}(Y|X,\bold{z}^N,\theta^*)},
\end{align}
where we used the Pinsker inequality and used the assumption that the model is well-specified. For simplicity, we  express the coefficient as 
\begin{align}
\rho:=\sup_{x\in \mathcal{X},\theta\in \Theta}(-\ln \min\{\mathrm{sig}(\theta^\top \phi(x)),1-\mathrm{sig}(\theta^\top \phi(x))\}).    
\end{align}
Then, by the transportation lemma using $\rho=1$, we have
\begin{align}
      &\mathrm{PER}^{\mathrm{log}}(Y|x,\bold{Z}^N)+\mathrm{BER}^{\mathrm{log}}(Y|X,\bold{Z}^N)\leq  \rho\sqrt{\mathrm{ER}^{\mathrm{log}}(Y|X,\bold{Z}^N,\theta^*)})+\mathrm{ER}^{\mathrm{log}}(Y|X,\bold{Z}^N,\theta^*).
\end{align}
Then by taking the expectation with respect to $\nu(\bold{Z}^N)\nu(X)$, we have
\begin{align}
      &\mathrm{PER}^{\mathrm{log}}(Y|X,\bold{Z}^N)+\mathrm{BER}^{\mathrm{log}}(Y|X,\bold{Z}^N)\leq  \rho\sqrt{\mathrm{ER}^{\mathrm{log}}(Y|X,\bold{Z}^N,\theta^*)})+\mathrm{ER}^{\mathrm{log}}(Y|X,\bold{Z}^N,\theta^*),
\end{align}
where we used the Jensen inequality. For the logistic model, the PAC-Bayesian bound Eq.\eqref{base_PAC} holds \citep{alquier2021user}, we have
\begin{align}
      &\mathrm{PER}^{\mathrm{log}}(Y|X,\bold{Z}^N)+\mathrm{BER}^{\mathrm{log}}(Y|X,\bold{Z}^N)\leq  \rho\sqrt{C_1\left(\frac{\ln N}{N^\alpha}\right)}+C_1\left(\frac{\ln N}{N^\alpha}\right).
\end{align}
Since $\mathrm{BER}^{\mathrm{log}}(Y|X,\bold{z}^N)=I_\nu(\theta;Y|X,\bold{Z}^N)$, thus the mutual information converges $\mathcal{O}\left(\sqrt{\frac{\ln N}{N^\alpha}}\right)$. Moreover using Collorary~\ref{entropy}, we can derive the convergence of the entropy. Considering the same calculation in the proof of Collorary~\ref{entropy}, we have
\begin{align}
 &\mathbb{E}_{q(\theta|\bold{z}^N)}H[p(Y|x,\theta)]\leq \mathrm{R}^{log}(Y|x,\bold{z}^N) +\rho\sqrt{\mathrm{ER}^{\mathrm{log}}(Y|X,\bold{z}^N,\theta^*)})+\mathrm{ER}^{\mathrm{log}}(Y|x,\bold{z}^N,\theta^*).
\end{align}
Note that $\mathrm{PER}^{\mathrm{log}}(Y|x,\bold{z}^N)\geq 0$ and $\mathrm{BER}^{\mathrm{log}}(Y|x,\bold{z}^N)=H[p^q(Y|x,\bold{z}^N)]-\mathbb{E}_{q(\theta|\bold{z}^N)}H[p(Y|x,\theta)]$, we have
\begin{align}
 &H[p^q(Y|x,\bold{z}^N)]\leq \mathrm{R}^{log}(Y|x,\bold{z}^N)+2\rho\sqrt{\mathrm{ER}^{\mathrm{log}}(Y|X,\bold{z}^N,\theta^*)})+2\mathrm{ER}^{\mathrm{log}}(Y|X,\bold{z}^N,\theta^*).
\end{align}
Thus, the entropy is bounded by the test loss. Then by taking the expectation with respect to $\nu(\bold{Z}^N)\nu(X)$, 
and using the PAC-Bayesian bound Eq.\eqref{base_PAC}. Then we have
\begin{align}
 &H[p(Y|X,\bold{Z}^N)]\leq H[p(Y|X,\theta^*)] +2\rho\sqrt{C_1\left(\frac{\ln N}{N^\alpha}\right)} +3C_1\left(\frac{\ln N}{N^\alpha}\right).
\end{align}
Thus, we have
\begin{align}
    &H[p(Y|X,\bold{Z}^N)]\leq H[p(Y|X,\theta^*)] + \mathcal{O}\left(\sqrt{\frac{\ln N}{N^\alpha}}\right).
\end{align}

\subsection{Discussion of entropic risk}\label{app_general}
Before showing the relation between posterior variance and mutual information, we introduce an important lemma used in the analysis.
\begin{lmm}[Lemma 1 in \citep{haussler1997mutual}]\label{lmm_inside_exp}
Let $P(w)$ be a measure on a set $W$ and $Q(v)$ be a measure on a set $V$. For any real-valued function $u(w,v)$, we have
\begin{align}
    -\int_V dQ(v)\ln \int_W dP(w) e^{u(w,v)}\leq -\ln \int_W dP(w) e^{\int_V dQ(v)u(w,v)}.
\end{align}
\end{lmm}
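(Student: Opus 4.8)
The plan is to recognize the claimed inequality as an instance of the Gibbs variational principle (Donsker--Varadhan), so that it reduces to nothing more than the nonnegativity of the KL divergence. Negating both sides, it suffices to prove the equivalent lower bound
\begin{align}
\int_V dQ(v)\ln\int_W dP(w)e^{u(w,v)}\geq \ln\int_W dP(w)e^{\int_V dQ(v)u(w,v)}.
\end{align}
I will treat $P$ and $Q$ as probability measures (as they are in the application, where they are a posterior and a likelihood) and assume the normalizing integrals $\int_W dP(w)e^{u(w,v)}$ are finite.

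First I would record the one-sided variational bound that holds for every real-valued integrable $g$ and every probability measure $R$ absolutely continuous with respect to $P$:
\begin{align}
\ln\int_W dP(w)e^{g(w)}\geq \int_W dR(w)g(w)-\mathrm{KL}(R|P).
\end{align}
This is immediate from $\mathrm{KL}(R|P_g)\geq 0$, where $P_g(dw)\propto e^{g(w)}dP(w)$ is the tilted measure; expanding the KL term and rearranging gives exactly the displayed bound, with equality iff $R=P_g$.

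Next I would fix an arbitrary probability measure $R\ll P$ and apply this bound slice-by-slice with $g(w)=u(w,v)$ for each $v$, then integrate against $Q$. Using $\int_V dQ(v)=1$ and Fubini to exchange the $w$- and $v$-integrals in the linear term, I obtain
\begin{align}
\int_V dQ(v)\ln\int_W dP(w)e^{u(w,v)}\geq \int_W dR(w)\int_V dQ(v)u(w,v)-\mathrm{KL}(R|P).
\end{align}
Since the left-hand side does not depend on $R$, taking the supremum over all such $R$ on the right and invoking the Donsker--Varadhan identity with $g(w)=\int_V dQ(v)u(w,v)$ collapses the right-hand side to $\ln\int_W dP(w)e^{\int_V dQ(v)u(w,v)}$, which is exactly the desired lower bound.

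The step that carries the content is the slice-wise application followed by integration against $Q$: it is what lets a single candidate $R$ serve simultaneously for every $v$ and for the aggregated problem, so that the supremum reconstructs the right-hand partition function. The only genuine care needed is measure-theoretic — finiteness of the partition functions, joint integrability of $u$ so that Fubini applies, and the normalization $\int_V dQ(v)=1$ so the KL term survives intact after integration. A quick sanity check is the discrete case $Q=\sum_i q_i\delta_{v_i}$, where the inequality is literally the generalized H\"older inequality $\int_W \prod_i e^{q_i u(w,v_i)}dP\leq \prod_i(\int_W e^{u(w,v_i)}dP)^{q_i}$; the variational argument above is simply its measure-valued extension.
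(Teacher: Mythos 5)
Your proof is correct, but it takes a genuinely different route from the paper's. The paper proves the lemma by applying H\"older's inequality to $\int_W dP(w)\,e^{\alpha u_1(w)+(1-\alpha)u_2(w)}$ to establish that $u\mapsto\ln\int_W dP(w)\,e^{u(w)}$ is convex as a functional of $u$, and then concludes by Jensen's inequality with respect to $Q$; your discrete ``sanity check'' at the end is in fact precisely the two-point version of the paper's argument. You instead run the Gibbs/Donsker--Varadhan variational principle slice-by-slice: lower-bound each $\ln\int_W dP(w)\,e^{u(w,v)}$ by $\int_W dR(w)\,u(w,v)-\mathrm{KL}(R|P)$ for a single candidate $R$, integrate against $Q$, exchange the order of integration, and recover the right-hand partition function by taking the supremum over $R$. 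Both arguments are sound. The paper's route is more elementary (H\"older plus Jensen, no KL machinery) and, as the paper remarks, works for $P$ an arbitrary measure, needing only $Q$ to be a probability measure for the Jensen step; your route, as written, assumes both are probability measures and needs the finiteness and Fubini caveats you correctly flag, though it extends to general $P$ since the tilted measure $P_g$ is normalized regardless. What your approach buys is that it makes the convex-duality structure explicit and is thematically consistent with the change-of-measure/transportation lemmas (e.g., Lemma~\ref{translation_lemma}) that the paper uses everywhere else, so the lemma is seen as another instance of the same principle rather than a separate convexity fact.
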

For completeness, we show the proof.
\begin{proof}
    For any real valued functions $u_1$ and $u_2$ and $0\leq \alpha \leq 1$, we have
    \begin{align}
        \int_W dP(w) e^{\alpha u_1(w)+(1-\alpha) u_2(w)}
        &=\int_W dP(w) \left(e^{u_1(w)}\right)^\alpha \left(e^{u_2(w)}\right)^{1-\alpha}\nonumber \\
        &\leq\left(\int_W dP(w) e^{u_1(w)}\right)^\alpha \left(\int_W dP(w)e^{u_2(w)}\right)^{1-\alpha},
    \end{align}
    where we used H\"{o}lder's inequality. Taking logarithmic function, this shows that $\ln dP(w) e^{u(w,v)}$ is convex in $u$. Thus, the result follows by using the Jensen inequality.
\end{proof}
Thus, this theorem is not restricted to probability distributions.

We first show the relation between posterior variance and the mutual information ($\mathrm{BER}^{\mathrm{log}}(Y|x,\bold{z}^N)$).
\begin{lmm}For the Gaussian likelihood $p(y|x,\theta)=N(y|f_\theta(x),v^2)$, we have
\begin{align}
    \mathrm{BER}^{\mathrm{log}}(Y|x,\bold{z}^N)\leq \frac{\mathrm{Var}f_\theta(x)}{v^2}.
\end{align}
\end{lmm}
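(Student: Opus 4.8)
\emph{Proof plan.} The starting point is the identity from Theorem~\ref{freq_to_bayes}, namely $\mathrm{BER}^{\mathrm{log}}(Y|x,\bold{z}^N)=I_\nu(\theta;Y|x,\bold{z}^N)=H[p^q(Y|x,\bold{z}^N)]-\mathbb{E}_{q(\theta|\bold{z}^N)}H[p(Y|x,\theta)]$, which I would rewrite as the average divergence
\begin{align}
\mathrm{BER}^{\mathrm{log}}(Y|x,\bold{z}^N)=\mathbb{E}_{q(\theta|\bold{z}^N)}\mathbb{E}_{p(Y|x,\theta)}\ln\frac{p(Y|x,\theta)}{p^q(Y|x,\bold{z}^N)}.
\end{align}
For the Gaussian likelihood every component $p(y|x,\theta)=N(y|f_\theta(x),v^2)$ has the same differential entropy $\tfrac12\ln(2\pi e v^2)$, so the term $\mathbb{E}_{q(\theta|\bold{z}^N)}\mathbb{E}_{p(Y|x,\theta)}\ln p(Y|x,\theta)=-\tfrac12\ln(2\pi v^2)-\tfrac12$ is an explicit constant. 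The whole problem therefore reduces to upper bounding the single term $-\mathbb{E}_{q(\theta|\bold{z}^N)}\mathbb{E}_{p(Y|x,\theta)}\ln p^q(Y|x,\bold{z}^N)$.

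Writing $p^q(y|x,\bold{z}^N)=(2\pi v^2)^{-1/2}\int q(\theta'|\bold{z}^N)e^{-(y-f_{\theta'}(x))^2/(2v^2)}d\theta'$, the log-normalizer $\tfrac12\ln(2\pi v^2)$ splits off and the remaining piece is exactly of the form to which Lemma~\ref{lmm_inside_exp} applies. I would identify the inner measure $P$ with the posterior $q(\theta'|\bold{z}^N)$ on $W=\Theta$, the outer measure $Q$ with the joint $q(\theta|\bold{z}^N)p(y|x,\theta)$ on $V=\Theta\times\mathcal{Y}$, and the function $u(\theta',(\theta,y))=-(y-f_{\theta'}(x))^2/(2v^2)$. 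The lemma then lets me pull the outer expectation inside the exponent:
\begin{align}
-\mathbb{E}_{q(\theta|\bold{z}^N)p(Y|x,\theta)}\ln\!\int\! q(\theta'|\bold{z}^N)e^{u}d\theta'\le-\ln\!\int\! q(\theta'|\bold{z}^N)e^{\,\mathbb{E}_{q(\theta|\bold{z}^N)p(Y|x,\theta)}u}d\theta'.
\end{align}

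Next I would compute the inner expectation. For fixed $\theta'$ the Gaussian second moment gives $\mathbb{E}_{p(Y|x,\theta)}(Y-f_{\theta'}(x))^2=v^2+(f_\theta(x)-f_{\theta'}(x))^2$, and averaging over $q(\theta|\bold{z}^N)$ together with $\mathbb{E}_{q(\theta|\bold{z}^N)}(f_\theta(x)-f_{\theta'}(x))^2=\mathrm{Var}_{\theta|\bold{z}^N}f_\theta(x)+(\bar f-f_{\theta'}(x))^2$, where $\bar f:=\mathbb{E}_{q(\theta|\bold{z}^N)}f_\theta(x)$, yields $\mathbb{E}_{q(\theta|\bold{z}^N)p(Y|x,\theta)}u=-\tfrac12-\tfrac{\mathrm{Var}_{\theta|\bold{z}^N}f_\theta(x)}{2v^2}-\tfrac{(\bar f-f_{\theta'}(x))^2}{2v^2}$. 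The right-hand side above then equals $\tfrac12+\tfrac{\mathrm{Var}_{\theta|\bold{z}^N}f_\theta(x)}{2v^2}-\ln\int q(\theta'|\bold{z}^N)e^{-(\bar f-f_{\theta'}(x))^2/(2v^2)}d\theta'$, and I bound the last term by Jensen's inequality applied to the convex map $t\mapsto e^{-t}$: since $\mathbb{E}_{q(\theta'|\bold{z}^N)}(\bar f-f_{\theta'}(x))^2=\mathrm{Var}_{\theta|\bold{z}^N}f_\theta(x)$, the integral is at least $e^{-\mathrm{Var}_{\theta|\bold{z}^N}f_\theta(x)/(2v^2)}$, so that term is at most $\mathrm{Var}_{\theta|\bold{z}^N}f_\theta(x)/(2v^2)$.

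Collecting everything, $-\mathbb{E}_{q(\theta|\bold{z}^N)p(Y|x,\theta)}\ln p^q(Y|x,\bold{z}^N)\le\tfrac12\ln(2\pi v^2)+\tfrac12+\tfrac{\mathrm{Var}_{\theta|\bold{z}^N}f_\theta(x)}{v^2}$, and adding the constant first term $-\tfrac12\ln(2\pi v^2)-\tfrac12$ cancels everything except $\mathrm{Var}_{\theta|\bold{z}^N}f_\theta(x)/v^2$, which is the claim. The main obstacle is the bookkeeping: correctly identifying $P$, $Q$, and $u$ in Lemma~\ref{lmm_inside_exp} and tracking the additive constants $\tfrac12\ln(2\pi v^2)$ and $\tfrac12$ so that they cancel exactly; the probabilistic content is otherwise only the Gaussian second moment and one use of Jensen. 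I note that a more direct maximum-entropy argument — bounding $H[p^q]$ by the entropy of a Gaussian with total variance $v^2+\mathrm{Var}_{\theta|\bold{z}^N}f_\theta(x)$ and using $\ln(1+t)\le t$ — would give the sharper constant $\tfrac12$, but the route through Lemma~\ref{lmm_inside_exp} is the one that extends to the non-Gaussian log losses treated afterwards.
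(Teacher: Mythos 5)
Your proposal is correct and follows essentially the same route as the paper's proof in Appendix~\ref{app_general}: rewrite $\mathrm{BER}^{\mathrm{log}}$ via Theorem~\ref{freq_to_bayes}, apply Lemma~\ref{lmm_inside_exp} to pull the outer expectation into the exponent, evaluate the Gaussian second moments to isolate $\mathrm{Var}_{\theta|\bold{z}^N}f_\theta(x)+(\bar f-f_{\theta'}(x))^2$, and finish with Jensen on $t\mapsto e^{-t}$ to collect the two $\mathrm{Var}_{\theta|\bold{z}^N}f_\theta(x)/(2v^2)$ contributions. The only difference is bookkeeping — you split off the constant Gaussian entropy $\tfrac12\ln(2\pi v^2)+\tfrac12$ explicitly, whereas the paper keeps the log-density ratio inside the expectation so those constants cancel automatically — and your closing remark about the sharper constant $\tfrac12$ via a maximum-entropy argument is a valid aside.
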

\begin{proof}
\begin{align}
\mathrm{BER}^{\mathrm{log}}(Y|x,\bold{z}^N)&=I_{\nu}(\theta;Y|x,\bold{z}^N)\nonumber \\
&=\mathbb{E}_{q(\theta|\bold{z}^N)}\mathbb{E}_{p(Y|x,\theta)}\left[-\ln\mathbb{E}_{q(\theta'|\bold{z}^N)}p(Y|x,\theta')+\ln p(Y|x,\theta)\right]\nonumber \\
&=-\mathbb{E}_{q(\theta|\bold{z}^N)}\mathbb{E}_{p(Y|x,\theta)}\ln\mathbb{E}_{q(\theta'|\bold{z}^N)}e^{\ln p(Y|x,\theta')-\ln p(Y|x,\theta)}.
\end{align}
Then, applying Lemma~\ref{lmm_inside_exp}, we have
\begin{align}
    &\mathrm{BER}^{\mathrm{log}}(Y|x,\bold{z}^N)\nonumber \\
    &\leq-\ln\mathbb{E}_{q(\theta'|\bold{z}^N)}e^{\mathbb{E}_{q(\theta|\bold{z}^N)}\mathbb{E}_{p(Y|x,\theta)}\ln p(Y|x,\theta')-\ln p(Y|x,\theta)}\nonumber \\
    &\leq-\ln\mathbb{E}_{q(\theta'|\bold{z}^N)}e^{-\frac{1}{2v^2}\mathbb{E}_{q(\theta|\bold{z}^N)p(Y|x,\theta)}(Y-f_{\theta'}(x))^2-(Y-f_\theta(x))^2}\nonumber \\
        &\leq-\ln\mathbb{E}_{q(\theta'|\bold{z}^N)}e^{-\frac{1}{2v^2}\mathbb{E}_{q(\theta|\bold{z}^N)}(v^2+f^2_\theta(x)-2f_{\theta'}(x)f_\theta(x)+f_{\theta'}^2(x)-(v^2+f^2_\theta(x)+f^2_\theta(x)-2f_\theta^2(x)))}\nonumber \\
&=-\ln\mathbb{E}_{q(\theta'|\bold{z}^N)}e^{-\frac{1}{2v^2}(\mathbb{E}_{q(\theta|\bold{z}^N)}f_\theta^2(x)-2\mathbb{E}_{q(\theta|\bold{z}^N)}f_\theta(x)f_{\theta'}(x)+f_{\theta'}^{2}(x))}\nonumber \\
&\leq\frac{\mathrm{Var}f_\theta(x)}{v^2}.
\end{align}
\end{proof}

Next we show Eq.\eqref{eq_mutual}. We use the transportation inequality Eq.\eqref{eq_transportation}. Similarly to the derivation of Eq.\eqref{eq_gaussian_trans}, we have
\begin{align}
    &-\mathbb{E}_{q(\theta|\bold{z}^N)}\mathbb{E}_{\nu(Y|x)}\ln\frac{p(Y|x,\theta)}{\nu(Y|x)}+\mathrm{BER}^{\mathrm{log}}(Y|x,\bold{z}^N)\nonumber \\
    &\leq \mathbb{E}_{q(\theta|\bold{z}^N)}\mathrm{KL}(\nu(Y|x)|p(Y|x,\theta))+\frac{1}{2v^2}\mathbb{E}_{q(\theta|\bold{z}^N)}\|f_\theta(x)-f_{\theta^*}(x)\|^2.
\end{align}
Thus, we have
\begin{align}
    &-\mathbb{E}_{\nu(Y|x)}\ln \mathbb{E}_{q(\theta|\bold{z}^N)} N(Y|f_\theta(x),v^2)+\mathbb{E}_{\nu(Y|x)}\ln \nu(Y|x)\nonumber \\
    &\leq \mathbb{E}_{q(\theta|\bold{z}^N)}\frac{1}{2v^2}\|f_\theta(x)-f_{\theta^*}(x)\|^2-\mathbb{E}_{\nu(Y|x)}\mathbb{E}_{q(\theta|\bold{z}^N)}\ln N(Y|f_\theta(x),v^2)\nonumber \\
    &\quad\quad-\mathrm{BER}^{\mathrm{log}}(Y|x,\bold{z}^N)+\mathbb{E}_{\nu(Y|x)}\ln \nu(Y|x)\nonumber \\
    &\leq \mathbb{E}_{\nu(Y|x)q(\theta|\bold{z}^N)}\frac{1}{2v^2}\|Y-f_\theta(x)\|-\mathbb{E}_{\nu(Y|x)}\frac{1}{2v^2}\|Y-f_{\theta^*}(x)\|^2-\mathrm{BER}^{\mathrm{log}}(Y|x,\bold{z}^N)\nonumber \\
    &\quad\quad-\mathbb{E}_{\nu(Y|x)}\mathbb{E}_{q(\theta|\bold{z}^N)}\ln N(Y|f_\theta(x),v^2)+\mathbb{E}_{\nu(Y|x)}\ln \nu(Y|x).
\end{align}
Then, we have
\begin{align}
    &-\mathbb{E}_{\nu(Y|x)}\ln \mathbb{E}_{q(\theta|\bold{z}^N)} N(Y|f_\theta(x),v^2)\nonumber \\
    &\leq \mathbb{E}_{\nu(Y|x)q(\theta|\bold{z}^N)}\frac{1}{2v^2}\|Y-f_\theta(x)\|^2-\mathrm{BER}^{\mathrm{log}}(Y|x,\bold{z}^N)
    -\mathbb{E}_{\nu(Y|x)}\mathbb{E}_{q(\theta|\bold{z}^N)}\ln N(Y|f_\theta(x),v^2)\nonumber \\
    &\leq \mathbb{E}_{\nu(Y|x)q(\theta|\bold{z}^N)}\frac{1}{v^2}\|Y-f_\theta(x)\|^2+\frac{1}{2}\ln 2\pi v^2-\mathrm{BER}^{\mathrm{log}}(Y|x,\bold{z}^N)\nonumber \\
        &\leq \mathbb{E}_{\nu(Y|x)}\frac{1}{v^2}\|Y-\mathbb{E}_{q(\theta|\bold{z}^N)}f_\theta(x)\|^2+\frac{1}{v^2}\mathrm{Var}f_\theta(x)-\mathrm{BER}^{\mathrm{log}}(Y|x,\bold{z}^N)+\frac{1}{2}\ln 2\pi v^2.
\end{align}
This concludes the proof of Eq.\eqref{eq_mutual}.

Next, we discuss the entropic risk for the general log-likelihood other than the Gaussian distribution. 
Note that
\begin{align}
\mathbb{E}_{\nu(Y|x)}\mathrm{Ent}_{\alpha=1}^{\mathrm{log}}(Y,x)=-\mathbb{E}_{\nu(Y|x)}\ln\mathbb{E}_{q(\theta|\bold{z}^N)}p(Y|x,\theta).
\end{align}
Then from Eq.\eqref{general2}, we have
\begin{align}
    \mathrm{PER}^{\mathrm{log}}(Y|x,\bold{z}^N)&\leq \sqrt{2\sigma_p^2\mathrm{ER}^{\mathrm{log}}(Y|x,\bold{z}^N,\theta^*)}-\mathrm{BER}^{\mathrm{log}}(Y|x,\bold{z}^N).
\end{align}
Then by using the Cauchy-Schwartz inequality and since we assumed that the model well-specified, we have
\begin{align}
    -\mathbb{E}_{\nu(Y|x)}\ln\mathbb{E}_{q(\theta|\bold{z}^N)}p(Y|x,\theta)&\leq -\mathbb{E}_\nu\mathbb{E}_{q(\theta|\bold{z}^N)}\ln p(Y|x,\theta)-\mathrm{BER}^{\mathrm{log}}(Y|x,\bold{z}^N)+\frac{\sigma_p^2}{2}.
\end{align}
Thus, the entropic risk has small regularization about BER.

\section{Discussion of sub-exponential and sub-Gamma assumption}\label{app_sub_exp}
In the main paper, we have shown Theorem~\ref{thm_freq_excess_risk} and Collorary~\ref{entropy} when the sub-Gaussian property holds. Here we present the results for sub-exponential and sub-Gamma property.
Conditioned on $\theta$, $x$, and $\bold{z}^N$, by using Lemma~\ref{translation_lemma} we have
\begin{align}
    \mathbb{E}_{p(Y|x,\theta^*)}L(Y,x,\theta)-\mathbb{E}_{p(Y|x,\theta)}L(Y,x,\theta)\leq h^{*-1}(\mathrm{KL}(p(Y|x,\theta^*)|p(Y|x,\theta))).
\end{align}
Then by proceeding the calculation in the same way as the proof of Theorem~\ref{thm_freq_excess_risk}, we have
\begin{align}
        &\mathrm{PER}^{\mathrm{log}}(Y|X,\bold{Z}^N)+\mathrm{BER}^{\mathrm{log}}(Y|X,\bold{Z}^N)\nonumber \\
        &\leq \mathbb{E}_{\nu(\bold{Z}^N)q(\theta|\bold{Z}^N)\nu(X=x)}h^{*-1}(\mathrm{KL}(p(Y|x,\theta^*)|p(Y|x,\theta))),
\end{align}
and since $h^{*-1}$ is concave, we have
\begin{align}
        &\mathrm{PER}^{\mathrm{log}}(Y|X,\bold{Z}^N)+\mathrm{BER}^{\mathrm{log}}(Y|X,\bold{Z}^N)\nonumber \\
        &\leq h^{*-1}(\mathbb{E}_{\nu(\bold{Z}^N)q(\theta|\bold{Z}^N)\nu(X=x)}\mathrm{KL}(p(Y|x,\theta^*)|p(Y|x,\theta))),
\end{align}
We need to derive $h^{*-1}$. For example in Lemma~\ref{translation_lemma}, if $h(\lambda)=\frac{\sigma^2\lambda^2}{2}$ for $0\leq \lambda \leq 1/b$, then we have
\begin{align}
    h^{*-1}(y)=\begin{cases}
    \sqrt{2\sigma^2 y}\quad if\ y\leq \frac{\sigma^2}{2b}\\
    by+\frac{\sigma^2}{2b}\quad otherwise.
    \end{cases}
\end{align}
This is called as sub-exponential property. Thus, if for some constant $b$ and $\sigma^2$, the following relation holds,
\begin{align}
    \mathbb{E}_{\nu(\bold{Z}^N)q(\theta|\bold{Z}^N)\nu(X=x)}\mathrm{KL}(p(Y|x,\theta^*)|p(Y|x,\theta))=\mathrm{ER}^{\log} (Y|X,\bold{Z}^N,\theta^*)\leq \frac{\sigma^2}{2b},
\end{align}
then we have
\begin{align}
        &\mathrm{PER}^{\mathrm{log}}(Y|X,\bold{Z}^N)+\mathrm{BER}^{\mathrm{log}}(Y|X,\bold{Z}^N)\leq \sqrt{2\sigma^2\mathrm{ER}^{\log} (Y|X,\bold{Z}^N,\theta^*)},
\end{align}
and otherwise we have
\begin{align}
        &\mathrm{PER}^{\mathrm{log}}(Y|X,\bold{Z}^N)+\mathrm{BER}^{\mathrm{log}}(Y|X,\bold{Z}^N)\leq b\mathrm{ER}^{\log} (Y|X,\bold{Z}^N,\theta^*)+\frac{\sigma^2}{2b}.
\end{align}

Next, in Lemma~\ref{translation_lemma}, if $h(\lambda)=\frac{\sigma^2\lambda^2}{2(1-c|\lambda|)}$ for $0\leq \lambda \leq 1/c$ and $c>0$, then we have
\begin{align}
    h^{*-1}(y)=\sqrt{2\sigma^2 y}+cy.
\end{align}
This is called as sub-gamma property. If $\sigma^2$ and $c$ are upper bounded by positive constants $\sigma^2<\sigma_0^2<\infty$ and $c<c_0<\infty$, then we have
\begin{align}
        &\mathrm{PER}^{\mathrm{log}}(Y|X,\bold{Z}^N)+\mathrm{BER}^{\mathrm{log}}(Y|X,\bold{Z}^N)\leq \sqrt{2\sigma_0^2\mathrm{ER}^{\log} (Y|X,\bold{Z}^N,\theta^*)}+c_0\mathrm{ER}^{\log} (Y|X,\bold{Z}^N,\theta^*).
\end{align}

\section{Detailed description of the proposed method}\label{app_propose}
First, we show the PAC-Bayesian bound for our proposed method. Following the high-probability bound of Theorem~\ref{THM:PAC}, given a distribution $\nu(Z)$, for any prior distribution $p(\theta)$ over $\Theta$ independent of $\bold{z}^N$ and for any $\xi\in(0,1)$ and $c>0$, with probability at least $1-\xi$ over the choice of training data $\bold{z}^N$, for all probability distributions $q(\theta|\bold{z}^N)$ over $\Theta$, we have
%\vspace{-2mm}
\begin{align}\small\label{bound_ber}
    &\mathbb{E}_{\nu(Z)}\left[\frac{|Y-\mathbb{E}_{q(\theta|\bold{z}^N)}f_\theta(X)|^2}{2v^2}\!+\lambda\frac{\mathrm{Var}_{\theta|\bold{z}^N}f_\theta(X)}{2v^2}\right]+\frac{\ln2\pi v^2}{2}\nonumber \\
    &\leq \frac{1}{N}\sum_{i=1}^N\left[\frac{\mathbb{E}_{q(\theta|\bold{z}^N)}|y_i\!-\!\mathbb{E}_{q(\theta|\bold{z}^N)}f_\theta(x_i)|^2}{2v^2}\!+\!\lambda\frac{\mathrm{Var}_{\theta|\bold{z}^N}f_\theta(x_i)}{2v^2}\right]\!+\!\frac{\ln2\pi v^2}{2}\nonumber \\
    &\quad\quad\quad\quad\!+\!\frac{\mathrm{KL}(q|p)\!+\!\frac{1}{2}\ln\frac{1}{\xi}\!+\frac{1}{2}\Omega_{p,\nu}(c\!,\!N)}{cN},
\end{align}
where 
\begin{align}
    &\Omega_{p,\nu}(c,N):=\ln\mathbb{E}_{p(\theta)p(\theta')}\mathbb{E}_{\nu(\bold{Z})^N}\mathrm{exp}[cN(\mathbb{E}_{\nu(Z)}L(Z,\theta,\theta')-\frac{1}{N}\sum_{n=1}^NL(Z_n,\theta,\theta'))],\\
    &L(z,\theta,\theta'):=\mathbb{E}_{q(\theta|\bold{z}^N)}\frac{|y-f_\theta(x)|^2}{2v^2}\!+(\lambda-1)\frac{\mathbb{E}_{q(\theta|\bold{z}^N)}f_\theta^2(x)-\mathbb{E}_{q(\theta|\bold{z}^N)}\mathbb{E}_{q(\theta'|\bold{z}^N)}f_\theta(x)f_{\theta'}(x)}{2v^2}.
\end{align}
\begin{proof}
The proof is similar to \cite{masegosa2020second}.
First note that
\begin{align}
    &\frac{|y-\mathbb{E}_{q(\theta|\bold{z}^N)}f_\theta(x)|^2}{2v^2}\!+\lambda\frac{\mathrm{Var}_{\theta|\bold{z}^N}f_\theta(x)}{2v^2}\nonumber \\
    &=\mathbb{E}_{q(\theta|\bold{z}^N)}\frac{|y-f_\theta(x)|^2}{2v^2}\!+(\lambda-1)\frac{\mathrm{Var}_{\theta|\bold{z}^N}f_\theta(x)}{2v^2}
    \nonumber \\
    &=\mathbb{E}_{q(\theta|\bold{z}^N)}\frac{|y-f_\theta(x)|^2}{2v^2}\!+(\lambda-1)\frac{\mathbb{E}_{q(\theta|\bold{z}^N)}f_\theta^2(x)-\mathbb{E}_{q(\theta|\bold{z}^N)}\mathbb{E}_{q(\theta'|\bold{z}^N)}f_\theta(x)f_{\theta'}(x)}{2v^2}.
\end{align}
Based on this, we define the tandem loss as
\begin{align}
    L(z,\theta,\theta'):=\mathbb{E}_{q(\theta|\bold{z}^N)}\frac{|y-f_\theta(x)|^2}{2v^2}\!+(\lambda-1)\frac{\mathbb{E}_{q(\theta|\bold{z}^N)}f_\theta^2(x)-\mathbb{E}_{q(\theta|\bold{z}^N)}\mathbb{E}_{q(\theta'|\bold{z}^N)}f_\theta(x)f_{\theta'}(x)}{2v^2}.
\end{align}
Then by considering the prior $p(\theta,\theta')=p(\theta)p(\theta')$, using Theorem~\ref{THM:PAC}, we have
\begin{align}
    \mathbb{E}_{\nu(Z)q(\theta|\bold{z}^N)q(\theta'|\bold{z}^N)}L(Z,\theta,\theta')\leq& \frac{1}{N}\sum_{n=1}^N\mathbb{E}_{q(\theta|\bold{z}^N)q(\theta'|\bold{z}^N)}L(z_n,\theta,\theta')\nonumber \\
    &+\frac{\mathrm{KL}(q(\theta|\bold{z}^N)q(\theta'|\bold{z}^N)|p(\theta)p(\theta'))+\ln{\xi}^{-1}+\Omega_{p,\nu}(c,N)}{cN},
\end{align}
where
\begin{align}
    \Omega_{p,\nu}(c,N):=\ln\mathbb{E}_{p(\theta)p(\theta')}\mathbb{E}_{\nu(\bold{Z})^N}\mathrm{exp}[cN(\mathbb{E}_{\nu(Z)}L(Z,\theta,\theta')-\frac{1}{N}\sum_{n=1}^NL(Z_n,\theta,\theta'))].
\end{align}
Since $\mathrm{KL}(q(\theta|\bold{z}^N)q(\theta'|\bold{z}^N)|p(\theta)p(\theta'))=2\mathrm{KL}(q(\theta|\bold{z}^N)|p(\theta))$, by setting c=2c', we get the result.
\end{proof}
Thus, the constant $\Omega_{p,\nu}$ depends only on the setting of the problem. We optimize the right-hand side of Eq.\eqref{bound_ber} as the objective function.

Next, we discuss the relation between rBER and standard VI. The objective function of standard VI is
\begin{align}
-\mathbb{E}_{q(\theta|\bold{z}^N)}\ln N(y|f_\theta(x),v^2)
&=\frac{|y-\mathbb{E}_{q(\theta|\bold{z}^N)}f_\theta(x)|^2+ \mathrm{Var}_{\theta|\bold{z}^N}f_\theta(x)}{2v^2}+\frac{1}{2}\ln2\pi v^2.
\end{align}
Thus, we can interpret that the log loss of the Gaussian likelihood corresponds to the prediction risk and Bayesian excess risk. Since the prediction risk corresponds to the prediction performance, the standard VI implicitly controls the prediction performance and the Bayesian excess risk. Our rBER can be regarded as
\begin{align}
    \mathrm{rBER}(\lambda)=\frac{1}{2v^2}\frac{1}{N}\sum_{n=1}^N\left(\mathrm{PR}^{\mathrm{(2)}}(y_n|x_n,\bold{z}^N)+\lambda\mathrm{BER}^{\mathrm{(2)}}(y_n|x_n,\bold{z}^N)\right)+\frac{1}{2}\ln2\pi v^2+\frac{1}{N}\mathrm{KL}(q|p).
\end{align}
Thus, BER has a flexible weight for regularizing the uncertainty. Numerically, when $\lambda=0$, this corresponds to the setting where we simply optimize $\mathrm{PR}^{\mathrm{(2)}}(y|x,\bold{z}^N)$. This means we only consider the fitting performance. We numerically found that $\lambda=0$ results in large uncertainty due to the lack of regularization. When $\lambda=1$, we found that the uncertainty is underestimated.

Finally, we remark the relation between Bayesian excess risk and $\mathrm{Var}f_\theta(x)$. Since we focus on the log loss, thus we can consider the following type of objective function.
\begin{align}\label{eq_bayesian}
    \frac{1}{N}\sum_{n=1}^N\left[\frac{\mathbb{E}_{Q}|y_n\!-\!\mathbb{E}_{q(\theta|\bold{z}^N)}f_\theta(x_n)|^2}{2v^2}\!+\!\lambda\mathrm{BER}^\mathrm{log}(y_n|x_n,\bold{z}^N)\right]\!+\!\frac{\ln2\pi v^2}{2}+\frac{1}{N}\mathrm{KL}(q|p)\!,
\end{align}
where we use the Bayesian excess risk directly, instead of $\mathrm{Var}f_\theta(x)$. Note that from Appendix~\ref{app_general},  $\mathrm{BER}^{\mathrm{log}}(Y|x,\bold{z}^N)\leq \mathrm{Var}f_\theta(x)/v^2$ holds for the Gaussian likelihood. Thus, Eq.\eqref{eq_bayesian} and our BER behaves in a similar way. From the numerical point of view, implementing $\mathrm{Var}f_\theta(x)$ is easier than Eq.\eqref{eq_bayesian} since we simply calculate the variance of the prediction.

\section{Numerical experiments}\label{app:experiment}
In this section, we describe the detailed settings of the experiments. We also present the additional experimental results.
\subsection{Particle variational inference}
Since we applied our BER to particle variational inference (PVI), we briefly introduce the PVI and existing methods. In PVI, we use the empirical distribution $\rho(\theta)=\frac{1}{M}\sum_{i=1}^M\delta_{\theta_i}(\theta)$ as the posterior distribution. Here $\delta_{\theta_i}(\theta)$ is the Dirac distribution that has a mass at $\theta_i$. We refer to these $M$ samples as particles. PVI \cite{liu2016stein,DBLP:conf/iclr/WangRZZ19} approximates the posterior through iteratively updating the empirical distribution by interacting them with each other:
\begin{align}\label{PVI_intro}
\theta_i^{\mathrm{new}}\xleftarrow{} \theta_i^{\mathrm{old}}+\eta v_i(\{\theta_{i'}^{\mathrm{old}}\}_{i'=1}^M),
\end{align}
where $v(\{\theta\})$ is the update direction. Basically, $v$ is composed of the gradient term and the repulsion term  to enhance the divesity of the posterior distribution since we are often interested in the multi-modal information of the posterior distribution. For the update direction about $v$, see f-SVGD in \cite{DBLP:conf/iclr/WangRZZ19} and  VAR in \cite{futami2021loss} for details.

We follow the approach in VAR in \cite{futami2021loss}. They proposed using the gradient of the PAC-Bayesian bound for the update direction $v$ in Eq.\eqref{PVI_intro}. For example, VAR uses
\begin{align}
    &v_i=\partial_i \mathcal{F}(\{\theta_i\}_{i=1}^N),\\
   &\mathcal{F}(\{\theta_i\}_{i=1}^N):=-\frac{1}{NM}\sum_{i=1}^M\sum_{n=1}^N\left[\ln p(y_n|x_n,\theta_i)\!+R(y_n,x_n)\right]+\frac{1}{N}\mathrm{KL}(\rho_\mathrm{E}|\pi),%\!   \left[\ln p(x_d|\theta_i)-(2h(x_d,\theta_i))^{-2}\left(\ln p(x_d|\theta_i)-\frac{1}{N}\sum_{j=1}^N\ln p(x_d|\theta_j)\right)^2\right]+\mathrm{KL}(\rho_\mathrm{E},\pi)\nonumber,
\end{align}
where $R$ is the repulsion term to enhance the diversity. See \cite{futami2021loss} for details. Following their setting, we consider using the following update direction
\begin{align}
&v_i=\partial_i \mathrm{rBER}(\lambda)\\
   &\scalebox{0.95}{$\displaystyle\mathrm{rBER}(\lambda)=\frac{1}{N}\sum_{n=1}^N \frac{|y_n\!-\!\mathbb{E}_{\rho(\theta)}f_\theta(x_n)|^2}{2v'^2}\!+\lambda\frac{\mathrm{Var}f_\theta(x_n)}{2v'^2}\!+\frac{\ln2\pi v^{'2}}{2}\!+\frac{1}{N}\mathrm{KL}(\rho(\theta)|p(\theta))$},
\end{align}
where $\rho(\theta)=\frac{1}{M}\sum_{i=1}^M\delta_{\theta_i}(\theta)$. We optimize $v'$ by gradient descent.

\subsection{Toy data experiments and Depth estimation}
For these experiments, we used the implementation in the previous work \cite{amini2020deep}. For the toy data experiments, we used the Adam optimizer with the stepsize $0.0001$ in the implementation of \cite{amini2020deep}. The number of ensembles is 5. We set other hyperparameters as the same as in \cite{amini2020deep}.

As for the depth estimation experiments, we used the same hyperparameter setting in  \cite{amini2020deep}. Here, the number of ensembles is 5.

\subsection{BNN regression for UCI dataset}
We used the same setting as the previous work \cite{DBLP:conf/iclr/WangRZZ19,futami2021loss}. We used the Adam optimizer with a learning rate of 0.004. We used a batch size of 100 and ran 500 epochs for the dataset size to be smaller than 1000. For a larger dataset, we used a batch size of 1000 and ran 3000 epochs. 

To calculate the PICP, we first calculate the $95\%$ prediction interval. We then calculate the number of the test data points included inside the prediction interval.

To calculate the MPIW, we calculated the mean of the prediction interval and normalized it by the maximum length of the test data point; $\max y_\mathrm{test}-\min y_\mathrm{test}$.

We show the additional results here. We show the result of $\mathrm{PAC}_E^2$ and the negative log-likelihood. We also show the result of the $\alpha$-divergence minimization. Following the previous work \cite{li2017dropout}, we considered the entropic loss for $\alpha$-divergence minimization. In the definition of \cite{li2017dropout}, $\alpha=0$ corresponds to the standard (exclusive) KL divergence, $\alpha=0.5$ corresponds to the Hellinger divergence, and $\alpha=1.0$ corresponds to the (inclusive) KL divergence, which is used in expectation propagation algorithm. We test on $\alpha=0.5, 1.0$ and $2.0$. The results are shown in Table~\ref{tab:bench_regm2} to \ref{tab:bench_regm5}. 

First, we found that $\alpha$-divergence minimization show similar behaviors to f-SVGD in RMSE and negative test log-likelihood. However, we found that $\alpha$-divergence minimization shows very large uncertainties since their PICP are much larger than 0.95 and their MPIW are larger than f-SVGD.

We found that $\mathrm{PAC}_E^2$ shows the similar to BER(0) measured in the negative log-likelihood, MPIW, and PICP. However, the prediction performance of $\mathrm{PAC}_E^2$ in RMSE is significantly worse than BER(0). This is because the objective function of $\mathrm{PAC}_E^2$ is the negative log-likelihood of the predictive distribution; thus, the performance in RMSE is not guaranteed. On the other hand, the objective function of BER(0) is based on the squared loss. Thus, it can show  performance in RMSE.

\begin{figure}
\centering
\begin{minipage}{1.0\textwidth}
\centering
\tblcaption{Benchmark results on test RMSE}
\label{tab:bench_regm2}
%\vspace{-0.05in}
\resizebox{1.\textwidth}{!}{
\begin{tabular}{c|cccccccc}
\toprule
\fontsize{9}{7.2}\selectfont \bf{\multirow{1}{*}{Dataset}} & 
\multicolumn{8}{|c}{\fontsize{9}{7.2}\selectfont Avg. Test RMSE}  \\
 & \fontsize{9}{7.2}\selectfont f-SVGD&
  \fontsize{9}{7.2}\selectfont $\alpha=0.5$&
   \fontsize{9}{7.2}\selectfont $\alpha=1.0$&
    \fontsize{9}{7.2}\selectfont $\alpha=2.0$&
  \fontsize{9}{7.2}\selectfont VAR&
 \fontsize{9}{7.2}\selectfont $\mathrm{PAC}_E^2$ & \fontsize{9}{7.2}\selectfont BER($0.05$) &
 \fontsize{9}{7.2}\selectfont f-SVGD  \\
\hline
\fontsize{8}{7.2}\selectfont Concrete&
\fontsize{8}{7.2}\selectfont 4.33$\pm$0.8&
\fontsize{8}{7.2}\selectfont 4.51$\pm$0.8&
\fontsize{8}{7.2}\selectfont 4.67$\pm$0.7&
\fontsize{8}{7.2}\selectfont 4.98$\pm$0.6&
\fontsize{8}{7.2}\selectfont 4.30$\pm$0.7 &
\fontsize{8}{7.2}\selectfont 5.49$\pm$0.5 &
\fontsize{8}{7.2}\selectfont 4.47$\pm$0.6 &
\fontsize{8}{7.2}\selectfont 4.48$\pm$0.7  \\
\fontsize{8}{7.2}\selectfont Boston&
\fontsize{8}{7.2}\selectfont 2.54$\pm$0.50 &
\fontsize{8}{7.2}\selectfont 2.81$\pm$0.88 &
\fontsize{8}{7.2}\selectfont 2.87$\pm$0.80 &
\fontsize{8}{7.2}\selectfont 2.98$\pm$0.90 &
\fontsize{8}{7.2}\selectfont 2.53$\pm$0.50 &
\fontsize{8}{7.2}\selectfont 4.41$\pm$0.45 &
\fontsize{8}{7.2}\selectfont 2.53$\pm$0.50 &
\fontsize{8}{7.2}\selectfont 2.53$\pm$0.51   \\
\fontsize{8}{7.2}\selectfont Wine&
\fontsize{8}{7.2}\selectfont 0.61$\pm$0.04 &
\fontsize{8}{7.2}\selectfont 0.61$\pm$0.04 &
\fontsize{8}{7.2}\selectfont 0.61$\pm$0.04 &
\fontsize{8}{7.2}\selectfont 0.61$\pm$0.03 &
\fontsize{8}{7.2}\selectfont 0.61$\pm$0.04 &
\fontsize{8}{7.2}\selectfont 1.02$\pm$0.11 &
\fontsize{8}{7.2}\selectfont 0.64$\pm$0.04 &
\fontsize{8}{7.2}\selectfont 0.63$\pm$0.02   \\
\fontsize{8}{7.2}\selectfont Power&
\fontsize{8}{7.2}\selectfont 3.78$\pm$0.14 &
\fontsize{8}{7.2}\selectfont 3.78$\pm$0.11 &
\fontsize{8}{7.2}\selectfont 3.78$\pm$0.13 &
\fontsize{8}{7.2}\selectfont 3.80$\pm$0.11 &
\fontsize{8}{7.2}\selectfont 3.75$\pm$0.13 &
\fontsize{8}{7.2}\selectfont 5.24$\pm$0.45 &
\fontsize{8}{7.2}\selectfont 3.66$\pm$0.15&
\fontsize{8}{7.2}\selectfont 3.69$\pm$0.12 \\
\fontsize{8}{7.2}\selectfont Yacht&
\fontsize{8}{7.2}\selectfont 0.64$\pm$0.28 &
\fontsize{8}{7.2}\selectfont 0.56$\pm$0.26 &
\fontsize{8}{7.2}\selectfont 0.88$\pm$0.34 &
\fontsize{8}{7.2}\selectfont 0.99$\pm$0.68 &
\fontsize{8}{7.2}\selectfont 0.60$\pm$0.28 &
\fontsize{8}{7.2}\selectfont 0.71$\pm$0.41 &
\fontsize{8}{7.2}\selectfont 0.75$\pm$0.41 &
\fontsize{8}{7.2}\selectfont 0.78$\pm$0.48 \\
\fontsize{8}{7.2}\selectfont Protein&
\fontsize{8}{7.2}\selectfont 3.98$\pm$0.54 &
\fontsize{8}{7.2}\selectfont 4.05$\pm$0.13 &
\fontsize{8}{7.2}\selectfont 3.97$\pm$0.04 &
\fontsize{8}{7.2}\selectfont 4.02$\pm$0.09 &
\fontsize{8}{7.2}\selectfont 3.92$\pm$0.05 &
\fontsize{8}{7.2}\selectfont 7.96$\pm$0.10 &
\fontsize{8}{7.2}\selectfont 3.83$\pm$0.10 &
\fontsize{8}{7.2}\selectfont 3.85$\pm$0.05 \\
\bottomrule
\end{tabular}}
\end{minipage}
\begin{minipage}{1.0\textwidth}
\centering
\tblcaption{Benchmark results on test PICP and MPIW.}
%\vspace{-0.05in}
\resizebox{1.\textwidth}{!}{
\begin{tabular}{c|ccccc}
\toprule
\fontsize{9}{7.2}\selectfont \bf{\multirow{1}{*}{Dataset}} &  \multicolumn{5}{c}{\fontsize{9}{7.2}\selectfont Avg. Test PICP and MPIW in parenthesis} \\
 & \fontsize{9}{7.2}\selectfont f-SVGD&
  \fontsize{9}{7.2}\selectfont VAR&
 \fontsize{9}{7.2}\selectfont $\mathrm{PAC}_E^2$ & \fontsize{9}{7.2}\selectfont BER($0$) &
 \fontsize{9}{7.2}\selectfont BER($0.05$) \\
\hline
\fontsize{8}{7.2}\selectfont Concrete&
\fontsize{8}{7.2}\selectfont 0.82$\pm$0.03 (0.13$\pm$0.00) &
\fontsize{8}{7.2}\selectfont 0.87$\pm$0.04 (0.16$\pm$0.01) &
\fontsize{8}{7.2}\selectfont 0.97$\pm$0.02 (0.57$\pm$0.04)&
\fontsize{8}{7.2}\selectfont 0.99$\pm$0.02 (0.50$\pm$0.04) &
\fontsize{8}{7.2}\selectfont {\bf 0.95$\pm$0.02} (0.25$\pm$0.02)  \\
\fontsize{8}{7.2}\selectfont Boston&
\fontsize{8}{7.2}\selectfont 0.63$\pm$0.07 (0.10$\pm$0.02) &
\fontsize{8}{7.2}\selectfont 0.76$\pm$0.05 (0.14$\pm$0.01) &
\fontsize{8}{7.2}\selectfont {\bf 0.94$\pm$0.04} (0.40$\pm$0.04) &
\fontsize{8}{7.2}\selectfont 0.97$\pm$0.01 (0.33$\pm$0.04) &
\fontsize{8}{7.2}\selectfont 0.92$\pm$0.04 (0.22$\pm$0.02)  \\
\fontsize{8}{7.2}\selectfont Wine&
\fontsize{8}{7.2}\selectfont 0.79$\pm$0.03 (0.32$\pm$0.05) &
\fontsize{8}{7.2}\selectfont 0.85$\pm$0.02 (0.39$\pm$0.06) &
\fontsize{8}{7.2}\selectfont 0.98$\pm$0.01 (1.06$\pm$0.01) &
\fontsize{8}{7.2}\selectfont 0.99$\pm$0.00 (1.61$\pm$0.00) &
\fontsize{8}{7.2}\selectfont {\bf 0.95$\pm$0.03} (0.32$\pm$0.15)   \\
\fontsize{8}{7.2}\selectfont Power&
\fontsize{8}{7.2}\selectfont 0.43$\pm$0.01 (0.07$\pm$0.00) &
\fontsize{8}{7.2}\selectfont 0.82$\pm$0.01 (0.15$\pm$0.00) &
\fontsize{8}{7.2}\selectfont 0.99$\pm$0.00 (0.57$\pm$0.02) &
\fontsize{8}{7.2}\selectfont 0.99$\pm$0.01 (0.81$\pm$0.01)&
\fontsize{8}{7.2}\selectfont 0.96$\pm$0.01 (0.37$\pm$0.01)    \\
\fontsize{8}{7.2}\selectfont Yacht&
\fontsize{8}{7.2}\selectfont 0.92$\pm$0.04 (0.02$\pm$0.01) &
\fontsize{8}{7.2}\selectfont 0.93$\pm$0.04 (0.04$\pm$0.01) &
\fontsize{8}{7.2}\selectfont 0.97$\pm$0.03 (0.07$\pm$0.00)&
\fontsize{8}{7.2}\selectfont {\bf 0.96$\pm$0.03} (0.10$\pm$0.01) &
\fontsize{8}{7.2}\selectfont {\bf 0.94$\pm$0.04} (0.08$\pm$0.01)\\
\fontsize{8}{7.2}\selectfont Protein&
\fontsize{8}{7.2}\selectfont 0.53$\pm$0.01 (0.24$\pm$0.01) &
\fontsize{8}{7.2}\selectfont 0.83$\pm$0.00 (0.58$\pm$0.01) &
\fontsize{8}{7.2}\selectfont 0.98 $\pm$0.00 (1.44$\pm$0.06)&
\fontsize{8}{7.2}\selectfont 1.0 $\pm$0.00 (5.04$\pm$0.01) &
\fontsize{8}{7.2}\selectfont {\bf 0.96$\pm$0.01} (0.86$\pm$0.00)\\
\bottomrule
\end{tabular}}
\end{minipage}

\begin{minipage}{0.85\textwidth}
\centering
\tblcaption{Benchmark results on test PICP and MPIW for $\alpha$-divergence minimization.}
%\vspace{-0.05in}
\resizebox{1.\textwidth}{!}{
\begin{tabular}{c|ccc}
\toprule
\fontsize{9}{7.2}\selectfont \bf{\multirow{1}{*}{Dataset}} &  \multicolumn{3}{c}{\fontsize{9}{7.2}\selectfont Avg. Test PICP and MPIW in parenthesis} \\
 &    \fontsize{9}{7.2}\selectfont $\alpha=0.5$&
   \fontsize{9}{7.2}\selectfont $\alpha=1.0$&
    \fontsize{9}{7.2}\selectfont $\alpha=2.0$\\
\hline
\fontsize{8}{7.2}\selectfont Concrete&
\fontsize{8}{7.2}\selectfont 0.97$\pm$0.01 (0.41$\pm$0.03)&
\fontsize{8}{7.2}\selectfont 0.99$\pm$0.01 (0.46$\pm$0.03) &
\fontsize{8}{7.2}\selectfont 0.99$\pm$0.01 (0.52$\pm$0.04)  \\
\fontsize{8}{7.2}\selectfont Boston&
\fontsize{8}{7.2}\selectfont 0.98$\pm$0.01 (0.51$\pm$0.08) &
\fontsize{8}{7.2}\selectfont 0.99$\pm$0.01 (0.54$\pm$0.08) &
\fontsize{8}{7.2}\selectfont 0.99$\pm$0.01 (0.57$\pm$0.09)  \\
\fontsize{8}{7.2}\selectfont Wine&
\fontsize{8}{7.2}\selectfont 0.94$\pm$0.02 (0.54$\pm$0.08) &
\fontsize{8}{7.2}\selectfont 0.95$\pm$0.02 (0.58$\pm$0.09) &
\fontsize{8}{7.2}\selectfont 0.96$\pm$0.01 (0.63$\pm$0.09)   \\
\fontsize{8}{7.2}\selectfont Power&
\fontsize{8}{7.2}\selectfont 0.98$\pm$0.00 (0.45$\pm$0.01) &
\fontsize{8}{7.2}\selectfont 0.99$\pm$0.00 (0.47$\pm$0.01) &
\fontsize{8}{7.2}\selectfont 1.00$\pm$0.00 (0.50$\pm$0.01)
\\
\fontsize{8}{7.2}\selectfont Yacht&
\fontsize{8}{7.2}\selectfont 0.99$\pm$0.01 (0.16$\pm$0.01) &
\fontsize{8}{7.2}\selectfont 1.00$\pm$0.00 (0.30$\pm$0.04) &
\fontsize{8}{7.2}\selectfont 1.00$\pm$0.00 (0.60$\pm$0.06)\\
\fontsize{8}{7.2}\selectfont Protein&
\fontsize{8}{7.2}\selectfont 0.99 $\pm$0.00 (1.13$\pm$0.02)&
\fontsize{8}{7.2}\selectfont 0.99 $\pm$0.00 (1.12$\pm$0.03) &
\fontsize{8}{7.2}\selectfont 0.99$\pm$0.01 (1.15$\pm$0.04)\\
\bottomrule
\end{tabular}}
\end{minipage}

\begin{minipage}{1.0\textwidth}
\centering
\tblcaption{Benchmark results on negative test log-likelihood}
\label{tab:bench_regm5}
%\vspace{-0.05in}
\resizebox{1.\textwidth}{!}{
\begin{tabular}{c|cccccccc}
\toprule
\fontsize{9}{7.2}\selectfont \bf{\multirow{1}{*}{Dataset}} & 
\multicolumn{8}{|c}{\fontsize{9}{7.2}\selectfont Avg. negative test log likelihood}\\
 & \fontsize{9}{7.2}\selectfont f-SVGD&
   \fontsize{9}{7.2}\selectfont $\alpha=0.5$&
   \fontsize{9}{7.2}\selectfont $\alpha=1.0$&
    \fontsize{9}{7.2}\selectfont $\alpha=2.0$&
  \fontsize{9}{7.2}\selectfont VAR&
 \fontsize{9}{7.2}\selectfont $\mathrm{PAC}_E^2$ & \fontsize{9}{7.2}\selectfont BER($0$) &
 \fontsize{9}{7.2}\selectfont BER($0.05$)  \\
\hline
\fontsize{8}{7.2}\selectfont Concrete&
\fontsize{8}{7.2}\selectfont -2.85$\pm$0.15&
\fontsize{8}{7.2}\selectfont -2.79$\pm$0.17&
\fontsize{8}{7.2}\selectfont -2.82$\pm$0.08&
\fontsize{8}{7.2}\selectfont -2.95$\pm$0.05&
\fontsize{8}{7.2}\selectfont -2.81$\pm$0.06 &
\fontsize{8}{7.2}\selectfont -3.16$\pm$0.03 &
\fontsize{8}{7.2}\selectfont -3.50$\pm$0.03 &
\fontsize{8}{7.2}\selectfont -3.06$\pm$0.05  \\
\fontsize{8}{7.2}\selectfont Boston&
\fontsize{8}{7.2}\selectfont -2.34$\pm$0.31 &
\fontsize{8}{7.2}\selectfont -2.39$\pm$0.16 &
\fontsize{8}{7.2}\selectfont -2.43$\pm$0.12 &
\fontsize{8}{7.2}\selectfont -2.47$\pm$0.09 &
\fontsize{8}{7.2}\selectfont -2.34$\pm$0.24 &
\fontsize{8}{7.2}\selectfont -2.61$\pm$0.08 &
\fontsize{8}{7.2}\selectfont -2.55$\pm$0.05 &
\fontsize{8}{7.2}\selectfont -2.38$\pm$0.16  \\
\fontsize{8}{7.2}\selectfont Wine&
\fontsize{8}{7.2}\selectfont -0.89$\pm$0.08 &
\fontsize{8}{7.2}\selectfont -0.87$\pm$0.07 &
\fontsize{8}{7.2}\selectfont -0.88$\pm$0.05 &
\fontsize{8}{7.2}\selectfont -0.93$\pm$0.03 &
\fontsize{8}{7.2}\selectfont -0.90$\pm$0.06 &
\fontsize{8}{7.2}\selectfont -1.26$\pm$0.02 &
\fontsize{8}{7.2}\selectfont -1.84$\pm$0.03 &
\fontsize{8}{7.2}\selectfont -1.08$\pm$0.03   \\
\fontsize{8}{7.2}\selectfont Power&
\fontsize{8}{7.2}\selectfont -2.75$\pm$0.03 &
\fontsize{8}{7.2}\selectfont -2.73$\pm$0.02 &
\fontsize{8}{7.2}\selectfont -2.74$\pm$0.01 &
\fontsize{8}{7.2}\selectfont -2.80$\pm$0.03 &
\fontsize{8}{7.2}\selectfont -2.80$\pm$0.03 &
\fontsize{8}{7.2}\selectfont -3.17$\pm$0.03 &
\fontsize{8}{7.2}\selectfont -3.95$\pm$0.04&
\fontsize{8}{7.2}\selectfont -2.86$\pm$0.01  \\
\fontsize{8}{7.2}\selectfont Yacht&
\fontsize{8}{7.2}\selectfont -0.81$\pm$0.67 &
\fontsize{8}{7.2}\selectfont -0.92$\pm$0.27 &
\fontsize{8}{7.2}\selectfont -0.77$\pm$0.17 &
\fontsize{8}{7.2}\selectfont -1.58$\pm$0.17 &
\fontsize{8}{7.2}\selectfont -0.87$\pm$0.38 &
\fontsize{8}{7.2}\selectfont -0.81$\pm$0.11 &
\fontsize{8}{7.2}\selectfont -1.62$\pm$0.17 &
\fontsize{8}{7.2}\selectfont -1.46$\pm$0.30\\
\fontsize{8}{7.2}\selectfont Protein&
\fontsize{8}{7.2}\selectfont -2.70$\pm$0.00 &
\fontsize{8}{7.2}\selectfont -2.98$\pm$0.02 &
\fontsize{8}{7.2}\selectfont -2.90$\pm$0.01 &
\fontsize{8}{7.2}\selectfont -2.95$\pm$0.02 &
\fontsize{8}{7.2}\selectfont -2.84$\pm$0.00 &
\fontsize{8}{7.2}\selectfont -3.30$\pm$0.00 &
\fontsize{8}{7.2}\selectfont -4.45$\pm$0.02 &
\fontsize{8}{7.2}\selectfont -2.94$\pm$0.00 \\
\bottomrule
\end{tabular}}
\end{minipage}
\end{figure}

Next, we evaluated how the RMSE, PICP, MPIW, and negative log-likelihood behave by changing $\lambda$ in BER. We show the results in Fig.\ref{fig_uci_con} and \ref{fig_uci_wine}. We confirmed that the prediction performance measured in RMSE does not depend on the choice of $\lambda$. On the other hand, other measures depend on $\lambda$ significantly. The ideal PICP is 0.95. Thus, we should choose $\lambda$ by cross-validation. We also found that even $\lambda=1.$ correspond to the standard VI. It underestimates the PICP.
\begin{figure}[tb!]
 \centering
 \includegraphics[width=0.9\linewidth]{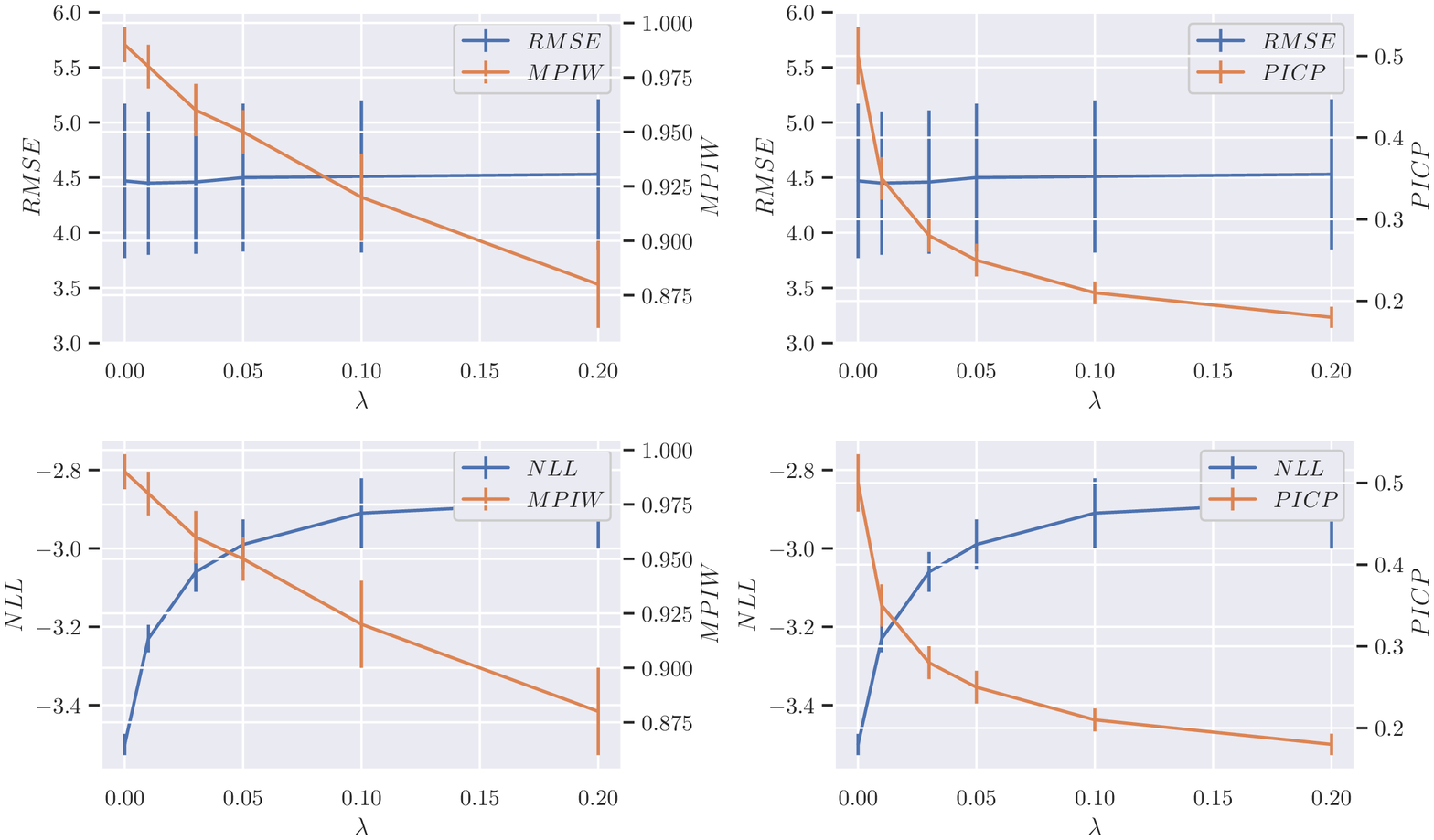}
 \caption{Concrete data in UCI dataset.}
\label{fig_uci_con}
\end{figure}

\begin{figure}[tb!]
 \centering
 \includegraphics[width=0.9\linewidth]{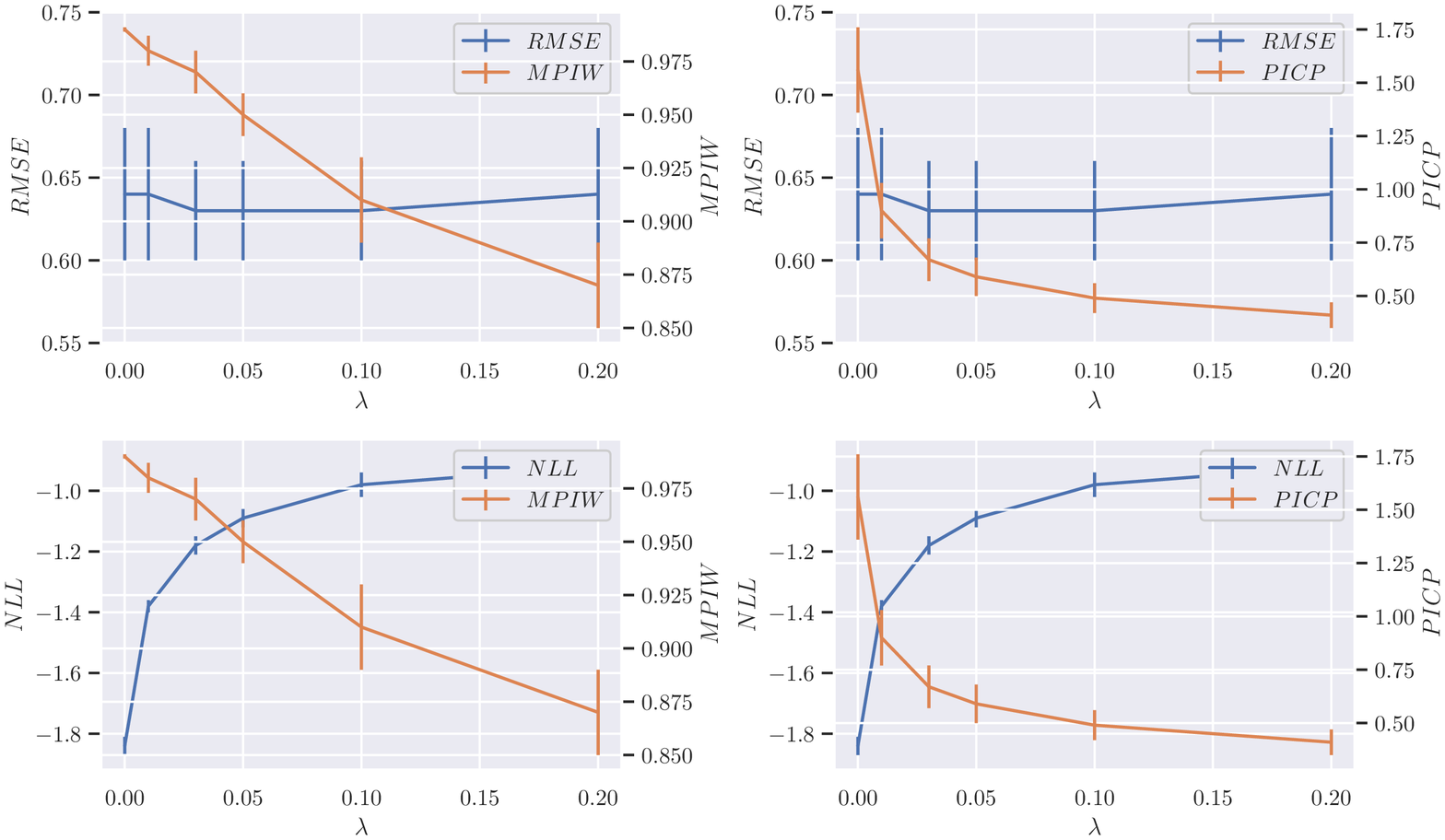}
 \caption{Wine data in UCI dataset.}
\label{fig_uci_wine}
\end{figure}

\subsection{Contextual bandit tasks}
Here we explain the setting of the task. Our experiments follow the setting in \cite{DBLP:conf/iclr/WangRZZ19,futami2021loss}. Denote the context set as $\mathcal{S}$. For each time step $t$, an agent recieves context $s_t\in \mathcal{S}$ from the environment. The agent choose action $a_t\in\{1,\ldots,A\}$ based on the context $s_t$ and get a reward $r_{a_t,t}$. We would like to  minimize the pseudo regret
\begin{align}
    R_T=\max_{\phi:\mathcal{S}\to\{1,\ldots,A\}}\mathbb{E}\left[\sum_{t=1}^Tr_{g(s_t),t}-\sum_{t=1}^Tr_{a_t,t}\right],
\end{align}
where $\phi$ maps the context to the action. We consider a prior $\mu_{s,i,0}$ over a reward of context $s$ and action $i$. Then, we update the prior to a posterior distribution using the observed reward.  Following the previous work, we use Thompson sampling to select the action as
\begin{align}
    r_t\in\argmax_{i=\{1,\ldots,K\}}\hat{r}_{i,t},\quad \hat{r}_{i,t}\sim \mu_{s,i,t}.
\end{align}

We consider a neural network regression model following the previous work  \cite{DBLP:conf/iclr/WangRZZ19,futami2021loss}, where the input is the context, and the output is the $K$-dimensional action. We place a prior distribution over the parameters of the network. We approximate the posterior distribution of the neural network parameters by PVI. All the hyperparameters are precisely the same as in the previous work \cite{DBLP:conf/iclr/WangRZZ19}.

\end{document}